\def\eqref#1{equation~\ref{#1}}
\def\1{\bm{1}}
\DeclareMathAlphabet{\mathsfit}{\encodingdefault}{\sfdefault}{m}{sl}
\SetMathAlphabet{\mathsfit}{bold}{\encodingdefault}{\sfdefault}{bx}{n}
\newcommand{\KL}{D_{\mathrm{KL}}}
\newcommand{\Var}{\mathrm{Var}}
\definecolor{DarkBlue}{RGB}{22,54,93}
\theoremstyle{plain}
\newtheorem{theorem}{Theorem}
\newtheorem{lemma}{Lemma}
\newtheorem{corollary}{Corollary}
\theoremstyle{definition}
\newtheorem{definition}{Definition}
\newtheorem{assumption}{Assumption}
\theoremstyle{remark}
\newtheorem{remark}{Remark}
\newcommand{\Pb}{\mathbb{P}}
\newcommand{\Rb}{\mathbb{R}}
\newcommand{\hP}{\hat{P}}
\newcommand{\hV}{\bar{V}}
\newcommand{\hphi}{\hat{\phi}}
\newcommand{\hmu}{\hat{\mu}}
\newcommand{\eo}{\epsilon_0}
\newcommand{\Eb}{\mathbb{E}}
\newcommand{\sphi}{\phi^*}
\newcommand{\Mc}{\mathcal{M}}
\newcommand{\Uc}{\mathcal{U}}
\newcommand{\Sc}{\mathcal{S}}
\newcommand{\Ac}{\mathcal{A}}
\newcommand{\Dc}{\mathcal{D}}
\newcommand{\Cc}{\mathcal{C}}
\newcommand{\Nc}{\mathcal{N}}
\newcommand{\Xc}{\mathcal{X}}
\newcommand{\Hc}{\mathcal{H}}
\newcommand{\hb}{\hat{b}}
\newcommand{\At}{\tilde{A}}
\newcommand{\hQ}{\bar{Q}}
\newcommand{\Ec}{\mathcal{E}}
\newcommand{\MLE}{\text{MLE}}
\newcommand{\jing}[1]{{\color{magenta}(JY: #1)}}
\title{Safe Exploration Incurs Nearly No Additional Sample Complexity for Reward-free RL}
\author{Ruiquan Huang \\
The Pennsylvania State University\\
\texttt{rzh5514@psu.edu} \\
\And
Jing Yang\\
The Pennsylvania State University\\
\texttt{yangjing@psu.edu} \\
\And
Yingbin Liang \\
The Ohio State University \\
\texttt{liang.899@osu.edu} \\
}
\begin{document}

\maketitle

\fancyhead[L]{Published as a conference paper at ICLR 2023}

\begin{abstract}
Reward-free reinforcement learning (RF-RL), a recently introduced RL paradigm, relies on random action-taking to explore the unknown environment without any reward feedback information. 
While the primary goal of the exploration phase in RF-RL is to reduce the uncertainty in the estimated model with minimum number of trajectories, in practice, the agent often needs to abide by certain safety constraint at the same time. It remains unclear how such safe exploration requirement would affect the corresponding sample complexity in order to achieve the desired optimality of the obtained policy in planning. In this work, we make a first attempt to answer this question. In particular, we consider the scenario where a safe baseline policy is known beforehand, and propose a unified Safe reWard-frEe ExploraTion (SWEET) framework. We then particularize the SWEET framework to the tabular and the low-rank MDP settings, and develop algorithms coined Tabular-SWEET and Low-rank-SWEET, respectively. Both algorithms leverage the concavity and continuity of the newly introduced truncated value functions, and are guaranteed to achieve zero constraint violation during exploration with high probability. Furthermore, both algorithms can provably find a near-optimal policy subject to any constraint in the planning phase. Remarkably, the sample complexities under both algorithms match or even outperform the state of the art in their constraint-free counterparts up to some constant factors, proving that safety constraint hardly increases the sample complexity for RF-RL.
\end{abstract}

\vspace{-0.1in}
\section{Introduction}
\vspace{-0.1in}
Reward-free reinforcement learning (RF-RL) is an RL paradigm under which a learning agent first explores an unknown environment without any reward signal in the exploration phase, and then utilizes the gathered information to obtain a near-optimal policy for {\it any} reward function during the planning phase. Since formally introduced in \citet{jin2020provably}, RF-RL has attracted increased attention in the research community~\citep{kaufmann2021adaptive,zhang2020nearly,zhang2021reward,wang2020reward,modi2021model}. It is particularly attractive for applications where many reward functions may be of interest, such as multi-objective RL~\citep{miryoosefi2021simple}, or the reward function is not specified by the environment but handcrafted in order to incentivize some desired behavior of the RL agent \citep{jin2020provably}. 

The ability of RF-RL to identify a near-optimal policy in response to an arbitrary reward function relies on the fact that the agent is allowed to explore any action during exploration. However, in practice, unrestricted exploration is often unrealistic or even harmful. In order to build safe, responsible and reliable artificial intelligence (AI), the RL agent often has to abide by certain application-dependent constraints, even during the exploration phase. Two motivating applications are provided as follows.

\vspace{-0.03in}
\begin{itemize}[leftmargin=*]\itemsep=0pt
\setlength{\leftmargin}{-0.5in}
    \item \textbf{Autonomous driving.} In order to learn a near-optimal driving strategy, an RL agent needs to try various actions at different states through exploration. While RF-RL is an appealing approach as the reward function is difficult to specify, it is of critical importance for the RL agent to take safe actions (even during exploration) in order to avoid catastrophic consequences. 
    \item \textbf{Cellular network optimization.} The operation of cellular network needs to take a diverse corpus of key performance indicators into consideration, which makes RF-RL a plausible solution. Meanwhile, the exploration also needs to meet certain system requirements, such as power consumption. 
\end{itemize}
\vspace{-0.05in}
While meeting these constraints throughout the learning process is a pressing need for the broad adoption of RL in real-world applications, it is a mission impossible to accomplish if no other information is provided, as the learner has little knowledge of the underlying MDP at the beginning of the learning process and will inevitably take undesirable actions (in hindsight) and violate the constraints. On the other hand, in various engineering applications, there often exist either rule-based (e.g., autonomous driving) or human expert-guided (e.g., cellular network optimization) solutions to ensure safe operation of the system. One natural question is, {\it is it possible to leverage such existing safe solutions to ensure safety throughout the learning process? If so, how would the safe exploration requirement affect the corresponding RF-RL performances in terms of the sample complexity of exploration and the optimality and safety guarantees of the obtained policy in planning?}

To answer these questions, in this work, we introduce a new {\bf safe RF-RL} framework. 
In the proposed safe RF-RL framework, the agent does not receive any reward information in the exploration phase, but is aware of a cost function associated with actions at a given state. We require that the cumulative cost in {\it each episode} is below a given threshold during exploration, with the aid of a pre-existing safe baseline policy $\pi^0$. The ultimate learning goal of safe RF-RL is to find a safe and near-optimal policy for any given reward and cost functions after exploration.

\textbf{Main contributions.} We summarize our main contributions as follows.
\vspace{-0.03in}
\begin{itemize}
[leftmargin=*]\itemsep=0pt
\setlength{\leftmargin}{-0.5in}
    \item First, we introduce a novel safe RF-RL framework that imposes safety constraints during both exploration and planning of RF-RL, which may have implications in various applications.  
    \item Second, we propose a unified safe exploration strategy coined SWEET to leverage the prior knowledge of a safe baseline policy $\pi^0$. SWEET admits general model estimation and safe exploration policy construction modules, thus can accommodate various MDP structures and different algorithmic designs. Under the assumption that the approximation error function is concave and continuous in the policy space, SWEET is guaranteed to achieve zero constraint violation during exploration, and output a near-optimal safe policy for {\it any} given reward function and safety constraint under some assumptions in planning, both with high probability. 
    
    \item Third, in order to facilitate the specific design of the approximation error function to ensure its concavity, we introduce a novel definition of truncated value functions. It relies on a new clipping method to avoid underestimation of the approximation error captured by the corresponding value function, and ensures the concavity of the resulted value function.
    \item Finally, we particularize the SWEET algorithm for both tabular and low-rank MDPs, and propose Tabular-SWEET and Low-rank-SWEET, respectively. 
    Both algorithms inherit the optimality guarantee during planning, and the safety guarantees in both exploration and planning.  Remarkably, the sample complexities under both algorithms match or even outperform the state of the art of
    their constraint-free counterparts up to some constant factors, proving that safety constraint incurs nearly no additional sample complexity for RF-RL.
\end{itemize}
\vspace{-0.03in}



\vspace{-0.1in}
\section{Preliminaries and problem formulation}
\vspace{-0.1in}
\subsection{Episodic Markov Decision Processes}\label{subsec:eps mdp}
\vspace{-0.05in}
We consider episodic Markov decision processes (MDPs) in the form of $\Mc = (\Sc,\Ac, P,H, s_1)$, where $\Sc$ is the state space and $\Ac$ is the finite action space, $H$ is the number of time steps in each episode, $P=\{P_h\}_{h=1}^H$ is a collection of transition kernels, and $P_h(s_{h+1}|s_h,a_h)$ denotes the transition probability from the state-action pair $(s_h,a_h)$ at step $h$ to state $s_{h+1}$ in the next step. Without loss of generality, we assume that in each episode of the MDP, the initial state is fixed at $s_1$. In addition, an MDP may be equipped with certain specified utility functions $u=\{u_h\}_{h=1}^H$, where we assume $u_h:  \Sc \times \Ac \rightarrow [0,1]$ is a deterministic function for ease of exposition. 

A Markov policy $\pi$ is a set of mappings $\{\pi_h : \Sc \rightarrow \Delta(\Ac)\}_{h=1}^H$, where $\Delta(\Ac)$ is the set of all possible distributions over the action space $\Ac$. In particular, $\pi_h(a|s)$ denotes the probability of selecting action $a$ in state $s$ at time step $h$. We denote the set of all Markov policies by $\mathcal{X}$. 
For an agent adopting policy $\pi$ in an MDP $\Mc$, at each step $h \in [H]$ where $[H]:=\{1,\dots,H\}$, she observes state $s_h 
\in \Sc$, and takes an action $a_h \in \Ac$ according to $\pi$, after which the environment transits to the next state $s_{h+1}$ with probability $P_h(s_{h+1}|s_h,a_h)$. The episode ends after $H$ steps, and we use a virtual state $s_{H+1}$ to denote the terminal state at step $H+1$. 
We use $\mathop{\Eb}_{P,\pi}$ to denote the expectation of the distribution induced by the transition kernel $P$ and policy $\pi$.

\if{0}
\jing{cut?}
\textbf{Notations:} We use $s_h \sim (P, \pi)$ to denote a state sampled by executing the policy $\pi$ under the transition kernel $P$ for $h-1$ steps. If the previous state-action pair $(s_{h-1},a_{h-1})$ is given, we denote $s_h\sim P$ that $s_h$ follows the distribution $P_h(\cdot|s_{h-1},a_{h-1})$. We use the notation $\mathop{\Eb}_{(s_h, a_h) \sim (P, \pi)}\left[\cdot\right]$ to denote the expectation over states $s_h \sim (P,\pi)$ and actions $a_h \sim \pi$. For simplicity, we use $\Eb^*_{\pi} [\cdot]$ in shorthand for $\mathop{\Eb}_{(s_h, a_h) \sim (P^*, \pi)}\left[\cdot\right]$.
\fi

Let $Q^\pi_{h,P,u}(s_h,a_h)$ and $V^\pi_{h,P,u}(s_h)$ be the corresponding action-value function and value function at step $h$, respectively, for a given collection of utility functions $u$. Then, 
$V^\pi_{h,P,u}(s_h): =  \mathop{\Eb}_{ P,\pi}\big[\sum_{h^\prime=h}^{H} u_{h'}(s_{h'},a_{h'})\big|s_h\big]$, and $Q^\pi_{h,P,u}(s_h,a_h) := \mathop{\Eb}_{ P,\pi}\big[\sum_{h^\prime=h}^{H} u_{h'}(s_{h'},a_{h'})\big|s_h,a_h\big]$.
We also use the shorthand $V^\pi_{P,u}$ to denote $V^\pi_{1,P,u}(s_1)$ due to the fixed initial state, and $P_h f(s_h,a_h) = \Eb_{s_{h+1}\sim P_h(\cdot|s_h,a_h)}\left[f(s_{h+1})\right]$ for any function $f:\Sc\rightarrow\Rb$. 
We further assume that the utility functions are normalized such that for any trajectory generated under a policy, the cumulative value over one episode is bounded by 1, i.e., $\sum_{h=1}^{H}u_h(s_h,a_h)\leq 1$. 

\if{0}
Reward $r,r^*$ and cost $c,c^*$ satisfy the normalization assumption defined as follows.  
\begin{assumption}[Normalization]\label{assum:norm}
A utility function $u$ is normalized if 
\[\sum_{h=1}^{H}u(s_h,a_h)\leq 1, \forall \text{ trajectory } \{s_1,a_1,\cdots,s_H,a_H\},\] 
\end{assumption}
\fi

\vspace{-0.05in}
\subsection{Safe Reward-Free Reinforcement Learning} 
\vspace{-0.05in}
The safe policy considered in this work is formally defined as follows.
\begin{definition}\label{def:safety}
Given an MDP $\Mc^*=(\Sc,\Ac,P^*,H,s_1)$, a set of cost functions $c=\{c_h\}_{h=1}^H$ and $\tau\in(0,1]$, a policy $\pi$ is $(c,\tau)$-safe if $V_{P^*,c}^{\pi}\leq \tau$. 
\end{definition}

Based on this definition of $(c,\tau)$-safe policies, we 
now elaborate the proposed safe RF-RL framework, which contains two phases.
In the first phase of ``\textbf{exploration}'', the agent is required to efficiently explore the unknown environment without reward signals, and simultaneously not to violate a predefined safety constraint $(c,\tau)$ {\it in each episode} during this exploration phase. 
Let $\pi^{(n)}$ be the policy implemented in the $n$-th episode of the exploration. Then the agent's exploration should satisfy the safety constraint in every episode with high probability, namely,
\begin{align}\textstyle
    \Pb\left[V_{P^*,c}^{\pi^{(n)}}\leq \tau,\forall n\in[N]\right] \geq 1-\delta,\label{eqn: Safety}
\end{align}
where $\delta\in(0,1)$ and $N$ is the total number of episodes in the exploration phase.
Note that the agent is only given a set of cost functions $c$ but not the reward $r$ in this phase. This is reasonable for many RL applications, where the purpose of exploration is not to maximize certain reward but to learn the environment, while the safety constraint need to be satisfied throughout the learning process. 

In the second phase of ``\textbf{planning}'', the agent is given an {\it arbitrary} set of reward functions $r^*$ and a {\it new} set of safety constraint $(c^*,\tau^*)$. Without further exploration, she is required to learn an $\epsilon$-optimal policy $\bar{\pi}$ with respect to the given reward $r^*$, and subject to the safety constraint $(c^*,\tau^*)$. 
\begin{definition}
Given an MDP $\Mc^*=(\Sc,\Ac,P^*,H,s_1)$, reward functions $r^*$, cost functions $c^*$ and $\tau^*\in (0,1]$, $\bar{\pi}$ is an $\epsilon$-optimal ($c^*,\tau^*$)-safe policy if
\begin{align}\textstyle
    V^{\pi^*}_{P^*,r^*} - V^{\bar{\pi}}_{P^*, r^*}\leq \epsilon, \text { and }   V^{\bar{\pi}}_{P^*, c^*}\leq \tau^*, \label{eqn:objective}
\end{align}
where $\epsilon\in(0,1)$, and $\pi^*$ is the policy satisfying
$\pi^* = \arg\max_{\pi} V^{\pi}_{P^*, r^*}\text{ s.t. } V^{\pi}_{P^*, c^*}\leq \tau^*$.
\end{definition}

 
The design goal of safe RF-RL algorithms is {\it three-fold}: 1) to collect as few sample trajectories as possible, 2) to satisfy the safety constraint $(c,\tau)$ in the exploration phase, and 3) to obtain an $\epsilon$-optimal ($c^*,\tau^*$)-safe policy for any given reward $r^*$ and constraint $(c^*,\tau^*)$ in the planning phase.   

We note that it is impossible to ensure zero constraint violation with high probability during exploration if an agent starts with no information about the system. Therefore, we make the assumption that a safe baseline policy is available to the learning agent during exploration. Besides, we also assume the constrained MDP always has enough feasible solutions, either during exploration or planning.

\begin{assumption}[Feasibility]\label{assm: baseline}
The agent has knowledge of a baseline policy $\pi^0$ and $\kappa\in(0,\tau)$ such that $V^{\pi^0}_{P^*,c}\leq \tau-\kappa$. Besides, for any given constraint $({c},{\tau})$ in exploration or planning phases, the safety margin, defined as $\Delta(c, \tau):=\tau - \min_{\pi}V_{P^*,{c}}^{\pi}$, is bounded away from zero, i.e. $\Delta(c,\tau)\geq \Delta_{\min}>0$. 
\end{assumption}


We remark that assuming the existence of a safe baseline policy is reasonable in practice. Many engineering applications already have existing solutions deployed and verified to be safe, although their reward performances may not necessarily be near-optimal. Such solutions can naturally serve as the baseline for safe RF-RL. Additionally, there are practical ways to construct safe baseline policies, e.g., via imitation learning using expert demonstrations, or via policy gradient algorithms to reduce the cost value function to be below the required safety threshold. This assumption is also widely adopted in the safe RL literature (see \Cref{sec:related} for more discussions).


\vspace{-0.1in}
\section{The SWEET framework}
\vspace{-0.1in}
Compared with constraint-free RF-RL, the additional safety requirements during both exploration and planning bring two main \textbf{challenges} in the design of safe RF-RL algorithms. First, in order to obtain an $\epsilon$-optimal policy for \textit{any} given reward during planning, it requires all actions to be sufficiently covered in the exploration phase. In particular, uniform action selection is one of the enablers for reward-free exploration when the state space is undesirably large \citep{NEURIPS2020_e894d787,uehara2021representation,modi2021model}.  On the other hand, the predefined safety constraint ($c,\tau$) may preclude the agent from taking certain actions in exploration, which may affect the estimation accuracy of the environment and degrade the optimality of the output policy in planning. This dilemma requires a novel design to balance safety and state-action space coverage during exploration. Second, there may exist {\it safety constraint mismatch} between exploration and planning. Intuitively, the information obtained under a given set of constraint $(c,
\tau)$ during exploration may not provide enough coverage for the optimal policy under another set of constraint $(c^*,\tau^*)$ during planning. How to design the safe exploration algorithm to handle such constraint mismatch is non-trivial.

In this section, we introduce a unified framework for safe reward-free exploration, termed as SWEET. We will show that the general framework achieves the second and third design objectives, i.e., safe exploration, and $\epsilon$-optimality and ($c^*,\tau^*$)-safety for the output policy in planning. The first design objective, i.e., low sample complexity for exploration, is dependent on the underlying MDP structure and will be investigated in Section~\ref{sec:tabular} and Section~\ref{sec:low-rank} for tabular MDPs and low-rank MDPs, respectively.



\vspace{-0.1in}
\subsection{Algorithm design}
\vspace{-0.05in}
The SWEET framework relies on several key design components, namely, the {\it $(\eo,t)$-greedy policy}, the {\it approximation error function}, and the {\it empirical safe policy set}, as elaborated below.

\begin{definition}[$(\eo,t)$-greedy policy]\label{def:greedy version}
Given $\eo\in(0,1)$ and $t\in\{0,1,\cdots,H\}$, $\pi'$ is an $(\epsilon_0,t)$-greedy version of $\pi$ if there exists $ \mathcal{H}\subset[H]$ with $|\mathcal{H}|=t$ such that $\pi'_{h} = \pi_{h}$ for all $h\notin \mathcal{H} $, and 
\begin{align*}
\textstyle
\pi'_h(a|s) = (1-\eo)\pi_h(a|s) + \eo/|\Ac|, \forall h\in \mathcal{H}, s\in\Sc, a
\in \Ac.
\end{align*}
\end{definition}

Essentially, under an $(\eo,t)$-greedy version of a given policy $\pi$, the agent follows policy $\pi$ except for $t$ out of $H$ steps, at which with probability $\eo$, she takes actions uniformly at random from the state space $\Ac$. One critical property of the $(\eo,t)$-greedy policy is that, the difference between the value functions under the $(\eo,t)$-greedy policy and its original policy is bounded by $\epsilon_0 t$ for any normalized utility function (See \Cref{lemma:greedy_performance} in \Cref{appx:meta}).

The {\it approximation error function} $\mathtt{U}(\hP,\pi)$ measures the uncertainty in the model estimate $\hP$ under a policy $\pi$. Specifically, for a given MDP $\Mc^*$, $\mathtt{U}(\hP,\pi)$ upper bounds the value function difference under $\hP$ and $P^*$, i.e. $\mathtt{U}(\hP,\pi)\geq \max_{u}|V_{\hP,u}^{\pi} - V_{P^*,u}^{\pi}|$, where $u$ is a {\it normalized} utility function.

The {\it empirical safe policy set}, which is critical for constructing safe exploration policies, is defined as
\begin{align}\textstyle
    \mathcal{C}_{\hP,\mathtt{U}}(\tilde{\kappa},\eo,t) = \left\{
    \begin{aligned}
    &\{\pi^0\},\quad \text{ if } V^{\pi^0}_{\hP,c} + \mathtt{U}(\hP,\pi^0) \geq \tau - \eo t - \tilde{\kappa},\\
    &\left\{\pi: V^{\pi}_{\hP,c} + \mathtt{U}(\hP,\pi)\leq \tau - \eo t \right\},\quad \text{ otherwise },
    \end{aligned}
    \right.\label{eqn:SafeSet_meta}
\end{align}
where $\tilde{\kappa},\eo$ and $t$ are constants satisfying the condition that $\tau - \eo t - \tilde{\kappa}>\tau -\kappa$. 

The intuition behind the construction of the empirical safe policy can be explained as follows~\citep{liu2021learning}: 
if $V^{\pi^0}_{\hP,c} + \mathtt{U}(\hP,\pi^0)\geq\tau - \eo t - \tilde{\kappa}$, it indicates that $\hP$ is not sufficiently accurate. Thus, the empirical safe policy set only contains the safe baseline policy $\pi^0$. On the other hand, if $V^{\pi^0}_{\hP,c} + \mathtt{U}(\hP,\pi^0)<\tau - \eo t - \tilde{\kappa}$, which happens when $\mathtt{U}(\hP,\pi^0)$ is sufficiently small, 
it indicates that $\hP$ is sufficiently accurate on $\pi^0$. Then, we relax the constraint on $V^{\pi}_{\hP,c} + \mathtt{U}(\hP,\pi)$ from $\tau - \eo t - \tilde{\kappa}$ to $\tau - \eo t$ to include $\pi^0$ and other policies in the empirical safe policy set. Since $V_{\hP,c}^{\pi} + \mathtt{U}(\hP,\pi)$ is an upper bound of the true value $V_{P^*,c}^{\pi}$ for any $\pi$, it ensures that 
$V_{P^*,c}^{\pi}<\tau - \eo t$ for all $\pi$ included in $\Cc_{\hat{P},\mathtt{U}}$. Moreover, all $(\epsilon_0,t)$-greedy versions of such policies satisfy the safety constraint $(c,\tau)$. 


With those salient components, SWEET proceeds as follows. At the beginning of each episode, the agent executes a set of {\it behavior policies}, which are $(\eo,t)$-greedy versions of a {\it reference policy} $\pi_r$ obtained in the previous episode. For the first episode, the reference policy would be $\pi^0$. The general construction of $\pi_r$ will be elaborated below. By collecting trajectories generated under the behavior policies, the agent updates the estimated model $\hP$ and the corresponding approximation error function $\mathtt{U}(\hP,\cdot)$.

The agent then seeks a reference policy $\pi_r$ that maximizes the approximation error $\mathtt{U}(\hP,\pi)$ within the constructed empirical safe policy set.  Intuitively, $\mathtt{U}(\hP,\pi)$ is an upper bound of certain distance of distributions over trajectories induced by $\pi$ under $\hP$ and $P^*$. Therefore, $\pi_r$ induces a distribution that captures the most uncertainty in $\hP$. {Choosing $\pi_r$ thus reduces the uncertainty in $\hP$ in a greedy fashion.} 
If $\mathtt{U}(\hP,\pi_r)$ is less than a termination threshold $\mathtt{T}$ defined in SWEET, it indicates that the estimated model $\hP$ is sufficiently accurate for the planning task. The exploration phase then terminates. Otherwise, the agent continues to the next episode with the new $\pi_r$.  

After termination, SWEET enters the planning phase and receives arbitrary reward functions $r^*$ and a safety constraint $(c^*,\tau^*)$. The agent utilizes $\hP$ to compute a policy $\bar{\pi}$, which maximizes $V_{\hP,r^*}^{\pi}$ subject to an empirical safety constraint $V_{\hP,c}^{\pi} + \mathtt{U}(\hP,\pi)\leq \tau^*$. \Cref{alg: meta} has the detail of SWEET.

\vspace{-0.1in}
\begin{algorithm}[H]
\caption{SWEET (\textbf{S}afe Re\textbf{W}ard Fr\textbf{E}e \textbf{E}xplora\textbf{T}ion)}
\label{alg: meta}
\begin{algorithmic}[1]
\STATE {\bfseries Input:} Reference policy $\pi_r =\pi^0$, uncertainty function $\mathtt{U}$, $\eo,t$, $\tilde{\kappa}$ and $\mathtt{T}$.
\STATE //\texttt{ Exploration:}  
\WHILE{TRUE} 
\STATE Construct a set of $(\eo,t)$-greedy policies of $\pi_r$ (see \Cref{def:greedy version}) and use them to collect data;
\STATE {Model estimation:} Update $\hP$ using collected data;
\STATE Obtain $\pi_r = \arg\max_{\pi\in\Cc_{\hP,\mathtt{U}}(\tilde{\kappa},\eo,t)}\mathtt{U}(\hP,\pi)$ where $\Cc_{\hP,\mathtt{U}}(\tilde{\kappa},\eo,t)$ is defined in \Cref{eqn:SafeSet_meta};
\IF {$V_{\hP,c}^{\pi^0} + \mathtt{U}(\hP,\pi^0)\leq \tau-\eo t - \tilde{\kappa} $ and $\mathtt{U}(\hP,\pi_r) \leq \mathtt{T}$ } 
\STATE Output $\hP$; {\bfseries break;}
\ENDIF

\ENDWHILE

\STATE // \texttt{Planning:} 
\STATE Receive reward function $r^*$ and a safety constraint $(c^*,\tau^*)$.
\STATE {\bfseries Output:}  $\bar{\pi}=\arg\max_{\pi} V_{\hP,r^*}^{\pi}~~ \text{ s.t. }~~ V_{\hP,c^*}^{\pi} + \mathtt{U}(\hP,\pi)\leq \tau^*$.

\end{algorithmic}
\end{algorithm}
\vspace{-0.2in}

\subsection{Theoretical analysis}\label{subsec:meta_theory}
\vspace{-0.05in}
Before we present the theoretical guarantee for SWEET, we first introduce the notion of mixture policies and equivalent policies and characterize the concavity over Markov policy space. 
\begin{definition}\label{def:mixture_policy}
Given two Markov policies $\pi,\pi'\in\mathcal{X}$, we use $\gamma\pi\oplus(1-\gamma)\pi'$ to denote the mixture policy that uses $\pi$ with probability $\gamma$ and uses $\pi'$ with probability $1-\gamma$ during an episode. 
\end{definition}

\begin{definition}\label{def:equivalent_policy}
Given an MDP $\Mc$, two policies, including mixture policies, are equivalent if they induce the same marginal distribution over any state-action pair $(a,s)$ in any step $h\in[H]$.
\end{definition}

By Theorem 6.1 in \cite{Altman:CMDP:1999}, for any mixture policy $\gamma\pi\oplus(1-\gamma)\pi'$, there exists an equivalent Markov policy $\pi^{\gamma}(\pi,\pi')\in\mathcal{X}$. For ease of presentation, in the following, we simply use $\pi^{\gamma}$ to denote it when the definition is clear from the context. Therefore, the Markov policy space $\mathcal{X}$ is equipped with an abstract convexity by mapping all mixture policies to their equivalent Markov policies in $\mathcal{X}$. With this convexity, we can define concave functions on $\mathcal{X}$ as follows.

\begin{definition}\label{def:concave}
A function $\mathtt{f}: \mathcal{X}\rightarrow [0,1]$ is concave and continuous on the Markov policy space $\Xc$ if for any $\pi,\pi’\in\mathcal{X}$ and $\gamma\in[0,1]$, $\mathtt{f}(\pi^\gamma) \geq \gamma \mathtt{f}(\pi) + (1-\gamma)\mathtt{f}(\pi')$, and is continuous in $\gamma\in[0,1]$.
\end{definition}

With \Cref{def:concave}, we have the following result of SWEET.

\begin{theorem}[$\epsilon$-optimality and safety guarantee of SWEET]\label{main:thm:meta_safe}
Given an MDP $\Mc^*$ and model estimate $\hP$, assume $\mathtt{U}(\hP,\pi)$ is concave and continuous over the Markov policy space $\Xc$ and 
$\big|V_{P^*,u}^{\pi} - V_{\hP,u}^{\pi}\big|\leq \mathtt{U}(\hP,\pi)$ for any {normalized} utility $u$ and policy $\pi$, and Assumption 1 holds.  
Let $\eo, t$ and $\tilde{\kappa}$ be constants that satisfy
$\eo t + \tilde{\kappa} < \kappa.$ 
Let $\mathfrak{U}  = \min\left\{\frac{\epsilon}{2}, \frac{\Delta_{\min}}{2}, \frac{\epsilon\Delta_{\min}}{5}, \frac{\tau-\eo t}{4}, \frac{\tilde{\kappa}(\Delta(c,\tau) -\eo t - \tilde{\kappa})}{4(\Delta(c,\tau) -\eo t)}\right\}$, and $\mathtt{T} \leq (\Delta(c,\tau) -\eo t)\mathfrak{U}/2$ be the termination condition of SWEET. If SWEET terminates in finite episodes, then, the following statements hold:
\vspace{-0.03in}
\begin{itemize}
[leftmargin=0.25in]\itemsep=0pt
\item [(i)] The exploration phase is safe.
\item [(ii)] The output $\bar{\pi} $ of SWEET in the planning phase is an $\epsilon$-optimal $(c^*,\tau^*)$-safe policy. 
\end{itemize}

\end{theorem}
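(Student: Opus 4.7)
The plan is to prove the two claims separately, with the concavity and continuity of $\mathtt{U}(\hP,\cdot)$ on $\mathcal{X}$ as the central tool. Throughout I will repeatedly use the fact that $\mathtt{U}(\hP,\pi)$ upper bounds $|V_{P^*,u}^\pi - V_{\hP,u}^\pi|$ for any normalized utility $u$, so that empirical-value-plus-$\mathtt{U}$ is a valid upper confidence bound on the true value, and lower-confidence bounds follow symmetrically.

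For part (i), I would argue episode by episode. The behavior policy executed in any episode is an $(\eo,t)$-greedy version of the reference policy $\pi_r$ from the previous iteration, and by the property noted right after \Cref{def:greedy version}, this randomization inflates any normalized value by at most $\eo t$. Thus it suffices to show $V_{P^*,c}^{\pi_r} \leq \tau - \eo t$. In the first branch of $\Cc_{\hP,\mathtt{U}}(\tilde{\kappa},\eo,t)$, $\pi_r=\pi^0$ and \Cref{assm: baseline} together with the parameter condition $\eo t + \tilde{\kappa} < \kappa$ gives $V_{P^*,c}^{\pi^0} \leq \tau-\kappa \leq \tau - \eo t$. In the second branch, the defining inequality $V_{\hP,c}^{\pi_r} + \mathtt{U}(\hP,\pi_r) \leq \tau - \eo t$ and the upper-confidence property deliver exactly the needed bound.

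For part (ii), safety of $\bar\pi$ is immediate: the planning optimization enforces $V_{\hP,c^*}^{\bar\pi} + \mathtt{U}(\hP,\bar\pi) \leq \tau^*$, which upper bounds $V_{P^*,c^*}^{\bar\pi}$. The $\epsilon$-optimality is the real work; the main obstacle is that the true optimum $\pi^*$ need not be feasible in the empirical planning problem, so I cannot directly invoke the optimality of $\bar\pi$ at $\pi^*$. My strategy is to build a feasible witness by mixing $\pi^*$ with strictly feasible policies and exploit concavity. Concretely, let $\pi^c_{\min}$ and $\pi^{c^*}_{\min}$ attain the safety margins $\Delta(c,\tau)$ and $\Delta(c^*,\tau^*)$, respectively. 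First, by continuity of $\gamma \mapsto V_{\hP,c}^{\gamma\pi^*\oplus(1-\gamma)\pi^c_{\min}} + \mathtt{U}(\hP,\cdot)$ in $\gamma$ and the termination inequality $V_{\hP,c}^{\pi^0} + \mathtt{U}(\hP,\pi^0) \leq \tau - \eo t - \tilde{\kappa}$, I would locate a $\gamma^*$ such that the mixture lies in $\Cc_{\hP,\mathtt{U}}$, hence $\mathtt{U}(\hP,\cdot) \leq \mathtt{T}$ there; concavity $\mathtt{U}(\hP,\pi^{\gamma^*}) \geq \gamma^*\mathtt{U}(\hP,\pi^*) + (1-\gamma^*)\mathtt{U}(\hP,\pi^c_{\min})$ then yields a uniform bound $\mathtt{U}(\hP,\pi^*) \leq \mathtt{T}/\gamma^*$. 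Second, I would construct the planning witness $\tilde\pi = \gamma\pi^*\oplus(1-\gamma)\pi^{c^*}_{\min}$; using linearity of $V_{P^*,c^*}$ on mixtures to get $V_{P^*,c^*}^{\tilde\pi} \leq \tau^* - (1-\gamma)\Delta_{\min}$, one can choose $1-\gamma$ proportional to $\mathtt{U}(\hP,\tilde\pi)/\Delta_{\min}$ so that $V_{\hP,c^*}^{\tilde\pi} + \mathtt{U}(\hP,\tilde\pi) \leq \tau^*$, i.e., $\tilde\pi$ is empirically feasible. Third, I would chain $V_{P^*,r^*}^{\bar\pi} \geq V_{\hP,r^*}^{\bar\pi} - \mathtt{U}(\hP,\bar\pi) \geq V_{\hP,r^*}^{\tilde\pi} - \mathtt{U}(\hP,\bar\pi) \geq V_{P^*,r^*}^{\tilde\pi} - \mathtt{U}(\hP,\bar\pi) - \mathtt{U}(\hP,\tilde\pi)$ and close with $V_{P^*,r^*}^{\pi^*} - V_{P^*,r^*}^{\tilde\pi} \leq 1-\gamma$ by linearity, absorbing every error term into $\epsilon$ via the specific choices hardcoded in the definition of $\mathfrak{U}$.

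The hard part is synchronizing the small parameters: $\mathfrak{U}$ packages five separate bounds (exploration safety slack, witness feasibility margin, reward $\epsilon$-closeness, value-scale constraints) needed at different steps, and the choices of $\gamma^*$ and $\gamma$ must be self-consistent because $\tilde\pi$ itself appears in the $\mathtt{U}$ bound that governs how small $1-\gamma$ can be made. Verifying that the prescribed termination threshold $\mathtt{T} \leq (\Delta(c,\tau) - \eo t)\mathfrak{U}/2$ is tight enough to close this near-circular dependence, while simultaneously handling both branches of the $\Cc_{\hP,\mathtt{U}}$ definition and ensuring $\tilde\pi$ remains in the concave hull where the $\mathtt{U}$-bound applies, is where the bulk of the technical bookkeeping lives.
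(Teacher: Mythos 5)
Your part (i) and your final planning step (mixing $\pi^*$ with a strictly safe policy for the constraint $(c^*,\tau^*)$) do match the paper's Steps 1 and 3, and the safety of $\bar\pi$ is indeed immediate. But the middle of your part (ii) has a genuine gap: you never obtain a usable bound on $\mathtt{U}(\hP,\bar\pi)$, and your route to bounding $\mathtt{U}(\hP,\pi^*)$ does not go through. The planning constraint only yields $\mathtt{U}(\hP,\bar\pi)\le \tau^*$, which is not $O(\epsilon)$, yet $\mathtt{U}(\hP,\bar\pi)$ appears in your error chain. For $\pi^*$, the phrase ``locate a $\gamma^*$ such that the mixture lies in $\Cc_{\hP,\mathtt{U}}$'' hides the actual difficulty: membership in $\Cc_{\hP,\mathtt{U}}(\tilde{\kappa},\eo,t)$ requires an \emph{upper} bound on $V^{\pi^\gamma}_{\hP,c}+\mathtt{U}(\hP,\pi^\gamma)$ at the mixture, whereas concavity of $\mathtt{U}(\hP,\cdot)$ only supplies a \emph{lower} bound there, so feasibility of the mixture is not a mere continuity fact; and without a quantitative lower bound on $\gamma^*$ (of order $\tilde\kappa$ or $\Delta(c,\tau)-\eo t$), the conclusion $\mathtt{U}(\hP,\pi^*)\le \mathtt{T}/\gamma^*$ is vacuous. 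Worse, your anchor $\pi^c_{\min}$ cannot be certified empirically feasible without already knowing $\mathtt{U}(\hP,\pi^c_{\min})$ is small --- exactly the circularity you flag at the end but never break.

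The paper breaks it with two ingredients absent from your plan. First, \Cref{lemma:bound_max}: anchoring at $\pi^0$, whose strict empirical feasibility with margin $\tilde\kappa$ is guaranteed by the termination test, it shows by contradiction that the empirical cost increases toward any infeasible policy, uses the intermediate value theorem on the linear interpolation of $V_{\hP,c}+\mathtt{U}$ to obtain a mixing weight $\gamma_0\ge\tilde\kappa/2$, and uses concavity and continuity to exhibit a \emph{feasible} mixture whose $\mathtt{U}$-value equals the interpolated value, giving $\max_\pi \mathtt{U}(\hP,\pi)\le 2\mathtt{T}/\tilde\kappa$. Second, since $2\mathtt{T}/\tilde\kappa$ can still be of order $\Delta(c,\tau)$, far larger than $\mathfrak{U}$, the paper bootstraps: the crude global bound certifies $\pi^{\min}=\arg\min_\pi V^{\pi}_{P^*,c}$ as an empirically feasible anchor with the larger slack $\Delta(c,\tau)-\eo t-4\mathtt{T}/\tilde\kappa$, and iterating yields the recursion $x_{n+1}=2\mathtt{T}/(\Delta(c,\tau)-\eo t-2x_n)$, whose limit (via \Cref{lemma:sequence converge}) is at most $\mathfrak{U}$. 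The resulting uniform bound $\max_\pi\mathtt{U}(\hP,\pi)\le\mathfrak{U}$ is what simultaneously controls $\mathtt{U}(\hP,\pi^*)$, $\mathtt{U}(\hP,\tilde\pi)$ and $\mathtt{U}(\hP,\bar\pi)$ in the final chain; without it, or an equivalent mechanism, your argument does not close, and your stated choices of $\mathfrak{U}$ and $\mathtt{T}$ cannot be verified to absorb the error terms.
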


\begin{wrapfigure}{r}{0.3\textwidth}
\begin{minipage}[b]{0.3\textwidth}
    \includegraphics[width=1.8in]{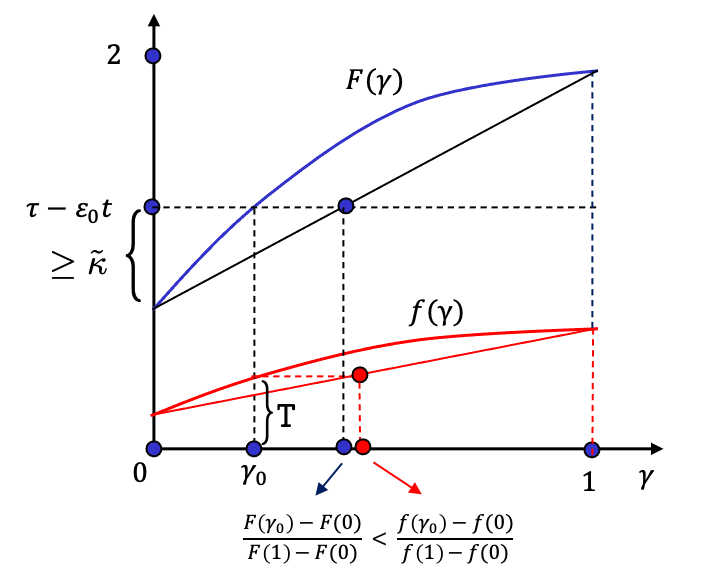}
    \vspace{-0.9cm}
\caption{{\small Illustration of the proof.}}\label{fig:proof}
\vspace{-0.9cm}
\end{minipage}
\end{wrapfigure}

The detailed proof of \Cref{main:thm:meta_safe} is deferred to Appendix~\ref{appx:meta}. We {highlight the main idea behind the analysis as follows.} While the construction of $\Cc_{\hP,\mathtt{U}}(\tilde{\kappa},\eo,t)$ ensures safe exploration, the ability for SWEET to find an $\epsilon$-optimal $(c^*,\tau^*)$-safe policy in planning relies on the concavity and continuity of $\mathtt{U}({\hP},\cdot)$. Note that when SWEET terminates, it is only guaranteed that the approximation error $\mathtt{U}(\hP,\pi)$ is upper bounded by $\mathtt{T}$ for the policies within $\Cc_{\hP,\mathtt{U}}(\tilde{\kappa},\eo,t)$. Due to a possibly different constraint in planning, it is desirable to have $\mathtt{U}(\hP,\pi)$ sufficiently small under any $\pi$, so that the agent is able to achieve the learning goal in planning with the estimated model $\hP$. 
Let $\tilde{\pi} = \arg\max_{\pi}\mathtt{U}(\hP,\pi)$, 
$\pi^\gamma$ be the equivalent Markov policy of $\gamma\tilde{\pi}\oplus(1-\gamma)\pi^0$, 
$f(\gamma) = \mathtt{U}(\hP,\pi^\gamma)$ and $g(\gamma) = V_{\hP,c}^{\pi^\gamma}$. Then, $f$ is concave and $g$ is linear in $\gamma$ by Theorem 6.1 in \cite{Altman:CMDP:1999}. Let $F(\gamma)=f(\gamma)+g(\gamma)$. The definition of $\Cc_{\hP,\mathtt{U}}(\tilde{\kappa},\eo,t)$ ensures that $F(0)\leq \tau-\eo t- \tilde{\kappa}$, and $F(\gamma)\leq \tau-\eo t$ if $\pi^\gamma$ lies in $\Cc_{\hP,\mathtt{U}}$. It suffices to consider the case when $F(1)>\tau-\epsilon_0 t$, under which we can show that both $g$ and $f$ increase with $\gamma$. Then, the concavity of $f$ and linearity of $g$ ensure that $\frac{f(\gamma)-f(0)}{f(1)-f(0)}{\geq} \frac{F(\gamma)-F(0)}{F(1)-F(0)}$, as illustrated in \Cref{fig:proof}. Let $\pi^{\gamma_0}$ be the policy under which $F(\gamma_0)=\tau-\eo t$. Then, $F(\gamma_0)-F(0)\geq \tilde{\kappa}$. Combining with the fact that $F(1)-F(0)\leq 2$, we have $f(1)\leq f(0)+(f(\gamma_0)-f(0))\frac{2}{\tilde{\kappa}}\leq 2\mathtt{T}/\tilde{\kappa}$, which provides an upper bound on $\mathtt{U}(\hP,\pi)$ for any $\pi$.


\vspace{-0.05in}
\subsection{Truncated value function}\label{subsec:truncated_value}
\vspace{-0.05in}
\Cref{main:thm:meta_safe} highlights the importance of the concave and continuous approximation error function on the Markov policy space. In the following, we introduce a prototype function, coined {\it truncated value function}, which is concave and continuous on the Markov policy space and can be used for the construction of the approximation error functions for both tabular and low-rank MDPs. 
\begin{definition}[Truncated value function]
Given an MDP $\Mc$, $\alpha>0$, and a set of (un-normalized) utility functions $u$, the truncated value function $\hV_{P,u}^{\alpha,\cdot} = \hV_{1,P,u}^{\alpha, \cdot}(s_1) : \mathcal{X}\rightarrow \Rb$ is defined as follows:
\begin{equation}\label{eqn:truncated_V}
\textstyle
\left\{\begin{aligned}
    &\bar{Q}_{h,P,u}^{\alpha,\pi}(s_h,a_h) = u(s_h,a_h) + \alpha P_h\hV_{h+1,P,u}^{\alpha, \pi}(s_h,a_h),\\
    &\hV_{h,P,u}^{\alpha,\pi}(s_h) = \min\bigg\{1, \mathop{\Eb}_{\pi}\left[\bar{Q}^{\alpha, \pi}_{h,P,u}(s_h,a_h)\right]\bigg\},
\end{aligned}\right.
\end{equation}
where $V_{H+1,P,u }^{\alpha,\pi}(s_{H+1}) = 0$ and we omit the upper index $\alpha$ for simplicity when $\alpha=1$.
\end{definition}

It is worth noting that the clipping technique is applied to the value function as opposed to the action-value function, where the latter is more conventional in the existing literature. This new method is critical for achieving the superior sample complexity in safe RF-RL, as will be elaborated later. Meanwhile, it preserves the desired concavity and continuity (see \Cref{appx:lemma:Concave} in \Cref{appx:meta}), which ensures the safety guarantee for both exploration and planning.%

\if{0}
We have the following importance observation on the truncated value function.

\begin{lemma}[Concavity of truncated value function ]\label{lemma:Concave}
 Let $\pi^{\gamma}$ be the equivalent markov policy of $\gamma{\pi} \oplus  (1-\gamma){\pi'}$ under a transition model ${P}$.
Then, 
\[\hV_{{P},u}^{\pi^{\gamma}}\geq \gamma \hV_{{P},u}^{{\pi}} + (1-\gamma) \hV_{{P},u}^{{\pi'}}.\]
Moreover, if utility function $u$ satisfies the normalization condition, then the equality holds, i.e.
\[\hV_{{P},u}^{\pi^{\gamma}} =  \gamma \hV_{{P},u}^{{\pi}} + (1-\gamma) \hV_{{P},u}^{{\pi'}}.\]
\end{lemma}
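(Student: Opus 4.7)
The plan is to prove both parts by backward induction on the step index $h$, exploiting the concavity of $\min\{1,\cdot\}$ together with the structural property of the equivalent Markov policy: by Altman's Theorem 6.1, for every step $h$ and state $s$, the joint state--action occupancy satisfies $D^{\pi^\gamma}_h(s)\pi^\gamma_h(a|s) = \gamma D^{\pi}_h(s)\pi_h(a|s) + (1-\gamma)D^{\pi'}_h(s)\pi'_h(a|s)$, where $D^{\pi}_h(s)$ denotes the marginal state distribution at step $h$ under $\pi$. Equivalently, $\pi^\gamma_h(\cdot|s)$ is the $\lambda_h(s)$--convex combination of $\pi_h(\cdot|s)$ and $\pi'_h(\cdot|s)$, where $\lambda_h(s)=\gamma D^{\pi}_h(s)/D^{\pi^\gamma}_h(s)$ is the posterior branch probability at $s$.

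For the inequality, I would carry the pointwise inductive claim $\hV^{\pi^\gamma}_h(s_h)\geq \lambda_h(s_h)\hV^{\pi}_h(s_h)+(1-\lambda_h(s_h))\hV^{\pi'}_h(s_h)$ for every reachable $s_h$. The base case $h=H+1$ is immediate since both sides vanish. For the inductive step I would expand $\hV^{\pi^\gamma}_h(s_h)$ via the Bellman recursion, split the inner expectation $\mathbb{E}_{a\sim\pi^\gamma_h(s_h)}[u(s_h,a)+P_h\hV^{\pi^\gamma}_{h+1}]$ through the identity above into two pieces whose action laws are $\pi_h(\cdot|s_h)$ and $\pi'_h(\cdot|s_h)$ weighted by $\lambda_h(s_h)$, and then apply the concavity of $\min\{1,\cdot\}$ to push the clipping inside the convex combination. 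The remaining task is to replace each occurrence of $\hV^{\pi^\gamma}_{h+1}$ by the corresponding $\hV^{\pi}_{h+1}$ or $\hV^{\pi'}_{h+1}$ using the inductive hypothesis applied state--by--state under $P_h(\cdot|s_h,a)$.

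The main obstacle I anticipate is that the posterior $\lambda_{h+1}(s_{h+1})$ at the next state is driven by both the step--$h$ action law and the transition, so it does not collapse to $\lambda_h(s_h)$, and the naive pointwise bounds do not cancel cleanly. To sidestep this I would switch to the integrated invariant $\sum_{s_h}D^{\pi^\gamma}_h(s_h)\hV^{\pi^\gamma}_h(s_h)\geq \gamma\sum_{s_h}D^{\pi}_h(s_h)\hV^{\pi}_h(s_h)+(1-\gamma)\sum_{s_h}D^{\pi'}_h(s_h)\hV^{\pi'}_h(s_h)$ and exploit the decomposition $\hV^{\pi}(s_1)=W^{\pi}(s_1)-\mathbb{E}_{\pi}[\sum_{h=1}^{H}(T^{\pi}_h(s_h)-1)_{+}]$, where $W^{\pi}$ is the un--truncated value (linear in the mixture, hence preserved by Altman's equivalence) and the one--step truncation loss $(T^{\pi}_h(s_h)-1)_{+}$ is convex in the action law at $(s_h,h)$. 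Propagating this convexity of the truncation loss backward through the Bellman recursion, together with the concavity of $\min\{1,\cdot\}$, yields the required integrated inequality, and specialization at $h=1$ (where $D^{\pi^\gamma}_1=D^{\pi}_1=D^{\pi'}_1=\delta_{s_1}$) gives the stated bound at $s_1$.

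For the equality part, I would observe that under the normalization condition $\sum_{h}u_h(s_h,a_h)\leq 1$ along every trajectory, the un--truncated value $W^{\pi}_h(s)\leq 1$ uniformly in $\pi,h,s$, so the clipping $\min\{1,\cdot\}$ never binds and $\hV^{\pi}\equiv W^{\pi}$ for every Markov $\pi$. Since $W^{\pi}(s_1)=\mathbb{E}_{\tau\sim\pi}[\sum_{h}u_h(s_h,a_h)]$ is linear in the trajectory distribution and $\pi^\gamma$ induces the same trajectory distribution as $\gamma\pi\oplus(1-\gamma)\pi'$ by construction of the equivalent Markov policy, we conclude $\hV^{\pi^\gamma}_{P,u}=W^{\pi^\gamma}_{P,u}=\gamma W^{\pi}_{P,u}+(1-\gamma)W^{\pi'}_{P,u}=\gamma\hV^{\pi}_{P,u}+(1-\gamma)\hV^{\pi'}_{P,u}$, giving the claimed equality.
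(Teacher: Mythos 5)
Your equality part is fine and is exactly the paper's argument: under the normalization condition every $\hQ_{h,P,u}^{\pi}\le 1$, so the clipping never binds, the truncated value coincides with the ordinary value, and linearity in the occupancy measure (Theorem 6.1 of Altman) gives the equality. The inequality part, however, has a genuine gap at its central step. You correctly diagnose why the pointwise induction with posterior weights $\lambda_h(s)$ fails (the posterior at the next state does not match the branch you conditioned on), but your proposed repair relocates that difficulty rather than resolving it. In the decomposition $\hV^{\pi}_{P,u}=W^{\pi}-\Eb_{\pi}\big[\sum_h \alpha^{h-1}\,(T^{\pi}_h(s_h)-1)_+\big]$ (note the $\alpha^{h-1}$ weights are needed, since the lemma is applied with $\alpha_H=1+1/H$), the pre-clip quantity $T^{\pi}_h(s_h)=\Eb_{a\sim\pi_h}\left[u(s_h,a)+\alpha P_h\hV^{\pi}_{h+1}(s_h,a)\right]$ is defined through the policy's \emph{own} truncated continuation. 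Hence the truncation loss of $\pi^{\gamma}$ at step $h$ involves $\hV^{\pi^{\gamma}}_{h+1}$, which is neither $\hV^{\pi}_{h+1}$ nor $\hV^{\pi'}_{h+1}$ pointwise and is tied to them only through the state-dependent posteriors $\lambda_{h+1}(\cdot)$ --- exactly the mismatch you flagged in your first paragraph. Convexity of $(T_h(s)-1)_+$ in the action law at $(s,h)$ holds only with the continuation frozen, so it does not deliver the needed comparison $\Eb_{\pi^{\gamma}}\big[\sum_h\alpha^{h-1}(T^{\pi^{\gamma}}_h-1)_+\big]\le\gamma\,\Eb_{\pi}[\cdots]+(1-\gamma)\,\Eb_{\pi'}[\cdots]$, which, after the linear part $W$ cancels by Altman's identity, \emph{is} the lemma. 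The sentence ``propagating this convexity backward \ldots yields the required integrated inequality'' therefore asserts the conclusion rather than proving it.

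For comparison, the paper does not use a loss decomposition at all: it runs a backward induction directly on the integrated quantities, showing $\gamma\,\Eb_{P,\pi}[\hV^{\pi}_{h,P,u}(s_h)]+(1-\gamma)\,\Eb_{P,\pi'}[\hV^{\pi'}_{h,P,u}(s_h)]\le\min\{1,\Eb_{P,\pi^{\gamma}}[\hQ^{\pi^{\gamma}}_{h,P,u}(s_h,a_h)]\}$, using Jensen/concavity of $\min\{1,\cdot\}$ together with the per-step marginal identity $\Eb_{P,\pi^{\gamma}}[u_h]=\gamma\Eb_{P,\pi}[u_h]+(1-\gamma)\Eb_{P,\pi'}[u_h]$, and reading off the claim at $h=1$ where $s_1$ is fixed. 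If you pursue your ``integrated invariant,'' that is the form in which it should be stated and closed; be aware, though, that the step where the expectation of the clipped next-step value is compared with the clip of its expectation is the delicate point (Jensen only gives $\Eb[\min\{1,\cdot\}]\le\min\{1,\Eb[\cdot]\}$), and it deserves a careful justification --- it is also the place where the paper's own chain is tersest. As written, your proposal leaves the decisive inequality unestablished.
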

\jing{Need to introduce $\gamma{\pi} \oplus  (1-\gamma){\pi'}$ first}
The proof of \Cref{lemma:Concave} can be found in Appendix~\ref{appx:meta}.

\begin{remark}
If $\alpha= 1$ and utility function $u$ is normalized, the truncated value function $\hV_{h,P,u}^{\pi}$ degenerates to the value function $V_{h,P,u}^{\pi}$ under the same transition kernel $P$ due to the fact that all $Q$-value function is at most 1.
\end{remark}
\fi


\vspace{-0.1in}
\section{The Tabular-SWEET algorithm}\label{sec:tabular}
\vspace{-0.1in}
\subsection{Algorithm design}
\vspace{-0.05in}

Under tabular MDPs, state space $\Sc$ and action space $\Ac$ are both finite (with sizes $S$ and $A$, respectively). We instantiate the modules of model estimation and exploration policy construction of the SWEET framework, and specify the approximation error function and parameter selection as follows. The details of the Tabular-SWEET algorithm is shown in \Cref{alg: Tab} in \Cref{appx:tabular}.

\textbf{Model estimation.} At each episode $n$, the agent uses $\pi^{(n-1)}$, which is the reference policy derived from the last episode $n-1$, to collect a trajectory $ {\{\small s_1^{(n)}, a_1^{(n)},\ldots, s_{H}^{(n)}, a_H^{(n)} \} }$. For that, we set $\epsilon_0$ and $t$ as $0$, i.e., it is essentially a $(0,0)$-greedy version of policy $\pi^{(n-1)}$. We note that although $\pi^{(n-1)}$ is a greedy policy, the uncertainty captured by the approximation error function $\mathtt{U}(\hP,\pi)$ will guide the agent to explore the uncertain state-action pairs and obtain sufficient coverage for the entire space. 

The agent then adds new data triples $\{s_h^{(n)},a_h^{(n)}, s_{h+1}^{(n)}\}_{h=1}^H$ to a maintained dataset $\Dc$. Let $N_h^{(n)}(s_h,a_h) = \sum_{m=1}^n\mathds{1}{\{s_h^{(n)} = s_h,~ a_h^{(n)} = a_h\}}$ and $N_h^{(n)}(s_h,a_h,s_{h+1}) = \sum_{m=1}^n \mathds{1}{\{s_h^{(n)} = s_h,~ a_h^{(n)} = a_h,~ s_{h+1}^{(n)} = s_{h+1} \}}$ be the visitation counters. The agent estimates $\hP_h^{(n)}(s_{h+1}|s_h,a_h)$ as $\frac{N_h^{(n)}(s_h,a_h,s_{h+1})}{N_h^{(n)}(s_h,a_h)}$
if ${ N_h^{(n)}(s_h,a_h)>1}$ and as $\frac{1}{S}$ otherwise.

\textbf{Approximation error function.} Inspired by \cite{menard2021fast}, we adopt an uncertainty-driven virtual reward function 
   $ \hb_h^{(n)}(s_h,a_h) = \frac{\beta_0H}{N_h^{(n)}(s_h,a_h)}$ to guide the exploration, 
where $\beta_0$ is a fixed parameter.
Let $\alpha_H = 1+1/H$. Then, the approximation error function is specified as
   $ \mathtt{U}^{(n)}(\pi):=4\sqrt{\hV_{\hP^{(n)},\hb^{(n)}}^{\alpha_H,\pi}}$.
According to \Cref{appx:lemma:Concave}, $ \mathtt{U}^{(n)}(\pi)$ is concave and continuous in $\pi$. Besides, as shown in {Lemma~\ref{lemma:TabValueDifference} in Appendix~\ref{appx:tabular}}, we have
$|V_{P^*,u}^{\pi} - V_{\hP,u}^{\pi}|\leq  \mathtt{U}^{(n)}(\pi)$ for any normalized utility $u$, 
i.e., $ \mathtt{U}^{(n)}(\pi)$ is a valid upper bound on the estimate error for the corresponding value function. The required properties of $\mathtt{U}$ in Theorem~\ref{main:thm:meta_safe} are thus satisfied.

\textbf{Exploration policy.} To guarantee that the exploration is safe, we set $\tilde{\kappa} = \kappa/2$, and construct an empirical safety set $\Cc^{(n)} := \Cc_{\hP^{(n)},\mathtt{U}^{(n)}}(\kappa/2,0,0)$ (\Cref{eqn:SafeSet_meta}). 
Hence, the algorithm finds a policy $\pi^{(n)}$ used for the next episode, which is in the safe set $\mathcal{C}^{(n)}$ and maximizes the truncated value function $\hV_{\hP^{(n)},\hb^{(n)}}^{\alpha_H,\pi}$. 
The exploration phase stops at episode $n_{\epsilon}$ when 
$\mathtt{U}^{(n_{\epsilon})}(\pi^{(n_{\epsilon})}) \leq \mathtt{T}$. The algorithm will utilize the model learned in episode $n_{\epsilon}$ to design an $\epsilon$-optimal policy with respect to arbitrary given reward $r^*$ and safety constraint $(c^*,\tau^*)$.

\vspace{-0.1in}
\subsection{Theoretical analysis}
\vspace{-0.05in}
The theoretical guarantee of Tabular-SWEET is characterized in the theorem below, whose proof can be found in Appendix~\ref{appx:tabular}.
\begin{theorem}[Sample complexity of Tabular-SWEET]\label{main:thm:tab}
Given $\epsilon,\delta\in(0,1)$, and safety constraint $(c,\tau)$, under Assumption 1, let $\mathfrak{U}  = \min\left\{\frac{\epsilon}{2}, \frac{\Delta_{\min}}{2}, \frac{\epsilon\Delta_{\min}}{5}, \frac{\tau}{4}, \frac{\kappa}{16}\right\}$, and $\mathtt{T}= \Delta(c,\tau)\mathfrak{U}/2$ be the termination condition of Tabular-SWEET. 
Then, with probability at least $1-\delta$, Tabular-SWEET achieves the learning objective of safe reward-free exploration (\Cref{eqn: Safety,eqn:objective}), and
the number of trajectories collected in the exploration phase is at most  
$\tilde{O}\left(\frac{HSA(S+\log(1/\delta))}{\Delta(c,\tau)^2\mathfrak{U}^2} + \frac{HSA(S+\log(1/\delta))}{\kappa^2}\right).$

\end{theorem}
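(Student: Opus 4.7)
The plan is to reduce the theorem to an application of Theorem~\ref{main:thm:meta_safe} plus a separate quantitative bound on how many episodes can pass before the termination condition triggers. Concretely, I would (i)~verify that the specific approximation error function $\mathtt{U}^{(n)}(\pi)=4\sqrt{\hV^{\alpha_H,\pi}_{\hP^{(n)},\hb^{(n)}}}$ satisfies the structural hypotheses of Theorem~\ref{main:thm:meta_safe}, which directly delivers safe exploration and $\epsilon$-optimal $(c^*,\tau^*)$-safe planning; and (ii)~prove a high-probability upper bound on the stopping time $n_\epsilon$ matching the claimed sample complexity.

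For (i), I must check three things. First, concavity and continuity of $\mathtt{U}^{(n)}$ over $\Xc$: by the concavity lemma for the truncated value function (Appendix reference for $\hV_{\hP,u}^{\alpha,\pi}$), the map $\pi\mapsto \hV^{\alpha_H,\pi}_{\hP^{(n)},\hb^{(n)}}$ is concave on the mixture structure of $\Xc$, and then composition with the concave nondecreasing $\sqrt{\cdot}$ preserves concavity and continuity. Second, the bound $\big|V^{\pi}_{P^*,u}-V^{\pi}_{\hP^{(n)},u}\big|\le \mathtt{U}^{(n)}(\pi)$ for every normalized utility $u$: this is Lemma~\ref{lemma:TabValueDifference} (Appendix~\ref{appx:tabular}), whose proof is a standard Bernstein-type concentration on $\hP^{(n)}$ with a union bound over $(s,a,h)$ and a Bellman-type expansion of the value difference; the $S+\log(1/\delta)$ factor appearing in the sample complexity is exactly the choice of $\beta_0$ required by this concentration. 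Third, the parameter condition $\epsilon_0 t+\tilde{\kappa}<\kappa$ of Theorem~\ref{main:thm:meta_safe} holds trivially since $\epsilon_0=t=0$ and $\tilde{\kappa}=\kappa/2$. Once all three items are verified, Theorem~\ref{main:thm:meta_safe} yields the correctness half of the claim as soon as the algorithm terminates.

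For (ii), I would split the exploration into two regimes separated by the event $E^{(n)}:=\{\mathtt{U}^{(n)}(\pi^0)\le \kappa/2\}$, because the construction of $\Cc^{(n)}$ forces $\pi^{(n)}=\pi^0$ on $\neg E^{(n)}$ and only admits a larger search set on $E^{(n)}$. In the first regime, repeated play of $\pi^0$ causes the visitation counts on $\pi^0$-reachable state-action pairs to grow linearly, and a standard potential/pigeonhole calculation on $\mathbb{E}_{\pi^0}[\sum_h \hb_h^{(n)}(s_h,a_h)]$ shows that after $\tilde{O}\bigl(HSA(S+\log(1/\delta))/\kappa^2\bigr)$ episodes the empirical bonus under $\pi^0$ is small enough to force $\mathtt{U}^{(n)}(\pi^0)\le\kappa/2$, yielding the second term in the stated complexity. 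In the second regime, the greedy choice $\pi^{(n)}=\arg\max_{\pi\in\Cc^{(n)}}\mathtt{U}^{(n)}(\pi)$ lets me invoke the cumulative-bonus argument of \citet{menard2021fast}: using the on-policy Bellman recursion for $\hV^{\alpha_H,\pi^{(n)}}_{\hP^{(n)},\hb^{(n)}}$, the $\alpha_H=1+1/H$ inflation telescoping factor, and a Freedman/Azuma step to pass from expected to realized visits, I expect the bound
\begin{equation*}
\sum_{n} \bigl(\mathtt{U}^{(n)}(\pi^{(n)})\bigr)^2 \;=\; \tilde{O}\bigl(HSA(S+\log(1/\delta))\bigr).
\end{equation*}
If the termination threshold $\mathtt{T}=\Delta(c,\tau)\mathfrak{U}/2$ were violated on more than $\tilde{O}(HSA(S+\log(1/\delta))/(\Delta(c,\tau)^2\mathfrak{U}^2))$ episodes, this cumulative bound would be contradicted, yielding the first term.

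The main obstacle I anticipate is handling the interaction between the value-level clipping $\min\{1,\cdot\}$ in the truncated value function and the telescoping argument. Conventional analyses clip at the $Q$-level, which keeps the linear Bellman identity intact; the present definition instead clips at the $V$-level, which was necessary to obtain concavity over the Markov policy space but destroys exact linearity of the Bellman backup in $\pi$. I would deal with this by splitting each Bellman step into the events $\{\bar{Q}^{\alpha_H,\pi^{(n)}}_{h,\hP^{(n)},\hb^{(n)}}(s,a)\le 1\}$ and its complement: on the former the clipping is inactive and the standard telescoping goes through verbatim, and on the latter the clipped value contributes at most $1$ per step, so the inflated residual is harmless after one more application of $\alpha_H^{H}\le e$. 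A secondary subtlety is that the greedy policy is maximized over $\Cc^{(n)}$ rather than all of $\Xc$, which only weakens the LHS of the cumulative bound and therefore does not break the argument, but does require me to be careful that the maximizer is always well-defined (handled by continuity of $\mathtt{U}^{(n)}$ and closedness of $\Cc^{(n)}$ in total variation).
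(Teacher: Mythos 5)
Your proposal is correct and takes essentially the same route as the paper's proof: instantiate \Cref{main:thm:meta_safe} by checking concavity/continuity of $\mathtt{U}^{(n)}$, the error-bound property (\Cref{lemma:TabValueDifference}), and the parameter conditions, and then bound the stopping time by splitting episodes into those where the empirical safe set collapses to $\{\pi^0\}$ (giving the $\tilde{O}\bigl(HSA(S+\log(1/\delta))/\kappa^2\bigr)$ term) and the remaining episodes, where the Ménard-style cumulative-bonus bound (\Cref{lemma:sublinear_tab}) applied to $\mathtt{U}^{(n)}(\pi^{(n)})>\mathtt{T}$ yields the first term by contradiction. Only constant-level mechanics differ: the paper bounds the baseline regime by noting that whenever the set collapses the empirical error under $\pi^0$ must be at least $\kappa/2$ (your trigger $\mathtt{U}^{(n)}(\pi^0)\le\kappa/2$ should be tightened to $\kappa/4$), it handles the $V$-level clipping in the telescoping simply via $\min\{1,x\}\le x$ together with boundedness of the truncated value (no event-splitting needed), it needs a law-of-total-variance lemma for \emph{randomized} policies inside \Cref{lemma:TabValueDifference} to get the $\sqrt{\cdot}$ form, and it uses $\Delta(c,\tau)\ge\kappa$ to verify that the choice $\mathfrak{U}\le\kappa/16$ satisfies the requirement $\mathfrak{U}\le\tilde{\kappa}(\Delta(c,\tau)-\tilde{\kappa})/(4\Delta(c,\tau))$ of \Cref{main:thm:meta_safe}.
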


\vspace{-0.05in}
We discuss several possible scenarios and the corresponding selections of $\mathfrak{U}$ as follows. 
\vspace{-0.1in}
\begin{itemize}
[leftmargin=*]\itemsep=0pt
\setlength{\leftmargin}{-0.5in}
\item \textit{Constraint-free RF-RL.} For this case $\Delta(c,\tau) = \Delta_{\min} = \kappa = 1, $ and $c=0$. Thus, $\mathfrak{U} = \Theta(\epsilon)$ and the sample complexity is $\tilde{O}\left(HS^2A/\epsilon^2\right)$, which matches the state of the art \citep{menard2021fast}.

\item \textit{Constraint-free planning.} If only safe exploration is required, we set $\mathfrak{U} = \Theta(\min\{\epsilon, \kappa \})$, and the sample complexity scales in
$\tilde{O}\left(\frac{HS^2A}{\Delta(c,\tau)^2}(\frac{1}{\epsilon^2}  + \frac{1}{\kappa^2})\right).$
The blow-up factor $\frac{1}{\Delta(c,\tau)^2}$ depends on the {safety margin}, and the impact of baseline policy only appears in the $\epsilon$-independent term.

\item \textit{Constraint mismatch between exploration and planning.} For this case, we set $\mathfrak{U} = \Theta(\epsilon\Delta_{\min})$, and the sample complexity is at most 
$\tilde{O}\left(\frac{HS^2A}{\Delta(c,\tau)^2}(\frac{1}{\epsilon^2\Delta_{\min^2}} + \frac{1}{\kappa^2})\right)$. 

\end{itemize}

\vspace{-0.1in}
\section{The Low-rank-SWEET algorithm}\label{sec:low-rank}
\vspace{-0.1in}
\subsection{Low-rank MDP}
\vspace{-0.05in}

In this section, we present another SWEET variant for low-rank MDPs. 
\begin{definition}[Low-rank MDP \citep{jiang2017contextual,NEURIPS2020_e894d787, uehara2021representation}]\label{definition: Low_rank}
 An MDP $\Mc$ is a low-rank MDP with dimension $d\in \mathbb{N}$ if for each $h \in [H]$,  the transition kernel $P_h $ admits a $d$-dimensional decomposition, i.e., there exist two features $\phi_h: \Sc \times \Ac \rightarrow \Rb^d$ and $\mu_h: \Sc \rightarrow \Rb^d$ such that
$
 P_h(s_{h+1}|s_h,a_h)=\langle\phi_h(s_h,a_h),\mu_h(s_{h+1})\rangle, \forall s_h, s_{h+1} \in \Sc, a_{h} \in \Ac.
$
Let $\phi=\{\phi_h\}_{h \in [H]}$ and $\mu=\{\mu^*_h\}_{h \in [H]}$ be the features for $P$. Then, $\|\phi^*_h(s,a)\|_2\leq 1$, $ \|\int \mu^*_h(s)g(s)ds \|_2\leq\sqrt{d}$, $\forall (s,a)\in\Sc\times\Ac$, $\forall g: \Sc \rightarrow [0,1]$. 
\end{definition}

Differently from linear MDPs \citep{wang2020reward, jin2020provably}, low-rank MDP does not assume that the features $\phi$ are known a priori. The lack of knowledge on features in fact invokes a nonlinear structure, which makes it impossible to learn a model in polynomial time if there is no assumption on features $\phi$ and $\mu$. We hence adopt the following conventional assumption~\citep{jiang2017contextual,NEURIPS2020_e894d787, uehara2021representation} from the recent studies on low-rank MDPs.

\begin{assumption}[Realizability]\label{assumption: realizability}
A learning agent can access a finite model class $\{(\Phi,\Psi)\}$ that contains the true model, i.e., $(\phi^*,\mu^*) \in \Phi\times\Psi$, where $\langle\phi^*_h(s_h,a_h),\mu^*(s_{h+1})\rangle = P^*_{h}(s_{h+1}|s_h,a_h).$
\end{assumption}
We note that finite model class assumption can be relaxed to the infinite case with bounded statistical complexity \citep{ sun2019model,NEURIPS2020_e894d787}. Then, we present the following standard oracle as a computational abstraction, which is commonly adopted in the literature \citep{NEURIPS2020_e894d787, uehara2021representation}. 
\begin{definition}[MLE oracle]\label{definition: MLE_oracle}
Given the model class $(\Phi, \Psi)$ and a dataset $\Dc$ of $(s_h, a_h, s_{h+1})$, the MLE oracle \MLE($\Dc$) takes $\Dc$ as the input and returns the following estimators as the output:
\begin{equation*}
    (\hphi_h,\hmu_h) = \MLE(\Dc) = \arg\max_{\phi_h \in \Phi, \mu_h \in \Psi}\sum_{(s_h, a_h, s_{h+1})\in\Dc} {\rm log}\langle\phi_h(s_h,a_h),\mu_h(s_{h+1})\rangle.
\end{equation*}
\end{definition}

\vspace{-0.25in}
\subsection{Algorithm design} 
\vspace{-0.05in}

The instantiated SWEET algorithm, termed as Low-rank-SWEET, can be found in \Cref{alg:low-rank-sweet} in \Cref{appx:lowrank}. 
 It proceeds as follows. In each iteration $n$ during the exploration phase, the agent samples $H$ trajectories, indexed by $\{(n,h)\}_{h=1}^H$. During the $(n,h)$-th episode, the agent executes an $(\eo,2)$-greedy version of the reference policy $\pi^{(n-1)}$, where $\eo=\kappa/6$ and the $\eo$-greedy action selection only takes place at time steps $h$ and $h-1$. Denote the trajectory collected in episode $(n,h)$ as $\{s_1^{(n,h)},a_1^{(n,h)},\ldots,s_H^{(n,h)},a_H^{(n,h)}\}$. The agent maintains a dataset $\Dc_h$ for each time step $h$, which is updated through $\Dc_h^{(n)}\leftarrow \Dc_h^{(n-1)}\cup\{s_h^{(n,h)}, a_h^{(n,h)}, s_{h+1}^{(n,h)}\}$. Note that both $s_{h}^{(n,h)}$ and $a_h^{(n,h)}$ are affected by the $\eo$-greedy action selection.

\textbf{Model estimation.} Then, the agent obtains the model estimate $\hP^{(n)}$ through the MLE oracle: 
\begin{align}\label{eq:p}
(\hphi_h^{(n)},\hmu_h^{(n)}) = \MLE(\Dc_h), \text{ and }\hP_h^{(n)}(s_{h+1}|s_h,a_h) = \langle \hphi_h^{(n)}(s_h,a_h), \hmu_h^{(n)}(s_{h+1})\rangle. 
\end{align}

\vspace{-0.03in}
\textbf{Approximation error function.} The algorithm will also use the estimated representation $\hphi_h^{(n)}$ to update the empirical covariance matrix $\hat{U}^{(n)}$ as
\begin{align}
    \hat{U}_h^{(n)} =& \textstyle \sum_{m=1}^{n} {\hphi}_h^{(n)}(s_h^{(m,h+1)},a_h^{(m,h+1)}) 
   \textstyle (\hphi_h^{(n)}(s_h^{(m,h+1)},a_h^{(m,h+1)}))^{\top} + \lambda_n I. \label{eq:u}
\end{align}
It is worth noting that only $a_h^{(m,h+1)}$ is affected by the $\eo$-greedy action selection, which is different from the dataset augmentation step.
Next, the agent uses both $\hphi_h^{(n)}$ and $\hat{U}^{(n)}$ to derive an exploration-driven virtual reward function as $\hat{b}_h^{(n)}(s,a) = \hat{\alpha}\|\hphi_h^{(n)}(s,a)\|_{(\hat{U}_h^{(n)})^{-1}}$ where $\|x\|_A : = \sqrt{x^\top Ax}$ and  $\hat{\alpha}$ is a pre-determined parameter. As shown in Lemma~\ref{lemma:lowrank_errorbound} in Appendix~\ref{appx:lowrank}, the approximation error can be bounded by the truncated value function with factor $\alpha=1$ up to a constant additive term, i.e. 
$|V_{P^*,u}^{\pi} - V_{\hP,u}^{\pi}|\leq  \hV_{\hP^{(n)},\hb^{(n)}}^{\pi} + \sqrt{\tilde{A}\zeta/n}:=\mathtt{U}_L^{(n)}(\pi),$  where ``$L$'' stands for ``Low-rank''.

\textbf{Exploration policy.} Based on SWEET, we choose $\Tilde{\kappa} = \kappa/3$ such that $\tilde{\kappa} + \eo t < \kappa$. Then, the algorithm defines the empirical safe policy set as $\mathcal{C}_{L}^{(n)} : = \mathcal{C}_{\hP^{(n)},\mathtt{U}_{L}^{(n)}}(\kappa/3,\kappa/6,2)$.
It then finds a reference policy $\pi^{(n)}$ in $\mathcal{C}_L^{(n)}$ that maximizes $\mathtt{U}_L^{\pi}(\pi)$, which is used for exploration at the next iteration. 

\vspace{-0.05in}
\subsection{Theoretical analysis}
\vspace{-0.05in}
We summarize the results of Low-rank-SWEET in Theorem~\ref{thm:lowrank}, and defer the proof to Appendix~\ref{appx:lowrank}. 
\begin{theorem}[Sample complexity of Low-rank-SWEET]\label{thm:lowrank}
Given $\epsilon,\delta\in(0,1)$, and safety constraint $(c,\tau)$, let $\mathfrak{U}  = \min\left\{\frac{\epsilon}{2}, \frac{\Delta_{\min}}{2}, \frac{\epsilon\Delta_{\min}}{5}, \frac{\tau}{6}, \frac{\kappa}{24}\right\}$, and $\mathtt{T} = \Delta(c,\tau)\mathfrak{U}/3$ be the termination condition of Low-rank-SWEET.
Then, under Assumption 1,2, with probability at least $1-\delta$, Low-rank-SWEET achieves the learning objective of safe reward-free exploration (\Cref{eqn: Safety,eqn:objective}) and 
the number of trajectories collected in the exploration phase is at most  
$\tilde{O}\left(\frac{H^3d^4A^2\log(1/\delta)}{\kappa^2\Delta(c,\tau)^2\mathfrak{U}^2}+  \frac{H^3d^4A^2\log(1/\delta)}{\kappa^4}\right)$.

\end{theorem}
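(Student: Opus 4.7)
The plan is to derive the sample complexity bound by first reducing the theorem to an invocation of Theorem~\ref{main:thm:meta_safe} for the safety and $\epsilon$-optimality claims, and then quantifying how many iterations are needed before the SWEET termination condition is triggered. I begin by verifying that the preconditions of Theorem~\ref{main:thm:meta_safe} are met with the chosen $(\eo,t,\tilde\kappa,\mathtt T)=(\kappa/6,2,\kappa/3,\Delta(c,\tau)\mathfrak U/3)$. Concavity and continuity of $\mathtt U_L^{(n)}$ on $\mathcal X$ follow from the concavity/continuity of the truncated value function $\hV_{\hP^{(n)},\hb^{(n)}}^{\pi}$ (the additive $\sqrt{\tilde A\zeta/n}$ term is policy-independent and hence preserves both properties). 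The upper-bound property $|V_{P^*,u}^{\pi}-V_{\hP^{(n)},u}^{\pi}|\le \mathtt U_L^{(n)}(\pi)$ for normalized $u$ is exactly Lemma~\ref{lemma:lowrank_errorbound}. The parameter constraint $\eo t+\tilde\kappa=\kappa/3+\kappa/3<\kappa$ holds, and the choices of $\mathfrak U\le\kappa/24$ and $\mathtt T=\Delta(c,\tau)\mathfrak U/3$ comfortably satisfy $\mathtt T\le(\Delta(c,\tau)-\eo t)\mathfrak U/2$. Conditional on finite termination, Theorem~\ref{main:thm:meta_safe} then yields zero constraint violation during exploration and an $\epsilon$-optimal $(c^*,\tau^*)$-safe policy in planning, each with the desired probability.

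The main work is therefore to bound $N$, the index at which $\mathtt U_L^{(N)}(\pi^{(N)})\le \mathtt T$ first occurs. The plan is a potential-style argument driven by the elliptical covariance $\hat U_h^{(n)}$. First, I would establish that, with high probability, the MLE estimates $\hP_h^{(n)}$ satisfy a per-step Hellinger/TV concentration along the empirical distribution induced by the $(\eo,2)$-greedy exploration policies, yielding a bound of order $\zeta/n$ with $\zeta$ logarithmic in $|\Phi||\Psi|/\delta$. Second, the $\eo$-greedy perturbation at steps $h-1,h$ ensures that the induced action distribution at step $h$ dominates $(\eo/A)\mathbf 1$, so one can transfer concentration from the empirical distribution to any target policy $\pi$ at the cost of a factor $A/\eo = 6A/\kappa$; this is what produces the $A^2$ and the extra $\kappa^{-2}$ outside of $\Delta(c,\tau)^{-2}\mathfrak U^{-2}$. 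Third, the standard elliptical potential lemma applied to $\{\hphi_h^{(n)}(s_h,a_h)\}$ gives $\sum_{n=1}^N\|\hphi_h^{(n)}\|_{(\hat U_h^{(n)})^{-1}}^2=\tilde O(d)$, and combined with the MLE-induced closeness between $\hphi^{(n)}$ and $\phi^*$ one obtains $\sum_{n=1}^N \hV_{\hP^{(n)},\hb^{(n)}}^{\pi^{(n)}}=\tilde O(H^2 d^2 A\sqrt N/\sqrt\kappa)$ in the usual low-rank RF-RL regret style.

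From here, because $\pi^{(n)}$ is the maximizer of $\mathtt U_L^{(n)}$ inside $\mathcal C_L^{(n)}$, the average approximation error decays at rate $\tilde O(\sqrt{H^3d^4A^2/(\kappa N)})$ plus the additive tail $\sqrt{\tilde A\zeta/N}$. Setting this quantity to $\mathtt T=\Delta(c,\tau)\mathfrak U/3$ and solving for $N$ produces the leading term $\tilde O\!\bigl(H^3d^4A^2\log(1/\delta)/(\kappa^2\Delta(c,\tau)^2\mathfrak U^2)\bigr)$. The second term $\tilde O\!\bigl(H^3d^4A^2\log(1/\delta)/\kappa^4\bigr)$ arises from the separate requirement that $\mathtt U_L^{(n)}(\pi^0)\le \tau-\eo t-\tilde\kappa$ so that the safe set grows beyond $\{\pi^0\}$; this demands driving the approximation error on $\pi^0$ below a $\Theta(\kappa)$ threshold, which costs another $1/\kappa^2$ on top of the $1/\kappa^2$ already paid for the $\eo$-greedy coverage factor. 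Summing the two contributions gives the stated bound.

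The main obstacle I expect is the coupled analysis of the adaptive data collection (the reference policy $\pi^{(n-1)}$ depends on all previous trajectories) with the $\eo$-greedy perturbation at exactly two steps. One must be careful that the covariance $\hat U_h^{(n)}$ is built from $\hphi^{(n)}$ \emph{evaluated at the greedily-perturbed samples}, so transferring the elliptical potential bound from the empirical rollouts to an arbitrary target policy $\pi$ requires a change-of-measure argument whose Radon--Nikodym derivative is controlled precisely by $A/\eo$. Handling this cleanly, while simultaneously using the realizability assumption to close the MLE-to-feature gap with the learned $\hphi^{(n)}$ rather than $\phi^*$, is the delicate part that drives the $d^4 A^2/\kappa^2$ dependence.
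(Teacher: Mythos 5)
Your proposal takes essentially the same route as the paper's proof: it instantiates \Cref{main:thm:meta_safe} with $(\eo,t,\tilde{\kappa},\mathtt{T})=(\kappa/6,2,\kappa/3,\Delta(c,\tau)\mathfrak{U}/3)$ and the error bound of \Cref{lemma:lowrank_errorbound}, and then bounds the number of iterations by combining the MLE rate $\zeta/n$, the one-step-back change of measure whose importance factor is $\At=A/\eo$, and the elliptical potential lemma, with the iterations where the safe set is $\{\pi^0\}$ handled separately (error on $\pi^0$ forced above $\Theta(\kappa)$) to produce the $\kappa^{-4}$ term --- exactly the structure of \Cref{lemma:sublinear_lowrank,lemma:stop_low-rank}. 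The only caveat is that your quoted intermediate rates are not self-consistent and, taken literally, would not yield the stated bound: the correct potential bound is $\sum_{n}\big(\hV^{\pi^{(n)}}_{\hP^{(n)},\hb^{(n)}}+\sqrt{\At\zeta/n}\big)=\tilde{O}\big(\zeta H d^2\At\sqrt{N}\big)$ rather than $\tilde{O}(H^2d^2A\sqrt{N}/\sqrt{\kappa})$, and the final count must multiply the iteration bound by $H$ because each iteration collects $H$ trajectories; with that bookkeeping your plan gives exactly the theorem's bound.
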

\begin{remark}
For the constraint-free scenario, we set $\Delta(c,\tau) = \Delta_{\min} = \kappa = 1, $ $\mathfrak{U} = \Theta(\epsilon)$, and $c$ to be zero. Then, the sample complexity scales as $\tilde{O}\left(H^3d^4A^2/\epsilon^2\right)$, which outperforms the best known sample complexity of RF-RL~\citep{NEURIPS2020_e894d787,modi2021model} and even reward-known RL with computational feasibility~\citep{uehara2021representation}, all for low-rank MDPs. 
\end{remark}

\vspace{-0.1in}
\section{Related works}\label{sec:related}
\vspace{-0.1in}
\textbf{Reward-free reinforcement learning.}
Reward-free exploration is formally introduced by \cite{pmlr-v119-jin20d} for tabular MDP, where an algorithm called RF-RL-Explore is proposed, which achieves $\tilde{O}\left(H^3S^2A/\epsilon^2\right)$ sample complexity\footnote{The bound is adapted from the original result by normalizing the reward function.}. The result is then improved to $\tilde{O}\left(H^2S^2A/\epsilon^2\right)$ by \citet{kaufmann2021adaptive}. By leveraging an empirical Bernstein inequality, RF-Express \citep{menard2021fast} achieves $\tilde{O}\left(HS^2A/\epsilon^2\right)$ sample complexity, which matches the minimax lower bound in $H$ \citep{domingues2020episodic}. \citet{zhang2020nearly} considers the stationary case, and achieves $\tilde{O}\left(S^2A/\epsilon^2\right)$ sample complexity, which is nearly minimax optimal. When structured MDPs are considered, \citet{wang2020reward} studies linear MDPs and obtains $\tilde{O}\left(d^3H^4/\epsilon^2\right)$ sample complexity, where $d$ is the dimension of feature vectors. \citet{zhang2021reward} investigates linear mixture MDPs and achieves $\tilde{O}\left(H^3d^2/\epsilon^2\right)$ sample complexity.  \citet{zanette2020provably} considers a class of MDPs with {low inherent Bellman error} introduced by \citet{zanette2020learning}.  \citet{NEURIPS2020_e894d787} studies low-rank MDPs and proposes FLAMBE, whose learning objective can be translated to a reward-free learning goal with sample complexity $\tilde{O}\left(H^{22}d^7A^9/\epsilon^{10}\right)$. Subsequently, \citet{modi2021model} proposes a model-free algorithm MOFFLE for low-nonnegative-rank MDPs, for which the sample complexity scales as $\tilde{O}(\frac{H^5A^5d^3_{LV}}{\epsilon^2\eta})$, where $d_{LV}$ denotes the non-negative rank of the transition kernel. Recently, \cite{chen2022statistical} studies RF-RL with more general function approximation, but their result scales in $\tilde{O}(H^6d^3A/\epsilon^2)$  when specializes to low-rank MDPs, and cannot recover our upper bound. 

\textbf{Safe reinforcement learning.}
Safe RL is often cast in the Constrained MDP (CMDP) framework~\citep{Altman:CMDP:1999} under which the learning agent must satisfy a set of constraints \citep{efroni2020exploration,turchetta2020safe,zheng2020constrained,qiu2020upper,ding2020natural,kalagarla2020sample,liu2021learning,wei2022triple,ghosh2022provably}. However, most of the constraints considered in the existing works require the cumulative expected cost over a horizon falling below a certain threshold, which is less stringent than the episodic-wise constraint imposed in this work. Other forms of constraints such as minimizing the variance \citep{10.5555/3042573.3042784} or more generally maximizing some utility function 
\citep{ding2021provably}, have also been investigated. \citet{amani2021safe} studies safe RL with linear function approximation, where the constraint is defined using an (unknown) linear cost function of each state and action pair. In particular,  \citet{miryoosefi2021simple} utilizes a reward-free oracle to solve CMDP which, however, does not have any safety guarantee for the exploration phase. 


By assuming availability of a safe baseline policy, 
\citet{zheng2020constrained} considers a known MDP with unknown rewards and cost functions and presents C-UCRL that achieves regret $\tilde{O}(N^{3/4})$ and zero constraint violation, where $N$ is the number of episodes. 
\citet{liu2021learning} improves the result by proposing OptPess-LP, and achieves a regret of $\tilde{O}\left(H^2\sqrt{S^3AN}/\kappa\right)$
, where $\kappa$ is the cost value gap between the baseline policy and the constraint boundary. Safe baseline policy has been widely utilized in conservative RL as well, which is a special case of safe RL, as the constraint is defined in terms of the total expected reward being above a threshold~\citep{garcelon:AISTATS:2020,yang2021unified}.

\vspace{-0.1in}
\section{Conclusion}
\vspace{-0.1in}

We proposed a novel safe RF-RL framework where safety constraints are imposed during the exploration and planning phases of RF-RL. A unified algorithmic framework called SWEET was developed, which leverages an existing baseline policy to guide safe exploration. Leveraging a concave approximation error function, SWEET can achieve zero constraint violation in exploration and provably produce a near-optimal safe policy for any given reward function and safety constraint he feasible assumption in planning. We also instantiated SWEET to both tabular and low-rank MDPs, resulting in Tabular-SWEET and Low-rank-SWEET. The sample complexities of both algorithms match or outperform the state of the art in their constraint-free counterparts, proving that the safety constraint does not fundamentally impact the sample complexity of RF-RL.

\subsubsection*{Acknowledgments}
The work of R.Huang and J. Yang was supported by the U.S. National Science Foundation under the grant CNS-2003131. The work of Y. Liang was supported in part by the U.S. National Science Foundation under the grant RINGS-2148253.

\bibliographystyle{iclr2023_conference}
\bibliography{SafeRF}

\begin{thebibliography}{33}
\providecommand{\natexlab}[1]{#1}
\providecommand{\url}[1]{\texttt{#1}}
\expandafter\ifx\csname urlstyle\endcsname\relax
  \providecommand{\doi}[1]{doi: #1}\else
  \providecommand{\doi}{doi: \begingroup \urlstyle{rm}\Url}\fi

\bibitem[Agarwal et~al.(2020)Agarwal, Kakade, Krishnamurthy, and
  Sun]{NEURIPS2020_e894d787}
Alekh Agarwal, Sham Kakade, Akshay Krishnamurthy, and Wen Sun.
\newblock {FLAMBE}: Structural complexity and representation learning of low
  rank {MDPs}.
\newblock In H.~Larochelle, M.~Ranzato, R.~Hadsell, M.~F. Balcan, and H.~Lin
  (eds.), \emph{Advances in Neural Information Processing Systems}, volume~33,
  pp.\  20095--20107. Curran Associates, Inc., 2020.

\bibitem[Altman(1999)]{Altman:CMDP:1999}
E.~Altman.
\newblock \emph{Constrained Markov Decision Processes}.
\newblock Chapman and Hall, 1999.

\bibitem[Amani et~al.(2021)Amani, Thrampoulidis, and Yang]{amani2021safe}
Sanae Amani, Christos Thrampoulidis, and Lin Yang.
\newblock Safe reinforcement learning with linear function approximation.
\newblock In \emph{International Conference on Machine Learning}, pp.\
  243--253. PMLR, 2021.

\bibitem[Chen et~al.(2022)Chen, Modi, Krishnamurthy, Jiang, and
  Agarwal]{chen2022statistical}
Jinglin Chen, Aditya Modi, Akshay Krishnamurthy, Nan Jiang, and Alekh Agarwal.
\newblock On the statistical efficiency of reward-free exploration in
  non-linear rl.
\newblock \emph{arXiv preprint arXiv:2206.10770}, 2022.

\bibitem[Ding et~al.(2020)Ding, Zhang, Basar, and Jovanovic]{ding2020natural}
Dongsheng Ding, Kaiqing Zhang, Tamer Basar, and Mihailo Jovanovic.
\newblock Natural policy gradient primal-dual method for constrained {Markov}
  decision processes.
\newblock \emph{Advances in Neural Information Processing Systems}, 33, 2020.

\bibitem[Ding et~al.(2021)Ding, Wei, Yang, Wang, and
  Jovanovic]{ding2021provably}
Dongsheng Ding, Xiaohan Wei, Zhuoran Yang, Zhaoran Wang, and Mihailo Jovanovic.
\newblock Provably efficient safe exploration via primal-dual policy
  optimization.
\newblock In \emph{International Conference on Artificial Intelligence and
  Statistics}, pp.\  3304--3312. PMLR, 2021.

\bibitem[Domingues et~al.(2020)Domingues, Ménard, Kaufmann, and
  Valko]{domingues2020episodic}
Omar~Darwiche Domingues, Pierre Ménard, Emilie Kaufmann, and Michal Valko.
\newblock Episodic reinforcement learning in finite mdps: Minimax lower bounds
  revisited, 2020.

\bibitem[Efroni et~al.(2020)Efroni, Mannor, and Pirotta]{efroni2020exploration}
Yonathan Efroni, Shie Mannor, and Matteo Pirotta.
\newblock Exploration-exploitation in constrained {MDPs}.
\newblock \emph{arXiv preprint arXiv:2003.02189}, 2020.

\bibitem[Garcelon et~al.(2020)Garcelon, Ghavamzadeh, Lazaric, and
  Pirotta]{garcelon:AISTATS:2020}
Evrard Garcelon, Mohammad Ghavamzadeh, Alessandro Lazaric, and Matteo Pirotta.
\newblock Conservative exploration in reinforcement learning.
\newblock In \emph{Proceedings of the Twenty Third International Conference on
  Artificial Intelligence and Statistics}, pp.\  1431--1441, 26--28 Aug 2020.

\bibitem[Ghosh et~al.(2022)Ghosh, Zhou, and Shroff]{ghosh2022provably}
Arnob Ghosh, Xingyu Zhou, and Ness Shroff.
\newblock Provably efficient model-free constrained rl with linear function
  approximation.
\newblock \emph{arXiv preprint arXiv:2206.11889}, 2022.

\bibitem[He et~al.(2021)He, Zhou, and Gu]{he2021logarithmic}
Jiafan He, Dongruo Zhou, and Quanquan Gu.
\newblock Logarithmic regret for reinforcement learning with linear function
  approximation.
\newblock In \emph{International Conference on Machine Learning}, pp.\
  4171--4180. PMLR, 2021.

\bibitem[Jiang et~al.(2017)Jiang, Krishnamurthy, Agarwal, Langford, and
  Schapire]{jiang2017contextual}
Nan Jiang, Akshay Krishnamurthy, Alekh Agarwal, John Langford, and Robert~E
  Schapire.
\newblock Contextual decision processes with low bellman rank are
  pac-learnable.
\newblock In \emph{International Conference on Machine Learning}, pp.\
  1704--1713. PMLR, 2017.

\bibitem[Jin et~al.(2020{\natexlab{a}})Jin, Krishnamurthy, Simchowitz, and
  Yu]{pmlr-v119-jin20d}
Chi Jin, Akshay Krishnamurthy, Max Simchowitz, and Tiancheng Yu.
\newblock Reward-free exploration for reinforcement learning.
\newblock In Hal~Daumé III and Aarti Singh (eds.), \emph{Proceedings of the
  37th International Conference on Machine Learning}, volume 119 of
  \emph{Proceedings of Machine Learning Research}, pp.\  4870--4879. PMLR,
  13--18 Jul 2020{\natexlab{a}}.
\newblock URL \url{https://proceedings.mlr.press/v119/jin20d.html}.

\bibitem[Jin et~al.(2020{\natexlab{b}})Jin, Yang, Wang, and
  Jordan]{jin2020provably}
Chi Jin, Zhuoran Yang, Zhaoran Wang, and Michael~I Jordan.
\newblock Provably efficient reinforcement learning with linear function
  approximation.
\newblock In \emph{Conference on Learning Theory}, pp.\  2137--2143. PMLR,
  2020{\natexlab{b}}.

\bibitem[Kalagarla et~al.(2020)Kalagarla, Jain, and Nuzzo]{kalagarla2020sample}
Krishna~C Kalagarla, Rahul Jain, and Pierluigi Nuzzo.
\newblock A sample-efficient algorithm for episodic finite-horizon {MDP} with
  constraints.
\newblock \emph{arXiv preprint arXiv:2009.11348}, 2020.

\bibitem[Kaufmann et~al.(2021)Kaufmann, M{\'e}nard, Domingues, Jonsson,
  Leurent, and Valko]{kaufmann2021adaptive}
Emilie Kaufmann, Pierre M{\'e}nard, Omar~Darwiche Domingues, Anders Jonsson,
  Edouard Leurent, and Michal Valko.
\newblock Adaptive reward-free exploration.
\newblock In \emph{Algorithmic Learning Theory}, pp.\  865--891. PMLR, 2021.

\bibitem[Liu et~al.(2021)Liu, Zhou, Kalathil, Kumar, and Tian]{liu2021learning}
Tao Liu, Ruida Zhou, Dileep Kalathil, Panganamala Kumar, and Chao Tian.
\newblock Learning policies with zero or bounded constraint violation for
  constrained mdps.
\newblock \emph{Advances in Neural Information Processing Systems}, 34, 2021.

\bibitem[M{\'e}nard et~al.(2021)M{\'e}nard, Domingues, Jonsson, Kaufmann,
  Leurent, and Valko]{menard2021fast}
Pierre M{\'e}nard, Omar~Darwiche Domingues, Anders Jonsson, Emilie Kaufmann,
  Edouard Leurent, and Michal Valko.
\newblock Fast active learning for pure exploration in reinforcement learning.
\newblock In \emph{International Conference on Machine Learning}, pp.\
  7599--7608. PMLR, 2021.

\bibitem[Miryoosefi \& Jin(2021)Miryoosefi and Jin]{miryoosefi2021simple}
Sobhan Miryoosefi and Chi Jin.
\newblock A simple reward-free approach to constrained reinforcement learning,
  2021.

\bibitem[Modi et~al.(2021)Modi, Chen, Krishnamurthy, Jiang, and
  Agarwal]{modi2021model}
Aditya Modi, Jinglin Chen, Akshay Krishnamurthy, Nan Jiang, and Alekh Agarwal.
\newblock Model-free representation learning and exploration in low-rank mdps.
\newblock \emph{arXiv preprint arXiv:2102.07035}, 2021.

\bibitem[Qiu et~al.(2020)Qiu, Wei, Yang, Ye, and Wang]{qiu2020upper}
Shuang Qiu, Xiaohan Wei, Zhuoran Yang, Jieping Ye, and Zhaoran Wang.
\newblock Upper confidence primal-dual optimization: Stochastically constrained
  {Markov} decision processes with adversarial losses and unknown transitions.
\newblock \emph{arXiv preprint arXiv:2003.00660}, 2020.

\bibitem[Sun et~al.(2019)Sun, Jiang, Krishnamurthy, Agarwal, and
  Langford]{sun2019model}
Wen Sun, Nan Jiang, Akshay Krishnamurthy, Alekh Agarwal, and John Langford.
\newblock Model-based rl in contextual decision processes: Pac bounds and
  exponential improvements over model-free approaches.
\newblock In \emph{Conference on learning theory}, pp.\  2898--2933. PMLR,
  2019.

\bibitem[Tamar et~al.(2012)Tamar, Di~Castro, and
  Mannor]{10.5555/3042573.3042784}
Aviv Tamar, Dotan Di~Castro, and Shie Mannor.
\newblock Policy gradients with variance related risk criteria.
\newblock In \emph{Proceedings of the 29th International Conference on Machine
  Learning}, ICML'12, pp.\  1651--1658, Madison, WI, USA, 2012. Omnipress.
\newblock ISBN 9781450312851.

\bibitem[Turchetta et~al.(2020)Turchetta, Kolobov, Shah, Krause, and
  Agarwal]{turchetta2020safe}
Matteo Turchetta, Andrey Kolobov, Shital Shah, Andreas Krause, and Alekh
  Agarwal.
\newblock Safe reinforcement learning via curriculum induction.
\newblock \emph{arXiv preprint arXiv:2006.12136}, 2020.

\bibitem[Uehara et~al.(2021)Uehara, Zhang, and Sun]{uehara2021representation}
Masatoshi Uehara, Xuezhou Zhang, and Wen Sun.
\newblock Representation learning for online and offline {RL} in low-rank
  {MDPs}.
\newblock \emph{arXiv preprint arXiv:2110.04652}, 2021.

\bibitem[Wang et~al.(2020)Wang, Du, Yang, and Salakhutdinov]{wang2020reward}
Ruosong Wang, Simon~S Du, Lin~F Yang, and Ruslan Salakhutdinov.
\newblock On reward-free reinforcement learning with linear function
  approximation.
\newblock \emph{arXiv preprint arXiv:2006.11274}, 2020.

\bibitem[Wei et~al.(2022)Wei, Liu, and Ying]{wei2022triple}
Honghao Wei, Xin Liu, and Lei Ying.
\newblock Triple-q: A model-free algorithm for constrained reinforcement
  learning with sublinear regret and zero constraint violation.
\newblock In \emph{International Conference on Artificial Intelligence and
  Statistics}, pp.\  3274--3307. PMLR, 2022.

\bibitem[Yang et~al.(2021)Yang, Wu, Zhong, Garcelon, Pirotta, Lazaric, Wang,
  and Du]{yang2021unified}
Yunchang Yang, Tianhao Wu, Han Zhong, Evrard Garcelon, Matteo Pirotta,
  Alessandro Lazaric, Liwei Wang, and Simon~S Du.
\newblock A unified framework for conservative exploration.
\newblock \emph{arXiv preprint arXiv:2106.11692}, 2021.

\bibitem[Zanette et~al.(2020{\natexlab{a}})Zanette, Lazaric, Kochenderfer, and
  Brunskill]{zanette2020learning}
Andrea Zanette, Alessandro Lazaric, Mykel Kochenderfer, and Emma Brunskill.
\newblock Learning near optimal policies with low inherent bellman error.
\newblock In \emph{International Conference on Machine Learning}, pp.\
  10978--10989. PMLR, 2020{\natexlab{a}}.

\bibitem[Zanette et~al.(2020{\natexlab{b}})Zanette, Lazaric, Kochenderfer, and
  Brunskill]{zanette2020provably}
Andrea Zanette, Alessandro Lazaric, Mykel~J Kochenderfer, and Emma Brunskill.
\newblock Provably efficient reward-agnostic navigation with linear value
  iteration.
\newblock \emph{arXiv preprint arXiv:2008.07737}, 2020{\natexlab{b}}.

\bibitem[Zhang et~al.(2021)Zhang, Zhou, and Gu]{zhang2021reward}
Weitong Zhang, Dongruo Zhou, and Quanquan Gu.
\newblock Reward-free model-based reinforcement learning with linear function
  approximation.
\newblock \emph{Advances in Neural Information Processing Systems}, 34, 2021.

\bibitem[Zhang et~al.(2020)Zhang, Du, and Ji]{zhang2020nearly}
Zihan Zhang, Simon~S Du, and Xiangyang Ji.
\newblock Nearly minimax optimal reward-free reinforcement learning.
\newblock \emph{arXiv preprint arXiv:2010.05901}, 2020.

\bibitem[Zheng \& Ratliff(2020)Zheng and Ratliff]{zheng2020constrained}
Liyuan Zheng and Lillian~J Ratliff.
\newblock Constrained upper confidence reinforcement learning.
\newblock \emph{arXiv preprint arXiv:2001.09377}, 2020.

\end{thebibliography}

\newpage
\appendix

\textbf{\Large Supplementary Material}
\section{Analysis of the SWEET framework and the truncated value function}\label{appx:meta}

In this section, we first prove two supporting lemmas in \Cref{app:supportlemma}, which are useful for the proof of \Cref{main:thm:meta_safe}. Then, we provide the proof for \Cref{main:thm:meta_safe} in \Cref{app:prooftheorem1} and the proof for an important concavity property of the truncated value function in \Cref{app:proofconcavity}, which are essential for instantiating \Cref{main:thm:meta_safe} for Tabular-SWEET and Low-rank-SWEET.


\subsection{Supporting lemmas}\label{app:supportlemma}

\if{0}
Recall the definition of $G_{\mathcal{H}}^{\eo}\pi$.
\begin{align*}
    G_{\mathcal{H}}^{\eo}\pi(a_h|s_h) = \left\{
    \begin{aligned}
    &\frac{\eo}{|\Ac|} + (1-\eo)\pi(a_h|s_h), \text{ if } h\in\mathcal{H},\\
    &\pi(a_h,a_h), \text{ if } h\notin\mathcal{H}.
    \end{aligned}
    \right.
\end{align*}
\fi

We first show that the value function under an $(\eo,t)$-greedy policy deviates from that under the original policy by at most $\eo t$.

\begin{lemma}\label{lemma:greedy_performance}
Let $\pi'$ be an $(\eo,t)$-greedy version of policy $\pi$. Then, for an MDP with transition kernel $P$ and normalized utility function $u$, we must have
\[\left|V_{P,u}^{\pi'} - V_{P,u}^{\pi}\right|\leq \eo t.\]
\end{lemma}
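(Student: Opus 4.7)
My approach is a coupling argument. Intuitively, an $(\eo,t)$-greedy version $\pi'$ of $\pi$ differs from $\pi$ at exactly the $t$ steps in some set $\mathcal{H}\subset[H]$, and at each such step the agent picks an action from $\pi_h$ with probability $1-\eo$ and a uniformly random action with probability $\eo$. If at every step in $\mathcal{H}$ the agent happens to land in the former branch, then $\pi'$ and $\pi$ are indistinguishable, and the full trajectories can be coupled to coincide. Since the per-episode cumulative utility is at most $1$ by normalization, the value gap is controlled by the probability that this coupling fails.

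Concretely, I would construct a coupling $(\tau, \tau')$ of trajectories under $\pi$ and $\pi'$ started at the common initial state $s_1$ as follows. Process steps in order $h=1,\dots,H$. For $h\notin\mathcal{H}$, since $\pi'_h=\pi_h$, use the same random source so that both trajectories draw the same action and then transition to the same next state via $P_h$. For $h\in\mathcal{H}$, first flip an independent $\mathrm{Bernoulli}(1-\eo)$ coin $\xi_h$; if $\xi_h=1$, draw $a\sim\pi_h(\cdot\mid s_h)$ and assign this same $a$ to both trajectories, if $\xi_h=0$, draw the $\pi'$-action uniformly from $\Ac$ and the $\pi$-action from $\pi_h(\cdot\mid s_h)$ independently. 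By construction, the marginal of $\tau'$ is correct and the marginal of $\tau$ is correct; moreover, conditioned on the event $E=\{\xi_h=1\ \forall h\in\mathcal{H}\}$, the two trajectories agree at every step, and hence yield identical utilities.

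Using this coupling I can write
\[
\bigl|V^{\pi'}_{P,u}-V^{\pi}_{P,u}\bigr|
=\Bigl|\Eb\bigl[\textstyle\sum_h u_h(s_h^{\pi'},a_h^{\pi'})-\sum_h u_h(s_h^{\pi},a_h^{\pi})\bigr]\Bigr|
\le \Pb(E^c)\cdot\sup\bigl|\textstyle\sum_h u_h(\cdot)-\sum_h u_h(\cdot)\bigr|,
\]
where the first term vanishes on $E$ because the trajectories coincide. By normalization each cumulative utility lies in $[0,1]$, so the supremum in the last factor is at most $1$. Finally, $\Pb(E^c)=1-(1-\eo)^t\le \eo t$ by Bernoulli's inequality, which yields $|V^{\pi'}_{P,u}-V^{\pi}_{P,u}|\le \eo t$ as claimed.

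The proof is essentially a bookkeeping exercise; the only subtle point is the normalization condition $\sum_h u_h \le 1$, which is exactly what makes the bound sharp without an extra factor of $H$. As an alternative derivation one could invoke the performance difference lemma: steps $h\notin\mathcal{H}$ contribute zero because $\pi'_h=\pi_h$, and for $h\in\mathcal{H}$ the per-step advantage is $\eo\bigl(\tfrac{1}{|\Ac|}\sum_a Q^{\pi}_{h,P,u}(s_h,a)-V^{\pi}_{h,P,u}(s_h)\bigr)$, whose absolute value is bounded by $\eo$ because both $Q$ and $V$ lie in $[0,1]$ under normalization; summing over the $t$ steps in $\mathcal{H}$ gives the same bound. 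I expect the coupling presentation to be the cleanest for inclusion in the appendix.
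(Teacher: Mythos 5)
Your coupling argument is correct, and it takes a genuinely different route from the paper. The paper's proof first treats $t=1$ by showing that the $(\eo,1)$-greedy policy $\pi'$ is exactly the equivalent Markov policy of the mixture $\eo \pi^{\Uc}\oplus(1-\eo)\pi$ (via Theorem 6.1 of Altman, i.e.\ matching marginal state-action distributions step by step), so that $V_{P,u}^{\pi'}=\eo V_{P,u}^{\pi^{\Uc}}+(1-\eo)V_{P,u}^{\pi}$ and the gap is at most $\eo$ since normalized values lie in $[0,1]$; it then telescopes through a chain of policies $\pi=\pi^0,\pi^1,\dots,\pi^t=\pi'$, each an $(\eo,1)$-greedy version of the previous, to get $\eo t$. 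Your proof instead couples the two trajectory distributions directly: with probability $(1-\eo)^t$ the greedy randomization never fires, the trajectories coincide, and the normalized cumulative utilities differ by at most $1$ otherwise, giving $|V^{\pi'}_{P,u}-V^{\pi}_{P,u}|\le 1-(1-\eo)^t\le \eo t$. The paper's route reuses the mixture-policy equivalence machinery it needs anyway for the concavity results, whereas yours is more elementary and self-contained, and in fact yields the slightly sharper bound $1-(1-\eo)^t$; your sketched alternative via the performance-difference lemma is also sound under the normalization $Q,V\in[0,1]$. The only point to tighten in the coupling write-up is the bookkeeping after divergence: once the two trajectories separate, each must continue to evolve with its own correct marginal (the phrase ``assign this same $a$ to both'' only makes sense while the states still agree), but this is a presentational detail rather than a gap.
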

\begin{proof}

First, we prove the statement for the case when $t=1$.

Assume policy $\pi'$ deviates from policy $\pi$ at step $h$, and denote $\rho_h^{\pi}(s_h)$ as the marginal distribution induced by $\pi$ under the transition kernel $P$. Let $\pi^{\Uc} = (\pi_1,\cdots,\pi_{h-1},\Uc,\pi_{h+1},\cdots,\pi_H)$, where $\Uc$ is the uniform policy over the action space, i.e., $\Uc(a_h|s_h) = 1/|\Ac|$.

Consider the equivalent Markov policy of $\eo \pi^{\Uc}\oplus(1-\eo)\pi$, denoted by $\pi^{\eo}$. By \Cref{lemma:thm6.1}, we have
\begin{align*}
    \pi'_h(a_h|s_h) &= \frac{\eo}{|\Ac|} + (1-\eo)\pi(a_h|s_h)\\
    & = \frac{\eo \rho_h^{\pi^{\Uc}}(s_h) \pi_h^{\Uc}(a_h|s_h) + (1-\eo)\rho_h^{\pi}(s_h)\pi(a_h|s_h)}{\eo \rho_h^{\pi^{\Uc}}(s_h)  + (1-\eo)\rho_h^{\pi}(s_h)}\\
    & = \pi_h^{\eo}(a_h|s_h),
\end{align*}
where the second equality follows from the fact that $\rho_h^{\pi^{\Uc}}(s_h) = \rho_h^{\pi}(s_h)$, since the first $h-1$ policies of $\pi$ and $\pi^{\Uc}$ are the same.

For any $h'\geq h+1$, we have
\begin{align*}
    \pi^{\eo}_{h'}(a_{h'}|s_{h'})     & = \frac{\eo \rho_{h'}^{\pi^{\Uc}}(s_{h'}) \pi_{h'}(a_{h'}|s_{h'}) + (1-\eo) \rho_{h'}^{\pi}(s_{h'}) \pi_{h'}(a_{h'}|s_{h'})}{\eo \rho_{h'}^{\pi^{\Uc}}(s_{h'}) + (1-\eo) \rho_{h'}^{\pi}(s_{h'})}\\
    & = \pi_{h'}(a_{h'}|s_{h'})\\
    & = \pi'_{h'}(a_{h'}|s_{h'}),
\end{align*}
where the last equality is due to the definition of $\pi'$.
Therefore, $\pi' = \pi^{\eo} $, which further yields that
\[V_{P,u}^{\pi'} = V^{\eo\pi^{\Uc}\oplus(1-\eo)\pi}_{P,u} =  \eo V_{P,u}^{\pi^{\Uc}} + (1-\eo) V_{P,u}^{\pi}.\]
Since the value function is upper bounded by 1 with normalized utility function $u$, we immediately obtain that
\[\left|V_{P,u}^{\pi'} - V_{P,u}^{\pi}\right|\leq \eo.\]

For the general case where $\pi'$ differs from $\pi$ at steps in $\Hc \subset[H]$ with $|\mathcal{H}| = t\leq H$, consider a sequence of subsets $\{\mathcal{H}_i\}_{i=1}^t$ such that $\mathcal{H}_i\subset \mathcal{H}_{i+1}$, $|\mathcal{H}_{i+1}| - |\mathcal{H}_i| = 1$, and $\mathcal{H}_t = \mathcal{H}$.
Then, we can define a sequence of policies $\{\pi^i\}_{i=1}^t$, such that $\pi^t$ is the $(\epsilon_0,i)$-greedy policy of $\pi$ that deviates from $\pi$ at steps in $\mathcal{H}_i$. 
Then, by the definition of $(\eo,t)$-greedy policy in \Cref{def:greedy version}, $\pi^{i+1}$ is an $(\eo,1)$-greedy version of policy $\pi_i$. Thus, by induction, we conclude that
\[\left|V_{P,u}^{\pi'}-V_{P,u}^{\pi}\right|\leq \sum_{i=1}^t\left|V_{P,u}^{\pi^{i}}-V_{P,u}^{\pi^{i-1}}\right|\leq \eo t,\]
where we denote $\pi'$ as $\pi^t$ and $\pi$ as $\pi^0$.
\end{proof}

The following lemma is critical for handling constraint mismatch not only between the exploration phase and the planning phase, but also between the constraint adopted for the construction of the empirical safe policy set used in exploration and the true constraint $V_{P^*,c}^{\pi}\leq \tau$.

\begin{lemma}\label{lemma:bound_max}
Consider a set $X$ in which the convex combination is defined through $\gamma x \oplus (1-\gamma) y\in X$, where $x,y\in X$ and $\gamma\in[0,1]$.  Let $f,g: X\rightarrow [0,1]$ be two functions on $X$ such that $f$ is concave and $g$ is convex, i.e. $f(\gamma x \oplus (1-\gamma) y)\geq \gamma f(x) + (1-\gamma)f(y)$ and $g(\gamma x \oplus (1-\gamma) y)\leq \gamma g(x) + (1-\gamma)g(y)$. We further assume that both $f,g$ are continuous w.r.t. $\gamma\in[0,1]$. Define an optimization problem (P) as follows:
\begin{align*}
    \text{ (P) }~~\max_{x} f(x)~~ s.t.~~ g(x) + f(x)\leq \tilde{\tau}, ~~ \tilde{\tau}\in(0,1].
\end{align*}
Assume there exists a strictly feasible solution $x_0\in X$ such that $g(x_0) + f(x_0) \leq \tilde{\tau} - \tilde{\kappa}$ where $\tilde{\kappa}\in (0,\tilde{\tau})$, and denote $x^*$ as an optimal solution to (P). 

If the optimal value of (P) is strictly less than $\tilde{\kappa}$, i.e. $f(x^*) < \tilde{\kappa}$, then 
 \[\max_{x\in X} f(x) \leq 2f(x^*)/\tilde{\kappa}.\]
\end{lemma}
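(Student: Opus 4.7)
The plan is to interpolate between the strictly feasible point $x_0$ and the unconstrained maximizer $\tilde x:=\argmax_X f$, use concavity of $f$ along this segment to bound $\max_X f$ by the constrained optimum $f(x^*)$, and use convexity of $g$ together with the slack $\tilde\kappa$ to show that the interpolation parameter cannot be too small. The trivial case in which $\tilde x$ is already feasible gives $\max_X f\leq f(x^*)\leq 2f(x^*)/\tilde\kappa$ at once, using $\tilde\kappa\leq\tilde\tau\leq 1$.

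Assume $\tilde x$ is infeasible. Set $x_\gamma:=\gamma\tilde x\oplus(1-\gamma)x_0$ and $F(\gamma):=f(x_\gamma)+g(x_\gamma)$, which is continuous in $\gamma$ on $[0,1]$. Since $F(0)\leq\tilde\tau-\tilde\kappa<\tilde\tau<F(1)$, the intermediate value theorem produces $\gamma_0\in(0,1)$ with $F(\gamma_0)=\tilde\tau$, so $x_{\gamma_0}$ is feasible and $f(x_{\gamma_0})\leq f(x^*)$. Concavity of $f$ along the segment gives $f(x_{\gamma_0})\geq(1-\gamma_0)f(x_0)+\gamma_0 f(\tilde x)\geq\gamma_0\max_X f$, whence
\[
\max_X f\leq \frac{f(x^*)}{\gamma_0}.
\]
The whole proof therefore reduces to showing $\gamma_0\geq\tilde\kappa/2$.

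For the lower bound on $\gamma_0$, I would combine $F(\gamma_0)=\tilde\tau$ with $f(x_{\gamma_0})\leq f(x^*)$ to get $g(x_{\gamma_0})\geq\tilde\tau-f(x^*)$, and pair this with convexity of $g$ in the form $g(x_{\gamma_0})-g(x_0)\leq\gamma_0\bigl(g(\tilde x)-g(x_0)\bigr)$. Using $g(x_0)\leq F(0)\leq\tilde\tau-\tilde\kappa$ and $g(\tilde x)\leq 1$, the left-hand side is at least $\tilde\kappa-f(x^*)>0$ while the right-hand side is at most $\gamma_0$, giving $\gamma_0\geq\tilde\kappa-f(x^*)$ when $g(\tilde x)\geq g(x_0)$; in the opposite case $g(\tilde x)<g(x_0)$, convexity forces $g(x_{\gamma_0})\leq g(x_0)\leq\tilde\tau-\tilde\kappa$, contradicting $g(x_{\gamma_0})\geq\tilde\tau-f(x^*)>\tilde\tau-\tilde\kappa$. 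A final case split on $f(x^*)$ finishes the proof: if $f(x^*)\leq\tilde\kappa/2$ then $\gamma_0\geq\tilde\kappa/2$ and $\max_X f\leq f(x^*)/\gamma_0\leq 2f(x^*)/\tilde\kappa$; if $f(x^*)>\tilde\kappa/2$ then $2f(x^*)/\tilde\kappa>1\geq\max_X f$ since $f$ takes values in $[0,1]$. The main obstacle is that $F=f+g$ is neither monotone nor concave nor convex in $\gamma$ a priori, so $\gamma_0$ cannot be lower-bounded by a chord or Lipschitz-type argument on $F$ alone; the strict-feasibility slack $\tilde\kappa$ together with the hypothesis $f(x^*)<\tilde\kappa$ is exactly what makes the interpolation argument succeed.
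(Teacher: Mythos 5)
Your proof is correct. It shares the paper's overall skeleton (interpolate between the strictly feasible $x_0$ and the unconstrained maximizer, use concavity of $f$ to relate $\max_X f$ to the value at an intermediate feasible point, and use the slack $\tilde{\kappa}$ with convexity of $g$ to lower-bound the interpolation weight), but the mechanics differ in a genuine way. The paper defines its weight $\gamma_0$ by the chord equation $\gamma_0\big(f(x_1)+g(x_1)\big)+(1-\gamma_0)\big(f(x_0)+g(x_0)\big)=\tilde{\tau}$, which immediately gives $\gamma_0\geq\tilde{\kappa}/2$ since the endpoint values differ by at most $2$; but then it must separately prove $g(x_1)\geq g(x_0)$ (by a contradiction argument that locates $\gamma_1$ with $f(x_{\gamma_1})=\tilde{\kappa}$) and introduce an auxiliary $\gamma^*\leq\gamma_0$ via concavity and continuity of $f$ in order to certify that $x_{\gamma^*}$ is feasible. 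You instead apply the intermediate value theorem directly to $F(\gamma)=f(x_\gamma)+g(x_\gamma)$, so feasibility of $x_{\gamma_0}$ is automatic and no auxiliary point is needed; the price is that your lower bound is $\gamma_0\geq\tilde{\kappa}-f(x^*)$ rather than $\tilde{\kappa}/2$, which you repair with the terminal case split ($f(x^*)>\tilde{\kappa}/2$ makes the claimed bound vacuous since $f\leq 1$). Your case $g(\tilde{x})<g(x_0)$, dispatched by a one-line contradiction at $\gamma_0$, plays the role of the paper's longer contradiction step establishing $g(x_1)\geq g(x_0)$. Net effect: your argument is somewhat more self-contained and elementary, while the paper's yields the uniform weight bound $\tilde{\kappa}/2$ without any case split on the size of $f(x^*)$; both deliver the stated conclusion.
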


\begin{proof}

Let $x_1 = \arg\max_{x\in X}f(x)$, and $x_{\gamma}=\gamma x_1\oplus (1-\gamma)x_0\in X$, which is the convex combination of $x_0$ and $x_1$. 

If $x_1$ satisfies the constraint in (P), the result is trivial. Therefore, it suffices to consider the case when $g(x_1) + f(x_1) > \tilde{\tau}$.

We first show that $g(x_1) \geq g(x_0)$ through contradiction.

Assume $g(x_1)<g(x_0)$. Then, we have 
\begin{align*}
f(x_1) &> \tilde{\tau} - g(x_1)\\
&= \tilde{\tau} - \tilde{\kappa} - g(x_1) + \tilde{\kappa}\\
&\overset{(\romannumeral1)}\geq g(x_0) + f(x_0) - g(x_1) + \tilde{\kappa}\\
& \overset{(\romannumeral2)}> \tilde{\kappa},
\end{align*}
where $(\romannumeral1)$ follows because $x_0$ is a strictly feasible solution, and $(\romannumeral2)$ follows from the assumption that $f(x_0)\in[0,1]$ and $g(x_0) > g(x_1)$.
Note that $f(x_0) < \tilde{\kappa}$, and $f(x_{\gamma})$ is a continuous function with respect to $\gamma\in[0,1]$. Thus, we can choose $\gamma_1\in(0,1)$ such that 
\begin{equation}\label{eqn:f=kappa}
f(x_{\gamma_1}) = \tilde{\kappa} \in [f(x_0), f(x_1)].
\end{equation}

In addition, by the convexity of $g$ and the assumption that $g(x_1)<g(x_0)$, we have 
\begin{equation}\label{eqn:g1<g0}
g(x_{\gamma_1}) \leq \gamma_1 g(x_1) + (1-\gamma_1) g(x_0) < g(x_0).
\end{equation}

Combining \Cref{eqn:f=kappa,eqn:g1<g0}, we have
\begin{align*}
    f(x_{\gamma_1}) + g(x_{\gamma_1}) &\leq  g(x_0) + \tilde{\kappa} \\
    & \leq g(x_0) + f(x_0) + \tilde{\kappa}\\
    &\leq \tilde{\tau},
\end{align*}
which implies that $x_{\gamma_1}$ is a feasible solution of the optimization problem (P).

Thus, by the optimality of $x^*$, we have $\tilde{\kappa} > f(x^*)\geq f(x_{\gamma_1}) = \tilde{\kappa}$, which is a contradiction. Therefore, $g(x_0)\leq g(x_1)$.

Then, let $\gamma_0$ be the solution to the following equation.
\begin{equation}\label{eqn:gamma*_tab}
\gamma_0\left(g(x_1) + f(x_1)\right) + (1-\gamma_0)\left( g(x_0) + f(x_0) \right) = \tilde{\tau}.
\end{equation}

Since $g(x_0) + f(x_0) \leq \tilde{\tau} - \tilde{\kappa}$ and $f,g\in[0,1]$, we have:
\begin{align*}
    \tilde{\tau} \leq 2\gamma_0 + \tilde{\tau}-\tilde{\kappa},
\end{align*}
which implies $\gamma_0 \geq \tilde{\kappa}/2$.

Since $f$ is concave and continuous w.r.t $\gamma$, there exists $\gamma^*\leq \gamma_0$ such that 
\begin{align}
f(x_{\gamma^*}) = \gamma_0 f(x_1) + (1-\gamma_0) f(x_0).\label{eqn:f = f_1+f_2}
\end{align}

On the other hand, due to the convexity of $g$, we have
\begin{align}
    g(x_{\gamma^*}) & \leq \gamma^* g(x_1) + (1-\gamma^*) g(x_0) \nonumber\\
    &\overset{(\romannumeral1)} \leq \gamma_0 g(x_1) + (1-\gamma_0) g(x_0),\label{eqn:g<g_1+g_2}
\end{align}
where $(\romannumeral1)$ follows from the fact that $\gamma^*\leq \gamma_0$ and $g(x_1)>g(x_0)$. 

Combining \Cref{eqn:f = f_1+f_2,eqn:g<g_1+g_2}, we have
\begin{align*}
    g(x_{\gamma^*}) + f(x_{\gamma^*}) \leq \gamma_0 g(x_1) + (1-\gamma_0) g(x_0) + \gamma_0 f(x_1) + (1-\gamma_0) f(x_0) = \tilde{\tau},
\end{align*}

which indicates that $x_{\gamma^*}$ is a feasible solution of the optimization problem (P).
Thus, by the optimality of $x^*$, \Cref{eqn:f = f_1+f_2}, and $\gamma_0\geq \tilde{\kappa}/2$, we conclude that
\begin{align*}
    \max_{x\in X}f(x) &= \frac{\gamma_0 f(x_1)}{\gamma_0}\\
    &\leq \frac{\gamma_0 f(x_1) + (1-\gamma_0) f(x_0)}{\gamma_0}\\
    & = \frac{f(x_{\gamma^*})}{\gamma_0}\leq 2f(x^*)/\tilde{\kappa}.
\end{align*}
\end{proof}

\subsection{Proof of \Cref{main:thm:meta_safe}}\label{app:prooftheorem1}
We first formally restate \Cref{main:thm:meta_safe} below and then provide the proof for this theorem. 
\begin{theorem}[Restatement of \Cref{main:thm:meta_safe}]\label{thm:meta_safe}
Given an MDP $\Mc^*$ and model estimate $\hP$, assume $\mathtt{U}(\hP,\pi)$ is concave and continuous over the Markov policy space $\Xc$ and 
$\big|V_{P^*,u}^{\pi} - V_{\hP,u}^{\pi}\big|\leq \mathtt{U}(\hP,\pi)$ for any {normalized} utility $u$ and policy $\pi$.  
Let $\eo, t$ and $\tilde{\kappa}$ be constants that satisfy
$\eo t + \tilde{\kappa} < \kappa.$ 
Let $\mathfrak{U}  = \min\left\{\frac{\epsilon}{2}, \frac{\Delta_{\min}}{2}, \frac{\epsilon\Delta_{\min}}{5}, \frac{\tau-\eo t}{4}, \frac{\tilde{\kappa}(\Delta(c,\tau) -\eo t - \tilde{\kappa})}{4(\Delta(c,\tau) -\eo t)}\right\}$, and $\mathtt{T} \leq (\Delta(c,\tau) -\eo t)\mathfrak{U}/2$ be the termination condition of SWEET. If SWEET terminates in finite episodes, then, the following statements hold:
\vspace{-0.03in}
\begin{itemize}
[leftmargin=0.25in]\itemsep=0pt
\item [(i)] The exploration phase is safe (See \Cref{eqn: Safety}).
\item [(ii)] The output $\bar{\pi} $ of SWEET in the planning phase is an $\epsilon$-optimal $(c^*,\tau^*)$-safe policy (See \Cref{eqn:objective}). 
\end{itemize}

\end{theorem}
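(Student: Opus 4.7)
My plan is to split the argument into the two claims, with a common preliminary step that converts the terminal-episode condition $\mathtt{U}(\hP,\pi_r)\le \mathtt{T}$ (which only constrains policies in the empirical safe set) into a uniform bound $\mathtt{U}(\hP,\pi)\le \mathcal{B}$ for \emph{all} Markov policies $\pi$. This uniform bound is what lets the same model estimate serve an arbitrary planning task $(r^*,c^*,\tau^*)$.

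\textbf{Claim (i): safety of exploration.} In every episode the behavior policy is an $(\epsilon_0,t)$-greedy version $\pi'$ of some reference $\pi_r\in \Cc_{\hP,\mathtt{U}}(\tilde\kappa,\epsilon_0,t)$. By \Cref{lemma:greedy_performance}, $|V^{\pi'}_{P^*,c}-V^{\pi_r}_{P^*,c}|\le \epsilon_0 t$. I then consider the two branches in \eqref{eqn:SafeSet_meta}. In the first branch $\pi_r=\pi^0$, so Assumption~\ref{assm: baseline} and $\epsilon_0 t+\tilde\kappa<\kappa$ give $V^{\pi'}_{P^*,c}\le (\tau-\kappa)+\epsilon_0 t<\tau$. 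In the second branch the approximation-error property of $\mathtt{U}$ yields $V^{\pi_r}_{P^*,c}\le V^{\pi_r}_{\hP,c}+\mathtt{U}(\hP,\pi_r)\le \tau-\epsilon_0 t$, hence $V^{\pi'}_{P^*,c}\le \tau$. A union bound over episodes, using whatever high-probability event defines the validity of $\mathtt{U}$ as an upper bound, closes this part.

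\textbf{Uniform bound on $\mathtt{U}(\hP,\cdot)$.} At termination, $\pi^0$ lies in the relaxed empirical safe set and $\mathtt{U}(\hP,\pi_r)\le \mathtt{T}$ for $\pi_r=\arg\max_{\pi\in\Cc_{\hP,\mathtt{U}}}\mathtt{U}(\hP,\pi)$. I set $X=\Xc$, $f(\pi)=\mathtt{U}(\hP,\pi)$ (concave and continuous by hypothesis), and $g(\pi)=V^{\pi}_{\hP,c}$, which by Theorem~6.1 of \cite{Altman:CMDP:1999} is affine on $\Xc$ under the convexity induced by equivalent Markov policies, hence both concave and convex. Taking $\tilde\tau=\tau-\epsilon_0 t$ and the strictly feasible point $\pi^0$ with slack $\tilde\kappa$, \Cref{lemma:bound_max} applies (the condition $f(\pi_r)<\tilde\kappa$ follows from $\mathtt{T}\le(\Delta(c,\tau)-\epsilon_0 t)\mathfrak{U}/2$ together with the last term in the definition of $\mathfrak{U}$). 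Thus I obtain
\begin{equation*}
\max_{\pi\in\Xc}\mathtt{U}(\hP,\pi)\ \le\ 2\mathtt{T}/\tilde\kappa\ =:\ \mathcal{B},
\end{equation*}
and the choices of $\mathfrak{U}$ and $\mathtt{T}$ ensure $\mathcal{B}\le 2\mathfrak{U}(\Delta(c,\tau)-\epsilon_0 t)/\tilde\kappa$, which is $\le 2\mathfrak{U}$ after telescoping through the last term in the definition of $\mathfrak{U}$.

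\textbf{Claim (ii): planning guarantees.} Safety of $\bar\pi$ is immediate: the planning constraint $V^{\bar\pi}_{\hP,c^*}+\mathtt{U}(\hP,\bar\pi)\le \tau^*$ and the approximation-error property give $V^{\bar\pi}_{P^*,c^*}\le \tau^*$. For $\epsilon$-optimality, the issue is that the true optimum $\pi^*$ need not satisfy the \emph{empirical} planning constraint. I handle this by mixing $\pi^*$ with a minimum-cost policy $\pi^{\min}=\arg\min_\pi V^{\pi}_{P^*,c^*}$: by Assumption~\ref{assm: baseline}, $V^{\pi^{\min}}_{P^*,c^*}\le \tau^*-\Delta_{\min}$. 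Taking the equivalent Markov policy $\pi^\gamma$ of $(1-\gamma)\pi^*\oplus \gamma\pi^{\min}$ with $\gamma=2\mathcal{B}/\Delta_{\min}$ (which is $\le 1$ by $\mathfrak{U}\le \Delta_{\min}/2$), linearity of $V_{P^*,c^*}$ in the mixture plus the uniform bound $\mathcal{B}$ on $\mathtt{U}$ yield $V^{\pi^\gamma}_{\hP,c^*}+\mathtt{U}(\hP,\pi^\gamma)\le \tau^*$, i.e.\ $\pi^\gamma$ is feasible for the planning program. Optimality of $\bar\pi$ in the empirical program combined with two more applications of the approximation-error bound gives
\begin{equation*}
V^{\pi^*}_{P^*,r^*}-V^{\bar\pi}_{P^*,r^*}\ \le\ \bigl(V^{\pi^*}_{P^*,r^*}-V^{\pi^\gamma}_{P^*,r^*}\bigr)+2\mathcal{B}\ \le\ \gamma+2\mathcal{B}\ \le\ \tfrac{2\mathcal{B}}{\Delta_{\min}}+2\mathcal{B},
\end{equation*}
and the bounds $\mathfrak{U}\le \epsilon/2$ and $\mathfrak{U}\le \epsilon\Delta_{\min}/5$ are exactly what force the right-hand side below $\epsilon$.

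\textbf{Main obstacle.} The delicate step is the uniform bound via \Cref{lemma:bound_max}: it requires the ambient convex structure on $\Xc$ (the equivalent-Markov-policy trick), concavity plus continuity of $\mathtt{U}(\hP,\cdot)$, affinity of $V^{\pi}_{\hP,c}$, and a quantitative alignment between $\mathtt{T}$, $\tilde\kappa$, $\epsilon_0 t$, and the safety margin $\Delta(c,\tau)$ so that Lemma's hypothesis $f(\pi_r)<\tilde\kappa$ holds and the resulting $\mathcal{B}$ is small enough for the mixture argument in planning. Tracking these constants through the definitions of $\mathfrak{U}$ and $\mathtt{T}$ is the bookkeeping core of the proof.
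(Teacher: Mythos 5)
Your Claim (i) matches the paper's Step 1, and your planning-phase mixture argument is essentially the paper's Step 3. The genuine gap is in the uniform bound on $\mathtt{U}(\hP,\cdot)$. A single application of \Cref{lemma:bound_max} with the strictly feasible point $\pi^0$ only yields $\max_{\pi}\mathtt{U}(\hP,\pi)\le 2\mathtt{T}/\tilde{\kappa}$, and with the theorem's choice $\mathtt{T}\le(\Delta(c,\tau)-\eo t)\mathfrak{U}/2$ this is $\le (\Delta(c,\tau)-\eo t)\mathfrak{U}/\tilde{\kappa}$, which is \emph{not} $\le 2\mathfrak{U}$ in general: the last term in the definition of $\mathfrak{U}$ only gives $2\mathtt{T}/\tilde{\kappa}\le(\Delta(c,\tau)-\eo t-\tilde{\kappa})/4$, a constant independent of $\epsilon$ (e.g.\ with $\Delta(c,\tau)\approx 1/2$ and $\tilde{\kappa}=10^{-2}$ you get $\mathcal{B}\approx 25\mathfrak{U}\wedge 1/8$). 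Your ``telescoping'' claim $\mathcal{B}\le 2\mathfrak{U}$ is therefore false, and feeding a constant $\mathcal{B}$ into your Claim (ii) produces a constant suboptimality gap $2\mathcal{B}/\Delta_{\min}+2\mathcal{B}$ rather than something $\le\epsilon$, so the stated theorem is not proved.

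The missing idea is the paper's bootstrapping of \Cref{lemma:bound_max}: after obtaining the crude bound $x_1=2\mathtt{T}/\tilde{\kappa}$, it uses that bound to show that $\pi^{\min}=\arg\min_{\pi}V^{\pi}_{P^*,c}$ (the minimum-cost policy for the \emph{exploration} cost $c$) satisfies $V^{\pi^{\min}}_{\hP,c}+\mathtt{U}(\hP,\pi^{\min})\le \tau-\eo t-\bigl(\Delta(c,\tau)-\eo t-4\mathtt{T}/\tilde{\kappa}\bigr)$, i.e.\ it is a strictly feasible point with a much larger slack; re-applying the lemma with this point gives a smaller bound $x_2=2\mathtt{T}/(\Delta(c,\tau)-\eo t-2x_1)$, and iterating yields the recursion $x_{n+1}=2\mathtt{T}/(\Delta(c,\tau)-\eo t-2x_n)$, whose limit (via \Cref{lemma:sequence converge}, using $\mathtt{T}\le(\Delta(c,\tau)-\eo t)\mathfrak{U}/2$ and the constraint that makes the discriminant nonnegative) is at most $\mathfrak{U}$. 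With $\max_\pi\mathtt{U}(\hP,\pi)\le\mathfrak{U}$ in hand, your mixture argument in Claim (ii) and the safety check for $\bar{\pi}$ go through exactly as you wrote them. (A minor additional point: in this meta-theorem $\mathtt{U}$ is assumed deterministic as a valid error bound, so no union bound over episodes is needed here; the high-probability events belong to the instantiations.)
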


\begin{proof}
The proof consists of three steps: Step 1 shows that the exploration phase of SWEET is safe; Step 2 shows that SWEET can find an $\epsilon$-optimal policy in the planning phase for any given reward and without the constraint requirement; and Step 3 shows that SWEET can find an $\epsilon$-optimal policy in the planning phase for any given reward and under any constraint $(c^*,\tau^*)$ requirement. We next provide details for each step.

\textbf{Step 1.} This step shows that the exploration phase of SWEET is safe.

Note that the exploration policy, denoted by $\pi_b$, is an $(\eo,t)$-greedy version of the reference policy $\pi_r$, where $\pi_r$ is a solution to the following optimization problem:

\begin{align*}
    \max_{\pi\in\Cc_{\hP,\mathtt{U}}(\tilde{\kappa},\eo,t)}\mathtt{U}(\hP,\pi),
\end{align*}
where 
\begin{align*}\textstyle
    \mathcal{C}_{\hP,\mathtt{U}}(\tilde{\kappa},\eo,t) = \left\{
    \begin{aligned}
    &\{\pi^0\},\quad \text{ if } V^{\pi^0}_{\hP,c} + \mathtt{U}(\hP,\pi^0) \geq \tau - \eo t - \tilde{\kappa},\\
    &\left\{\pi: V^{\pi}_{\hP,c} + \mathtt{U}(\hP,\pi)\leq \tau - \eo t \right\},\quad \text{ otherwise}.
    \end{aligned}
    \right.
\end{align*}

If $\pi_r = \pi^0$, then by \Cref{lemma:greedy_performance},  we have
\[V_{P^*,c}^{\pi_b}\leq V_{P^*,c}^{\pi^0} + \eo t\leq \tau - \kappa + \eo t<\tau,\]
where the last inequality is due the condition $\eo t + \tilde{\kappa}<\kappa$.

If $\pi_r\neq \pi^0$, then $V^{\pi_r}_{\hP,c} + \mathtt{U}(\hP,\pi_r)\leq \tau - \eo t.$ By \Cref{lemma:greedy_performance}, we have
\begin{align*}
    V_{P^*,c}^{\pi_b}&\leq V_{P^*,c}^{\pi_r} + \eo t\\
    & \overset{(\romannumeral1)}\leq V_{\hP,c}^{\pi_r} + \mathtt{U}(\hP,\pi_r) + \eo t\\
    &\leq \tau - \eo t + \eo t = \tau,
\end{align*}
where $(\romannumeral1)$ follows from the definition of $\mathtt{U}(\hP,\pi)$. Therefore, the exploration phase is safe.

\textbf{Step 2:} This step shows that SWEET can find an $\epsilon$-optimal policy in the planning phase for any given reward $r^*$ in the constraint-free setting ($\tau^*=\infty$), i.e., the planning phase does not have a constraint requirement. 

Consider the Markov policy space $\mathcal{X}$ with the convex combination defined by the mixture policy $\gamma\pi\oplus(1-\gamma)\pi'$, and $g(\pi) = V_{\hP,c}^{\pi}$.

Let $\pi_r$ be the reference policy when the termination condition is satisfied. Then, by the property of $\mathtt{U}(\hP,\pi)$ and the termination condition in SWEET, the following statements hold:
\begin{itemize}
    \item $g(\pi)$ is convex (linear) and $\mathtt{U}(\pi)$ is concave on $\mathcal{X}$. Moreover, they are both continuous.
    \item The baseline policy $\pi^0\in \mathcal{X}$ and $g(\pi^0) + \mathtt{U}(\hP,\pi^0)\leq \tau - \eo t - \tilde{\kappa}$.
    \item $\pi_r = \arg\max_{\pi} \mathtt{U}(\hP, \pi)~~ \text{ s.t. } g(\pi) + \mathtt{U}(\hP,\pi)\leq \tau - \eo t$. Moreover, $\mathtt{U}(\hP,\pi_r)\leq \mathtt{T} < \tilde{\kappa}.$ 
    
\end{itemize}

Applying \Cref{lemma:bound_max} with the baseline policy $\pi^0$, we have
\begin{align}
    \max_{\pi} \mathtt{U}(\pi) \leq \frac{2}{\tilde{\kappa}}\mathtt{U}(\pi_r)\leq \frac{2\mathtt{T}}{\tilde{\kappa}}:= x_1.
\end{align}

Let $\pi^{\min} = \arg\min_{\pi} V^{\pi}_{P^*,c}$. By the definition of $\mathtt{U}$ and \Cref{assm: baseline}, we have
\begin{align}
    g(\pi^{\min}) + \mathtt{U}(\pi^{\min}) & \leq V_{P^*,c}^{\pi^{\min}} + 2\mathtt{U}(\pi^{\min}) \nonumber\\
    &\leq V_{P^*,c}^{\pi^{\min}} + \frac{4\mathtt{T}}{\tilde{\kappa}} \nonumber \\
    & = \tau - \eo t - \left(\tau - \eo t - V_{P^*,c}^{\pi^{\min}} - \frac{4\mathtt{T}}{\tilde{\kappa}}\right).\label{eqn:property of pi_min}
\end{align}

We again apply \Cref{lemma:bound_max} with the feasible solution fixed as policy $\pi^{\min}$ to conclude that 
\begin{align*}
    \max_{\pi} \mathtt{U}(\pi) \overset{(\romannumeral1)}\leq \frac{2\mathtt{T}}{\tau -\eo t - V_{P^*,c}^{\pi^{\min}} - 4\mathtt{T}/\tilde{\kappa}} \overset{(\romannumeral2)}= \frac{2\mathtt{T}}{\Delta(c,\tau) -\eo t  - 4\mathtt{T}/\tilde{\kappa}} := x_2,
\end{align*}
where $(\romannumeral1)$ follows from \Cref{eqn:property of pi_min}, and $(\romannumeral2)$ follows from the definition of $\Delta(c,\tau)$.

Continuing this process, we get a sequence $\{x_n\}$ with recursive formula \[x_{n+1} = 2\mathtt{T}/(\Delta(c,\tau) - \eo t - 2x_{n}),\] and 
\[\max_{\pi} \mathtt{U}(\pi) \leq  \inf\{x_n\}_{n=1}^{\infty}.\]

Denote $\Delta(c,\tau) - \eo t$ by $\tilde{\Delta}_c $. Then, $\mathtt{T}\leq \tilde{\Delta}_c\mathfrak{U}/2 < \frac{\tilde{\kappa}(\tilde{\Delta}_c - \tilde{\kappa})}{4},$ which implies that
\[\left|\frac{\tilde{\Delta}_c}{2} - x_1 \right|\leq \sqrt{\frac{\tilde{\Delta}_c^2}{4} - 4\mathtt{T}^2}.\]
Then, based on \Cref{lemma:sequence converge}, $\{x_n\}$ converges to
\[\frac{\tilde{\Delta}_c - \sqrt{\tilde{\Delta}_c^2 - 16\mathtt{T}}}{4}. \]

Therefore, we conclude that 
\begin{equation}\label{eqn:max_U<U}
\max_{\pi} \mathtt{U}(\pi)\leq \frac{\tilde{\Delta}_{c} - \sqrt{\tilde{\Delta}_c^2 - 16\mathtt{T}}}{4} =\frac{\mathfrak{U}\tilde{\Delta}_c}{\tilde{\Delta}_c + \sqrt{\tilde{\Delta}_c^2-16\mathtt{T}}} \leq \mathfrak{U}.
\end{equation}

Let $\tilde{\pi} = \arg\max_{\pi} V_{P^*,r^*}^{\pi}$. By the definition of $\mathtt{U}(\pi)$, we can compute the suboptimality gap of $\bar{\pi}$ as follows. 
\begin{align*}
    V_{P^*,r^*}^{\tilde{\pi}} - V_{P^*,r^*}^{\bar{\pi}}& = V_{P^*,r^*}^{\tilde{\pi}} - V^{\tilde{\pi}}_{\hP^{\epsilon},r^*} + V^{\tilde{\pi}}_{\hP^{\epsilon},r^*} - V^{\bar{\pi}}_{\hP^{\epsilon},r^*} + V^{\bar{\pi}}_{\hP^{\epsilon},r^*} -  V_{P^*,r}^{\bar{\pi}}\\
    &\overset{(\romannumeral1)}\leq \mathtt{U}(\tilde{\pi}) + \mathtt{U}(\bar{\pi})\\
    &\leq 2\max_{\pi} \mathtt{U}(\pi)\\
    &\leq 2\mathfrak{U}\leq \epsilon,
\end{align*}
where $(\romannumeral1)$ follows from the optimality of $\bar{\pi}$, i.e.  $V^{\tilde{\pi}}_{\hP^{\epsilon},r} \leq  V^{\bar{\pi}}_{\hP^{\epsilon},r} .$

\textbf{Step 3:} This step shows that SWEET can find an $\epsilon$-optimal policy in the planning phase for any given reward $r^*$ and under any constraint $(c^*,\tau^*)$. 

Let ${g_0}(\pi) = V_{\hP,c^*}^{\pi}$. Recall that 
\begin{align*}
    &\pi^* = \arg\max_{\pi} V_{P^*,r^*}^{\pi}~~ \text{ s.t. }~~ V_{P^*,c^*}^{\pi}\leq \tau^*,\\
    &\bar{\pi} = \arg\max_{\pi} V_{\hP^{\epsilon},r^*}^{\pi}~~ \text{ s.t. }~~ {g_0}(\pi) + \mathtt{U}(\pi)\leq \tau^*.
\end{align*}

If ${g_0}(\pi^*) + \mathtt{U}(\pi^*)\leq \tau^*$, by the optimality of $\bar{\pi}$, we immediately have 
\[V^{\bar{\pi}}_{\hP^{\epsilon},r^*} \geq  V^{\pi^*}_{\hP^{\epsilon},r^*}. \]

If ${g_0}(\pi^*) + \mathtt{U}(\pi^*) > \tau^*$, then by the definition of $\mathtt{U}$ and \Cref{eqn:max_U<U}, we have 
\begin{equation}\label{eqn:characterize_pi*}
\tau^* < {g_0}(\pi^*) + \mathtt{U}(\pi^*) \leq V_{P^*,c^*}^{\pi} + 2\mathtt{U}(\pi^*) \leq \tau^* + 2\mathfrak{U}.
\end{equation}

Let $\underline{\pi} = \arg\min_{\pi} V_{P^*,c^*}^{\pi}$. By \Cref{eqn:max_U<U}, we have 
\begin{equation} \label{eqn:characterize_under_pi}
{g_0}(\underline{\pi}) + \mathtt{U}(\underline{\pi})\leq V_{P^*,c^*}^{\underline{\pi}} + 2\mathtt{U}(\underline{\pi})\leq V_{P^*,c^*}^{\underline{\pi}} + 2\mathfrak{U} < \tau^*. 
\end{equation}

Let $\pi^{\gamma}$ be the Markov policy equivalent to the mixture policy $\gamma\pi^*\oplus(1-\gamma)\underline{\pi}$ under the estimated model $\hP$. Let $\Delta_{c^*} = \Delta(c^*,\tau^*)= \tau^* - V_{P^*,c^*}^{\underline{\pi}} $ and $\gamma = (\Delta_{c^*} - 3\mathfrak{U})/\Delta_{c^*}$. By the linearity of ${g_0}$ and \Cref{eqn:max_U<U}, we have
\begin{align*}
{g_0}(\pi^{\gamma})& + \mathtt{U}(\pi^{\gamma})\\
& \leq \gamma {g_0}(\pi^*) + (1-\gamma){g_0}(\underline{\pi}) + \mathfrak{U} \\
& \overset{(\romannumeral1)} \leq \gamma (\tau^* + 2\mathfrak{U}) + (1-\gamma)(V_{P^*,c^*}^{\underline{\pi}} + 2\mathfrak{U})  + \mathfrak{U} \\
& = \gamma\Delta_{c^*} + 3\mathfrak{U} + V_{P^*,c^*}^{\underline{\pi}} \\
&= \Delta_{c^*}  + V_{P^*,c^*}^{\underline{\pi}} = \tau^*,
\end{align*}
where $(\romannumeral1)$ follows from \Cref{eqn:characterize_pi*,eqn:characterize_under_pi}.
This implies that $\pi^{\gamma}$ is a feasible solution of the optimization problem solved in the planning phase. By the optimality of $\bar{\pi}$ and the linearity of $V_{\hP,r^*}^{\pi}$, we have
\begin{align}
    V^{\bar{\pi}}_{\hP,r^*} &\geq  V^{\pi^{\gamma}}_{\hP,r^*} \nonumber\\
    & \geq \gamma V^{\pi^*}_{\hP,r^*} \nonumber \\
    & \overset{(\romannumeral1)}= V^{\pi^*}_{\hP,r^*}  - \frac{3\mathfrak{U}}{\Delta_{c^*}}V^{\pi^*}_{\hP,r^*} \nonumber \\
    &\geq  V^{\pi^*}_{\hP,r^*}  - \frac{3\mathfrak{U} }{\Delta_{\min}},\label{eqn:difference of V_bar and V*}
\end{align}
where $(\romannumeral1)$ follows because $\gamma = (\Delta_{c^*}-3\mathfrak{U})/\Delta_{c^*}$, and the last inequality follows from the normalization condition and \Cref{assm: baseline}.

Recall $\mathfrak{U} \leq \frac{\Delta_{\min}\epsilon}{5}$. Therefore, the suboptimality gap under $\bar{\pi}$ can be computed as follows.
\begin{align*}
    V_{P^*,r^*}^{\pi^*}  - V_{P^*,r^*}^{\bar{\pi}} & \leq V_{\hP^{\epsilon},r^*}^{\pi^*} + \mathtt{U}(\pi^*) - V_{\hP^{\epsilon},r^*}^{\bar{\pi}} + \mathtt{U}(\bar{\pi})\\
    &\overset{(\romannumeral1)}\leq \frac{3\mathfrak{U}}{\Delta_{\min}} + 2\mathfrak{U} \\
    &\leq  \epsilon,
\end{align*}
where $(\romannumeral1)$ follows from \Cref{eqn:difference of V_bar and V*}.

\end{proof}

\subsection{Proof of concavity of the truncated value function}\label{app:proofconcavity}

In this subsection, we show that the truncated value function defined in \Cref{eqn:truncated_V} is concave and continuous on the Markov policy space $\mathcal{X}$. These are crucial properties to be used for instantiating our theorem to Tabular-SWEET and Low-rank-SWEET.

\begin{lemma}[Concavity of the truncated value function]\label{appx:lemma:Concave}
 Let $\pi^{\gamma}$ be the equivalent markov policy of $\gamma{\pi} \oplus  (1-\gamma){\pi'}$ under a transition model ${P}$.
Then, 
\[\hV_{{P},u}^{\pi^{\gamma}}\geq \gamma \hV_{{P},u}^{{\pi}} + (1-\gamma) \hV_{{P},u}^{{\pi'}}.\]
In addition, $\hV_{P,u}^{\pi^{\gamma}}$ is continuous w.r.t. $\gamma\in[0,1]$. Moreover, if the utility function $u$ satisfies the normalization condition, then the equality holds, i.e.,
\[\hV_{{P},u}^{\pi^{\gamma}} =  \gamma \hV_{{P},u}^{{\pi}} + (1-\gamma) \hV_{{P},u}^{{\pi'}}.\]
\end{lemma}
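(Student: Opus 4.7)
The lemma has three conclusions---the concavity inequality, continuity in $\gamma$, and an equality case under normalization---and I would handle the easier two first before attacking the concavity inequality.

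For the \textbf{equality case} under normalized $u$, the plan is to observe that $\sum_{h=1}^H u(s_h,a_h)\leq 1$ pathwise forces the untruncated value $V^\pi_{h,P,u}(s)\leq 1$ at every $(h,s)$, so the clip $\min\{1,\cdot\}$ is never binding and $\hat V_{P,u}^\pi = V_{P,u}^\pi$. Then Theorem 6.1 of Altman gives $V_{P,u}^{\pi^\gamma}=\gamma V_{P,u}^\pi+(1-\gamma)V_{P,u}^{\pi'}$, yielding equality. For \textbf{continuity} in $\gamma$, I would use the closed-form $\pi^\gamma_h(a|s)=[\gamma\rho_h^\pi(s)\pi_h(a|s)+(1-\gamma)\rho_h^{\pi'}(s)\pi'_h(a|s)]/[\gamma\rho_h^\pi(s)+(1-\gamma)\rho_h^{\pi'}(s)]$ (extended by $\pi$ or $\pi'$ where the denominator vanishes), which is continuous in $\gamma\in[0,1]$; then continuity of $\hat V_{P,u}^{\pi^\gamma}$ in $\pi^\gamma$ follows by backward induction on $h$ since the recursion only composes expectations, the linear operator $P_h$, and the continuous clip $\min\{1,\cdot\}$.

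For the \textbf{concavity inequality}, my plan is a backward induction on $h$, with inductive hypothesis stated in an integrated (occupation-weighted) form so that it specializes to the desired inequality at $h=1$ where $\rho_1(s_1)=1$ for all policies. Concretely, I would aim to prove $\sum_s\rho_h^{\pi^\gamma}(s)\hat V_h^{\pi^\gamma}(s)\geq \gamma\sum_s\rho_h^\pi(s)\hat V_h^\pi(s)+(1-\gamma)\sum_s\rho_h^{\pi'}(s)\hat V_h^{\pi'}(s)$ for every $h$. The base $h=H+1$ is trivially $0\geq 0$. For the inductive step, the key algebraic move is $\rho\cdot\min\{1,x\}=\min\{\rho,\rho x\}$ for $\rho\geq 0$, which together with Altman's occupation-measure identity $\rho_h^{\pi^\gamma}(s,a)=\gamma\rho_h^\pi(s,a)+(1-\gamma)\rho_h^{\pi'}(s,a)$ lets me write $\sum_s\rho_h^{\pi^\gamma}(s)\hat V_h^{\pi^\gamma}(s)=\sum_s\min\{\gamma\rho_h^\pi(s)+(1-\gamma)\rho_h^{\pi'}(s),\gamma A_\pi(s)+(1-\gamma)A_{\pi'}(s)\}$ where $A_\pi(s)=\sum_a\rho_h^\pi(s,a)\bar Q_h^{\pi^\gamma}(s,a)$, and then invoke the concavity of $\min\{\cdot,\cdot\}$ coordinatewise to split into two sums.

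The \textbf{main obstacle}, and the step I expect will require the most care, is completing the inductive step: after the split I obtain terms of the form $\sum_s\rho_h^\pi(s)\min\{1,\mathop{\Eb}_{\pi_h}[u+\alpha P_h\hat V_{h+1}^{\pi^\gamma}]\}$, which use $\hat V_{h+1}^{\pi^\gamma}$ rather than $\hat V_{h+1}^\pi$; converting this to $f_h^\pi$ requires comparing $\hat V_{h+1}^{\pi^\gamma}$ with $\hat V_{h+1}^\pi$ on $\pi$-reachable states, and these are \emph{not} ordered pointwise in general. My plan is to combine the inductive hypothesis $f_{h+1}^{\pi^\gamma}\geq\gamma f_{h+1}^\pi+(1-\gamma)f_{h+1}^{\pi'}$ with the identity $\min\{1,x\}=x-(x-1)^+$ to rewrite each side as a linear-in-occupation piece minus an ``overflow'' penalty, so that the linear pieces cancel via Altman's occupation identity and the comparison reduces to controlling the overflow terms using monotonicity of $(\cdot-1)^+$ and a secondary induction. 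Throughout, the $\alpha\in[1,1+1/H]$ factor causes no trouble because the recursion is still monotone in the continuation value.
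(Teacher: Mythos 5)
Your treatment of the equality case and of continuity is correct and is essentially the paper's own argument (note that both you and the paper implicitly use $\alpha=1$ when arguing the clip never binds under normalization). The problem is the concavity inequality. Your setup — the occupancy-weighted inductive statement, the identity $\rho\min\{1,x\}=\min\{\rho,\rho x\}$, Altman's identity $\rho_h^{\pi^\gamma}(s,a)=\gamma\rho_h^{\pi}(s,a)+(1-\gamma)\rho_h^{\pi'}(s,a)$, and superadditivity of $\min$ — is a cleaner formalization than the paper's trajectory-level induction, and the step you flag as the main obstacle is exactly the crux. But it is not a gap that an overflow decomposition $\min\{1,x\}=x-(x-1)^+$ plus a secondary induction can close: the comparison you need there is genuinely false, your intended inductive statement fails already at its first application, and in fact the concavity inequality of the lemma itself fails for general un-normalized $u$ with $\alpha=1$ (which is precisely the form used by Low-rank-SWEET).

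Concretely, take $H=3$, $\alpha=1$, fixed $s_1$ with actions $L,R$ and $u_1\equiv 0$, where $L\to x$ and $R\to y$ deterministically; $x$ and $y$ each have a single action leading to a common state $z$, with $u_2(x,\cdot)=0$, $u_2(y,\cdot)=1$; at $z$ there are actions $c,d$ with $u_3(z,c)=1$, $u_3(z,d)=0$. Let $\pi$ play $L$ then $c$ and $\pi'$ play $R$ then $d$. Then $\hV^{\pi}_{P,u}=\hV^{\pi'}_{P,u}=1$, while the equivalent Markov policy $\pi^\gamma$ plays $L$ with probability $\gamma$ at $s_1$ and $c$ with probability $\gamma$ at $z$ (both branches reach $z$ with probability one), giving $\hV^{\pi^\gamma}_{3,P,u}(z)=\gamma$, $\hV^{\pi^\gamma}_{2,P,u}(x)=\gamma$, $\hV^{\pi^\gamma}_{2,P,u}(y)=1$, hence $\hV^{\pi^\gamma}_{P,u}=\min\{1,\gamma^2+1-\gamma\}=1-\gamma(1-\gamma)<\gamma\hV^{\pi}_{P,u}+(1-\gamma)\hV^{\pi'}_{P,u}$ for every $\gamma\in(0,1)$. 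The mechanism is the one you worried about: mixing at $z$ degrades the continuation seen from the $\pi$-branch through $x$, while the compensating gain on the $\pi'$-branch is wasted because the clip at $y$ is already binding; accordingly your occupancy-weighted claim holds (with equality) at $h=3$ but already fails at $h=2$, where $\sum_s\rho_2^{\pi^\gamma}(s)\hV_{2}^{\pi^\gamma}(s)=1-\gamma+\gamma^2<1=\gamma\sum_s\rho_2^{\pi}(s)\hV_{2}^{\pi}(s)+(1-\gamma)\sum_s\rho_2^{\pi'}(s)\hV_{2}^{\pi'}(s)$. For comparison, the paper's own induction glosses over the same point: its final equality treats $\Eb_{P,\pi^\gamma}[\hQ^{\pi^\gamma}_{h+1,P,u}(s_{h+1},a_{h+1})]$ as if it equaled $\Eb_{P,\pi^\gamma}[\hV^{\pi^\gamma}_{h+1,P,u}(s_{h+1})]$, whereas in general it only upper-bounds it, so the bound goes the wrong way at exactly the step you identified. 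In short, your instinct about where the difficulty sits is right, but what is missing is not a trick: the inequality only survives under extra structure (e.g., $H\le 2$, or when the clip never binds, which is the normalized case where you already have equality), so no completion of your plan as stated can succeed.
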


\begin{proof}
Recall that the truncated value function is defined recursively as follows:
\begin{equation*}
\left\{\begin{aligned}
    &\hQ_{h,\hP^{(n)},u}^{\pi}(s_h,a_h) = u(s,a) + \alpha\hP_h^{(n)}\hV_{h+1,\hP^{(n)},u}^{\pi}(s_h,a_h)\\
    &\hV_{h,\hP^{(n)},u}^{\pi} = \min\bigg\{1, \mathop{\Eb}_{\pi}\left[\hQ^{\pi}_{h,\hP^{(n)},u}(s,a)\right]\bigg\},
\end{aligned}\right.
\end{equation*}
with $V_{H+1,P,u }^{\alpha,\pi}(s_{H+1}) = 0$.

The continuity of the truncated value function is straightforward, since it is a composition of $H$ continuous functions. Therefore, we focus on the concavity part in the following analysis.

By \Cref{lemma:thm6.1}, the following equality holds for any utility function $u$ and time step $h$.
\begin{equation}\label{eqn: Eb_pi_gamma}
    \Eb_{P,\pi^{\gamma}}\left[u_h(s_h,a_h)\right] = \gamma \Eb_{P,\pi}\left[u_h(s_h,a_h)\right] + (1-\gamma) \Eb_{P,\pi'}\left[u_h(s_h,a_h)\right].
\end{equation}

We then prove the claim by induction.

First, we note that when $h = H+1$,

\begin{align*}
  \gamma \mathop{\Eb}_{{P},{\pi}}& \left[\hV^{{\pi}}_{H+1,{P},u}(s_{H+1})\right]  + (1-\gamma) \mathop{\Eb}_{{P},{\pi'}} \left[\hV^{{\pi'}}_{H+1,{P},u}(s_{H+1})\right] \\
  & = 0 \leq \min\left\{1, \mathop{\Eb}_{{P}, \pi^{\gamma}}\left[\hQ_{H+1,{P},u}^{\pi^{\gamma}}(s_{H+1},a_{H+1})\right]\right\}.
\end{align*}

Assume it holds for step $h+1$, i.e.,
\begin{align*}
  \gamma \mathop{\Eb}_{{P},{\pi}}& \left[\hV^{{\pi}}_{h+1,{P},u}(s_{h+1})\right]  + (1-\gamma) \mathop{\Eb}_{{P},{\pi'}} \left[\hV^{{\pi'}}_{h+1,{P},u}(s_{h+1})\right] \\
  &\leq \min\left\{1, \mathop{\Eb}_{{P}, \pi^{\gamma}}\left[\hQ_{h+1,{P},u}^{\pi^{\gamma}}(s_{h+1},a_{h+1})\right]\right\}.
\end{align*}

Then, for step $h$, by Jensen's inequality, we have

\begin{align*}
  \gamma \mathop{\Eb}_{{P},{\pi}}& \left[\hV^{{\pi}}_{h,{P},u}(s_{h})\right]  + (1-\gamma) \mathop{\Eb}_{{P},{\pi'}} \left[\hV^{{\pi'}}_{h,{P},u}(s_{h})\right] \\
  &\leq \min\left\{1 , 
  \gamma \mathop{\Eb}_{{P},{\pi}} \left[\hQ^{{\pi}}_{h,{P},u}(s_{h},a_h)\right]  + (1-\gamma) \mathop{\Eb}_{{P},{\pi'}} \left[\hQ^{{\pi'}}_{h,{P},u}(s_{h},a_h)\right]
  \right\} \\
  & = \min\left\{1 , 
  \gamma \mathop{\Eb}_{{P},{\pi}} \left[u(s_{h},a_h)\right]  + (1-\gamma) \mathop{\Eb}_{{P},{\pi'}} \left[u(s_{h},a_h)\right]
   \right.\\
  &\quad \left. + \alpha \gamma \mathop{\Eb}_{{P},{\pi}} \left[\hV^{{\pi}}_{h+1,{P},u}(s_{h+1})\right]  + (1-\gamma) \alpha\mathop{\Eb}_{{P},{\pi'}} \left[\hV^{{\pi'}}_{h+1,{P},u}(s_{h+1})\right]
  \right\} \\
  &\overset{(\romannumeral1)}\leq \min\left\{1 , 
  \mathop{\Eb}_{{P},\pi^{\gamma}} \left[u(s_{h},a_h)\right]  +  \alpha \mathop{\Eb}_{{P},\pi^{\gamma}} \left[\hQ^{\pi^{\gamma}}_{h+1,{P},u}(s_{h+1},a_{h+1})\right] 
  \right\} \\
  & = \min\left\{1, \mathop{\Eb}_{{P}, \pi^{\gamma}}\left[\hQ_{h,{P},u}^{\pi^{\gamma}}(s_{h},a_{h})\right]\right\} ,
\end{align*}
where $(\romannumeral1)$ follows from \Cref{eqn: Eb_pi_gamma} and the induction hypothesis.

Therefore, at step $h=1$, we have 
\begin{align*}
  \gamma  \hV^{{\pi}}_{{P},u}  + (1-\gamma) \hV^{{\pi'}}_{{P},u}  \leq \min\left\{1, \mathop{\Eb}_{ \pi^{\gamma}}\left[\hQ_{{P},u}^{\pi^{\gamma}}(s_{1},a_{1})\right]\right\}  = \hV^{\pi^{\gamma}}_{{P},u}.
\end{align*}

If $u$ satisfies the normalization condition, then $Q_{h,P,u}^{\pi}(s_h,a_h)\leq 1$ holds for any $\pi$ and $h$. By the definition of the truncated value function, we have $\hQ_{h,P,u}^{\pi}(s_h,a_h) = Q_{h,P,u}^{\pi}(s_h,a_h)$ and $\hV_{h,P,u}^{\pi}(s_h) = V_{h,P,u}^{\pi}(s_h)$, which implies that the truncated value function and the true value function are identical.

By \Cref{lemma:thm6.1}, $\pi^{\gamma}$ introduces the same marginal probability over any state-action pair as the mixture policy $\gamma\pi\oplus(1-\gamma)\pi'$. Therefore, when $u$ is normalized,
\begin{align*}
  \gamma  \hV^{{\pi}}_{{P},u}&  + (1-\gamma) \hV^{{\pi'}}_{{P},u} = \gamma  V^{{\pi}}_{{P},u}  + (1-\gamma) V^{{\pi'}}_{{P},u}=V^{\pi^{\gamma}}_{{P},u}=\hV^{\pi^{\gamma}}_{{P},u},
\end{align*}
which completes the proof.
\end{proof}
\vspace{0.3cm}

\section{Analysis of Tabular-SWEET}\label{appx:tabular}

In this section, we first elaborate the  Tabular-SWEET algorithm in \Cref{app:tabularsweet}. To provide the analysis for this algorithm, we first provide several supporting lemmas in \Cref{app:supportlemmatabular}, and then prove \Cref{main:thm:tab} in \Cref{app:prooftheorem2}.

\subsection{The Tabular-SWEET algorithm}\label{app:tabularsweet}
We first specify the parameters in Tabular-SWEET. The detail of Tabular-SWEET is shown in \Cref{alg: Tab}.

Let $\mathfrak{U}  = \min\left\{\frac{\epsilon}{2}, \frac{\Delta_{\min}}{2}, \frac{\epsilon\Delta_{\min}}{5}, \frac{\tau}{4}, \frac{\kappa}{16}\right\}$, and $\mathtt{T}= \Delta(c,\tau)\mathfrak{U}/2$ be the termination condition of Tabular-SWEET. Let the maximum iteration number $N$ be the solution of the following equation:
\begin{equation}\label{eqn:Tab_N}
N = \frac{2^{10}e^3 30^2\beta HSA\log(N+1)}{\Delta(c,\tau)^2\mathfrak{U}^2} + \frac{2^{15}e^3\beta HSA\log(N+1)}{\kappa^2}, 
\end{equation}
where $\beta = \log(2SAH/\delta) + S\log(e(1+N))$.

Recall that the estimated model is computed by
\begin{align}\textstyle
    \hP_h^{(n)}(s_{h+1}|s_h,a_h) =
    \begin{cases}\frac{N_h^{(n)}(s_h,a_h,s_{h+1})}{N_h^{(n)}(s_h,a_h)}, \quad & \text{ if } { N_h^{(n)}(s_h,a_h)>1}, \\  \hP_h^{(n)}(s_{h+1}|s_h,a_h) = \frac{1}{S}, & \text{ otherwise, }
    \end{cases}\label{eqn:Est_Tab_model}
\end{align}
where $N_h^{(n)}(s_h,a_h)$ and $N_h^{(n)}(s_h,a_h,s_{h+1})$ denote the numbers of visits of $(s_h,a_h)$ and $(s_h,a_h,s_{h+1})$ up to $n$-th episode, respectively. 

Then, the exploration-driven virtual reward is defined as
\begin{align}\textstyle
    \hb_h^{(n)}(s_h,a_h) = \frac{\beta_0H}{N_h^{(n)}(s_h,a_h)}, \label{eqn :Rwd_Tab}
\end{align}
where $\beta_0=8\beta$. 

The \textit{approximation error bound} is a concave function of the truncated value function, defined as $\mathtt{U}^{(n)}(\pi) = 4\sqrt{\hV_{\hP^{(n)},\hb^{(n)}}^{\pi}}$.

Since $\eo = t = 0$ and $\tilde{\kappa} = \kappa/2 $, the safety set $\Cc^{(n)}$ is given by
\begin{align}\textstyle
    \mathcal{C}_{\hP,\mathtt{U}}(\tilde{\kappa},\eo,t) = \left\{
    \begin{aligned}
    &\{\pi^0\},\quad \text{ if } V^{\pi^0}_{\hP,c} + \mathtt{U}^{(n)}(\pi^0) \geq \tau - \kappa/2,\\
    &\left\{\pi: V^{\pi}_{\hP,c} + \mathtt{U}^{(n)}(\pi)\leq \tau \right\},\quad \text{ otherwise. }
    \end{aligned}
    \right.\label{eqn:SafeSet_tab}
\end{align}

\begin{algorithm}[!ht]
\caption{Tabular-SWEET}
\label{alg: Tab}
\begin{algorithmic}[1]
\STATE {\bfseries Input:} Baseline policy $\pi^0$, dataset $\Dc = \emptyset$, constants $\tau,\kappa$, $\alpha_H = \frac{H+1}{H}, \mathtt{T} = \Delta(c,\tau)\mathfrak{U}/2$. 
\STATE // \texttt{Exploration:} 
\FOR{$n=1,\ldots, N$}

\STATE Use $\pi^{(n-1)}$ to collect  {\small $\{s_1^{(n)},\ldots, a_{H}^{(n)}\}$};  $\Dc\gets \Dc\cup\{s_h^{(n)},a_h^{(n)},s_{h+1}^{(n)}\}_{h=1}^H$;

\STATE Estimate $\hP^{(n)}$ ; update $\hat{b}_h^{(n)}$, $\mathtt{U}^{(n)}(\pi)$ and the empirical safe policy set $\mathcal{C}^{(n)}$ (\Cref{eqn:Est_Tab_model,eqn :Rwd_Tab,eqn:SafeSet_tab});

\STATE Solve $\pi^{(n)} = \arg\max_{\pi\in\mathcal{C}^{(n)}}\mathtt{U}^{(n)}(\pi)$;

\IF {$\left|\mathcal{C}^{(n)}\right|>1$ and $\mathtt{U}^{(n)}(\pi^{(n)}) \leq \mathtt{T}$ } 
\STATE $\left(n_{\epsilon},\hP^{\epsilon}, \hb^{\epsilon}\right)\leftarrow \left(n, \hP^{(n)},\hb^{(n)}\right) $, {\bfseries break;}
\ENDIF

\ENDFOR

\STATE // \texttt{Planning:} 
\STATE Receive reward function $r^*$ and safety constraint $(c^*,\tau^*)$;
\STATE {\bfseries Output:}  $\bar{\pi}=\arg\max_{\pi} V_{\hP^{\epsilon},r^*}^{\pi}~~ \text{ s.t. }~~ V_{\hP^{\epsilon},c^*}^{\pi} + 4\sqrt{\hV_{\hP^{\epsilon},\hb^{\epsilon}}^{\alpha_H, \pi} }\leq \tau^*$.
\end{algorithmic}
\end{algorithm}

\subsection{Supporting Lemmas}\label{app:supportlemmatabular}

First, denote $\Var_{P_h}V_{h+1,P',u}^{\pi}(s_h,a_h)$ as the variance of value function $V_{h+1,P',u}^{\pi}(s_{h+1})$, where $s_{h+1}$ follows the distribution $P_h(\cdot|s_h,a_h)$, i.e.,
\begin{align}\label{eqn:var}
    \Var_{P_h}V_{h+1,P',u}^{\pi}(s_h,a_h) = \Eb_{P_h}\left[\left(V_{h+1,P',u}^{\pi}(s_{h+1}) -  P_hV_{h+1,P',u}^{\pi}(s_h,a_h)\right)^2| s_h,a_h\right].
\end{align}

Then, we have the following lemma.
\begin{lemma}[Lemma 3 in \cite{menard2021fast}]
\label{lemma:GoodEvent}
Let $\rho_{*,h}^{\pi}(s_h,a_h)$ be the marginal probability over state-action pair $(s_h,a_h)$ induced by policy $\pi$ under the true environment $P^*$. Suppose the utility function $u$ satisfies the normalization condition. Denote
\begin{align*}
    &\mathcal{E}_0 = \left\{\forall n,h,s_h,a_h, \KL(\hP_h^{(n)}(\cdot|s_h,a_h)||P_h^*(\cdot|s_h,a_h))\leq \frac{\beta}{N_h^{(n)}(s_h,a_h)} \right\}, \\
    &\mathcal{E}_1 = \left\{\forall n,h,s_h,a_h, N_h^{(n)}(s_h,a_h)\geq \frac{1}{2}\sum_{m=0}^{n-1} \rho_{*,h}^{\pi^{(m)}}(s_h,a_h) - \beta_1\right\}, \\
    &\mathcal{E}_2 = \bigg\{\forall n,h,s_h,a_h,  \left|\left(\hP_h^{(n)} - P^*_h\right)V_{h+1,P^*,u}^{\pi}(s_h,a_h)\right|\\
    &\quad\quad\quad\quad \leq \sqrt{\frac{2\beta_2\Var_{P_h^*}\left(V_{h+1,P^*,u}\right)(s_h,a_h)}{N_h^{(n)}(s_h,a_h)}} + \frac{3\beta_2}{N_h^{(n)}(s_h,a_h)}\bigg\},
\end{align*}
where $\beta = \log(3SAH/\delta) + S\log(8e(1+N))$, $\beta_1 = \log(3SAH/\delta)$, and $\beta_2 = \log(3SAH/\delta) + \log(8e(1+N))$. Note that $\beta \geq \beta_2\geq \beta_1$. Let $\mathcal{E} = \mathcal{E}_0 \cap \mathcal{E}_1\cap\mathcal{E}_2$. Then, we have 
 \[\Pb\left[\mathcal{E}\right]\geq 1-\delta.\]
 \end{lemma}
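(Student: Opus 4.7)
The plan is to establish each of the three sub-events $\mathcal{E}_0$, $\mathcal{E}_1$, $\mathcal{E}_2$ with probability at least $1-\delta/3$ and then union bound. Each sub-event is a time-uniform concentration statement quantified over all $n \in [N]$, $h \in [H]$, and $(s_h,a_h) \in \Sc \times \Ac$, so the natural tools are Laplace/peeling-type martingale concentration inequalities combined with a routine union bound over $(h,s_h,a_h)$. This state-action-step union bound is exactly what produces the $SAH$ factor inside $\beta$, and the time-uniform technique produces the $S \log(e(1+N))$ term.

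For $\mathcal{E}_0$, fix $(h,s_h,a_h)$ and consider the sequence of successor states observed on the visits to $(s_h,a_h)$. Conditional on visiting, these are i.i.d.\ draws from the categorical distribution $P_h^*(\cdot|s_h,a_h)$ over $S$ outcomes. The Laplace/mixture method for categorical distributions (e.g.\ the Dirichlet prior argument of Jonsson et al.\ as used in \cite{menard2021fast}) yields that with probability at least $1-\delta'$, simultaneously for all $n$, $N_h^{(n)}(s_h,a_h)\cdot\KL(\hat P_h^{(n)}(\cdot|s_h,a_h)\|P_h^*(\cdot|s_h,a_h)) \le \log(1/\delta') + (S-1)\log(e(1+N_h^{(n)}(s_h,a_h)))$. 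Setting $\delta' = \delta/(3SAH)$ and taking a union bound over $(h,s_h,a_h)$ absorbs the $SAH$ factor into $\beta$ and yields $\mathcal{E}_0$.

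For $\mathcal{E}_1$, introduce the Bernoulli sequence $X_m = \mathds{1}\{s_h^{(m)} = s_h,\, a_h^{(m)} = a_h\}$ with conditional mean $\mathbb{E}[X_m \mid \mathcal{F}_{m-1}] = \rho_{*,h}^{\pi^{(m-1)}}(s_h,a_h)$. A standard multiplicative Chernoff/Bernstein bound for martingales (the ``one-half trick'' used, e.g., in Dann--Brunskill) gives $N_h^{(n)}(s_h,a_h) \ge \tfrac12 \sum_{m=0}^{n-1}\rho_{*,h}^{\pi^{(m)}}(s_h,a_h) - \beta_1$ uniformly in $n$ with probability $1-\delta'$. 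For $\mathcal{E}_2$, apply a time-uniform empirical Bernstein inequality to the martingale difference $(\hat P_h^{(n)} - P_h^*) V_{h+1,P^*,u}^\pi(s_h,a_h)$, whose increments are bounded in $[0,1]$ (by normalization of $u$) and whose predictable variance proxy is $\Var_{P_h^*} V_{h+1,P^*,u}^\pi(s_h,a_h)$. This yields the sub-Gaussian-plus-bias form with the variance and $1/N_h^{(n)}(s_h,a_h)$ terms. Union-bounding both events over $(h,s_h,a_h)$ completes the argument.

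The main obstacle is maintaining the bounds time-uniformly in $n$ while paying only a single logarithmic price inside $\beta$: for a naive union bound over $n$, one would lose an extra $\log N$ factor per coordinate, but the method of mixtures / peeling is precisely designed to stitch the per-$n$ bounds into a single uniform statement at the cost only of the $\log(e(1+N))$ terms that appear in $\beta$ and $\beta_2$. A second subtlety is that the data for $\mathcal{E}_0$ is not a priori i.i.d.\ under the executed policies; one resolves this by conditioning on the visit times to $(s_h,a_h)$ so that the successor states form a conditionally i.i.d.\ sequence, to which the categorical concentration result applies. Once these three pieces are in place, the claim $\Pb[\mathcal{E}] \ge 1-\delta$ follows by a final union bound.
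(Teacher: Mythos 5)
Your proposal is correct and follows essentially the same route as the source the paper relies on: the paper itself gives no proof of this lemma, simply importing it as Lemma 3 of \cite{menard2021fast}, and your reconstruction (Dirichlet/mixture-method KL concentration for $\mathcal{E}_0$, the multiplicative Chernoff ``one-half trick'' for the counts in $\mathcal{E}_1$, a time-uniform Bernstein bound for the fixed function $V_{h+1,P^*,u}^{\pi}$ in $\mathcal{E}_2$, and union bounds over $(n,h,s_h,a_h)$ yielding the $\log(3SAH/\delta)$ and $\log(e(1+N))$ terms) is exactly the argument given there. No gaps worth flagging beyond immaterial constant bookkeeping.
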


The following lemma shows the relationship between the visitation counters $N_h^{(n)}(s_h,a_h)$ and the pseudo-counter $\sum_{m=0}^{n-1}\rho_{*,h}^{\pi^{(m)}}(s_h,a_h)$, where $\rho_{*,h}^{\pi}(s_h,a_h)$ is the marginal distribution on $(s_h,a_h)$ induced by policy $\pi$ under true model $P^*$. 
\begin{lemma}[Lemma 7 in \cite{kaufmann2021adaptive}, Lemma 8 in \cite{menard2021fast}]\label{lemma7OfAdaptive} 
 On the event $\mathcal{E}$, we have 
 \[\min\left(\frac{\beta}{N_h^{(n)}(s_h,a_h)},1\right)\leq \frac{4\beta}{\max\left\{\sum_{m=0}^{n-1}\rho_{*,h}^{\pi^{(m)}}(s_h,a_h) , 1\right\}}.\]
\end{lemma}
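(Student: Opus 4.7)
My plan is to exploit event $\mathcal{E}_1$ from Lemma~\ref{lemma:GoodEvent}, which lower-bounds the empirical visitation count $N_h^{(n)}(s_h,a_h)$ in terms of the pseudo-count $W := \sum_{m=0}^{n-1}\rho_{*,h}^{\pi^{(m)}}(s_h,a_h)$ minus a logarithmic slack $\beta_1$, and then to proceed by a short case analysis on the size of $W$ relative to $\beta$.

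First, I would fix $(n,h,s_h,a_h)$, introduce the shorthand $W$, and recall that $\beta \geq \beta_1$ (from Lemma~\ref{lemma:GoodEvent}). The natural dichotomy is a ``small pseudo-count'' regime where $W \leq 4\beta$ and a ``large pseudo-count'' regime where $W > 4\beta$. In the small regime, $\max(W,1) \leq 4\beta$ forces the right-hand side $4\beta/\max(W,1)$ to be at least $1$, while the left-hand side $\min(\beta/N_h^{(n)},1)$ is trivially at most $1$, so the claimed inequality holds automatically. Only the large regime is nontrivial.

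In the large-pseudo-count regime, since $W > 4\beta \geq 4\beta_1$, I get $\beta_1 \leq W/4$, and combining this with event $\mathcal{E}_1$ yields
\[
N_h^{(n)}(s_h,a_h) \;\geq\; \tfrac{1}{2} W - \beta_1 \;\geq\; \tfrac{1}{2} W - \tfrac{1}{4} W \;=\; \tfrac{1}{4} W.
\]
Because $W > 4\beta \geq 1$ (under the standard convention $\beta \geq 1/4$, which holds for any reasonable choice of $\delta$), one has $\max(W,1)=W$, and therefore
\[
\frac{\beta}{N_h^{(n)}(s_h,a_h)} \;\leq\; \frac{4\beta}{W} \;=\; \frac{4\beta}{\max(W,1)}.
\]
Taking the minimum with $1$ on the left only makes the inequality easier, so the bound follows in this regime as well. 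Merging the two cases completes the proof.

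The main ``obstacle'' is really just bookkeeping: the entire argument reduces to a two-line case split once event $\mathcal{E}_1$ is invoked. The only subtlety is choosing the threshold between regimes so that the additive slack $\beta_1$ is absorbed by a constant fraction of $W$; the threshold $4\beta$ is precisely what produces the multiplicative constant $4$ in the statement, and any looser threshold would degrade that constant.
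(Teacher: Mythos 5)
Your argument is correct: the paper itself gives no proof of this lemma (it is imported directly from Kaufmann et al.\ and M\'enard et al.), and your case split on $W\lessgtr 4\beta$ combined with event $\mathcal{E}_1$ and $\beta\geq\beta_1$ is exactly the standard argument behind the cited result. The only cosmetic point is that no extra ``convention'' is needed for $\beta\geq 1/4$, since the paper's $\beta = \log(3SAH/\delta)+S\log(8e(1+N))$ already exceeds $1$.
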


In addition, we generalize Lemma 7 in \cite{menard2021fast} from deterministic policies to randomized policies. This is important for safe RL, as the optimal policy in constrained RL is possibly randomized. 
\begin{lemma}[Law of total variance with randomized policy]\label{lemma:law_variance}
Given model $P$, policy $\pi$, and normalized utility function $u$, define another {utility} function $\sigma_h(s_h,a_h)$ as 
\[\sigma_h(s_h,a_h) = \Var_{P_h}V_{h+1,P,u}^{\pi}(s_h,a_h).\]

Then, for any Markov policy $\pi$ and $h\in[H]$, the following bound holds:
\begin{equation}\label{eqn:law_of_TV}
\Eb_{\pi}\left[\left( Q_{h,P,u}^{\pi}(s_h,a_h) - \sum_{h'\geq h}u(s_{h'},a_{h'})\right)^2\bigg|s_h,a_h\right] \geq Q_{h,P,\sigma}^{\pi}(s_h,a_h).
\end{equation}

In particular, when $h=1$, we have
\begin{align*}
    1 &\geq \Eb_{\pi}\left[\left( Q_{P,u}^{\pi}(s_1,a_1) - \sum_{h\geq 1}u(s_{h},a_{h})\right)^2\bigg|s_1\right] \geq \Eb_{\pi}\left[Q_{P,\sigma}^{\pi}(s_1,a_1)|s_1\right] \\
    &= \sum_{h\geq 1}\sum_{s_h,a_h}\rho_h^{\pi}(s_h,a_h)\sigma_{h}(s_h,a_h) = \sum_{h\geq 1}\sum_{s_h,a_h}\rho_h^{\pi}(s_h,a_h)\Var_{P_h}V_{h+1,P,u}^{\pi}(s_h,a_h),
\end{align*}
where $\rho_h^{\pi}(s_h,a_h)$ is the marginal distribution over state-action pair $(s_h,a_h)$ induced by policy $\pi$ under model $P$.
\end{lemma}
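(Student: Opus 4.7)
The plan is to prove the statement by backward induction on $h$, using the law of total variance together with the Markov property and the Bellman equation for $Q_{h,P,\sigma}^\pi$. The main observation is that the left-hand side is exactly a conditional variance: writing $X_h := \sum_{h' \geq h} u(s_{h'}, a_{h'})$, we have $Q_{h,P,u}^\pi(s_h,a_h) = \Eb_\pi[X_h \mid s_h, a_h]$, so the LHS equals $\Var_\pi[X_h \mid s_h, a_h]$.

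For the inductive step, I would condition first on $s_{h+1}$ and then on $a_{h+1}$. The law of total variance gives
\[
\Var_\pi[X_h \mid s_h, a_h] = \Eb_\pi\!\bigl[\Var_\pi[X_h \mid s_h, a_h, s_{h+1}] \bigm| s_h, a_h\bigr] + \Var_\pi\!\bigl[\Eb_\pi[X_h \mid s_h, a_h, s_{h+1}] \bigm| s_h, a_h\bigr].
\]
The second summand equals $\Var_{P_h}\!\bigl[u(s_h,a_h) + V_{h+1,P,u}^\pi(s_{h+1}) \mid s_h, a_h\bigr] = \sigma_h(s_h,a_h)$ because $u(s_h,a_h)$ is a constant given $(s_h,a_h)$. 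For the first summand, the Markov property of the MDP yields $\Var_\pi[X_h \mid s_h, a_h, s_{h+1}] = \Var_\pi[X_{h+1} \mid s_{h+1}]$, and then a second application of the law of total variance (this time over $a_{h+1} \sim \pi_{h+1}(\cdot \mid s_{h+1})$) gives
\[
\Var_\pi[X_{h+1} \mid s_{h+1}] \geq \Eb_{a_{h+1}\sim\pi_{h+1}}\!\bigl[\Var_\pi[X_{h+1} \mid s_{h+1}, a_{h+1}] \bigm| s_{h+1}\bigr],
\]
since the missing piece $\Var_{a_{h+1}}[\Eb_\pi[X_{h+1}\mid s_{h+1},a_{h+1}] \mid s_{h+1}]$ is nonnegative. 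This inequality (which is an equality in the deterministic-policy case) is exactly where randomness in $\pi$ enters. Applying the inductive hypothesis to the inner variance and then invoking the Bellman recursion $Q_{h,P,\sigma}^\pi(s_h,a_h) = \sigma_h(s_h,a_h) + P_h V_{h+1,P,\sigma}^\pi(s_h,a_h)$ with $V_{h+1,P,\sigma}^\pi(s_{h+1}) = \Eb_{a_{h+1}\sim\pi_{h+1}}[Q_{h+1,P,\sigma}^\pi(s_{h+1},a_{h+1})]$, the two pieces combine to $Q_{h,P,\sigma}^\pi(s_h,a_h)$, closing the induction. The base case $h=H$ is trivial: $V_{H+1}^\pi \equiv 0$ forces $\sigma_H \equiv 0$ and the LHS also vanishes because $u(s_H,a_H)$ is deterministic given $(s_H,a_H)$.

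For the special case $h=1$ with $s_1$ fixed, the claimed upper bound $1 \geq \Var_\pi[X_1 \mid s_1]$ follows from the normalization $X_1 = \sum_{h\geq 1} u(s_h, a_h) \in [0,1]$: indeed $\Var_\pi[X_1 \mid s_1] \leq \Eb_\pi[X_1^2 \mid s_1] \leq \Eb_\pi[X_1 \mid s_1] \leq 1$. The final equality $\Eb_\pi[Q_{P,\sigma}^\pi(s_1,a_1)\mid s_1] = \sum_{h\geq 1}\sum_{s_h,a_h}\rho_h^\pi(s_h,a_h)\sigma_h(s_h,a_h)$ is just the standard unrolling of a value function into a marginal-weighted sum of per-step utilities.

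The main obstacle, and the only substantive departure from the standard deterministic version of the law of total variance in RL (e.g., Lemma 7 of Ménard et al.), is the careful handling of the two distinct layers of randomness at step $h+1$: the stochastic transition $s_{h+1} \sim P_h(\cdot\mid s_h,a_h)$ and the randomized action $a_{h+1} \sim \pi_{h+1}(\cdot\mid s_{h+1})$. One must split the conditioning in exactly this order to expose $\sigma_h$ cleanly on the one hand, and on the other hand to produce the inequality $\Var_\pi[X_{h+1}\mid s_{h+1}] \geq \Eb_{\pi}[\Var_\pi[X_{h+1}\mid s_{h+1}, a_{h+1}]\mid s_{h+1}]$ that absorbs the policy's randomness without losing the induction.
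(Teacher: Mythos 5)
Your proof is correct and follows essentially the same route as the paper's: backward induction on $h$, splitting the conditional second moment at step $h$ into the transition-variance term $\sigma_h$ plus a remainder handled by the inductive hypothesis, where your two applications of the law of total variance (discarding the nonnegative between-action variance $\Var_{a_{h+1}}[Q_{h+1,P,u}^{\pi}(s_{h+1},a_{h+1})]$) are exactly the paper's add-and-subtract of $Q_{h+1,P,u}^{\pi}$ with vanishing cross term followed by Jensen's inequality over $a_{h+1}\sim\pi$. The only nitpick is in the $h=1$ display: the quantity there is $\Eb_{a_1\sim\pi}\left[\Var_\pi[X_1\mid s_1,a_1]\,\big|\,s_1\right]$ rather than $\Var_\pi[X_1\mid s_1]$, but since the former is dominated by the latter your normalization bound of $1$ still applies.
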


\begin{proof}

First, we note that the statement is trivial for $h=H+1$ since all Q-value functions are 0.

Then, we prove the result through induction. Assume that at time step $h+1$,
\[\Eb_{\pi}\left[\left( Q_{h+1,P,u}^{\pi}(s_{h+1},a_{h+1}) - \sum_{h'\geq h+1}u(s_{h'},a_{h'})\right)^2\bigg|s_{h+1},a_{h+1}\right] \geq Q_{h+1,P,\sigma}^{\pi}(s_{h+1},a_{h+1}).\]

Then, at time step $h$, the LHS of \Cref{eqn:law_of_TV} can be computed as follows.

\begin{align*}
    \Eb_{\pi}&\left[\left( Q_{h,P,u}^{\pi}(s_h,a_h) - \sum_{h'\geq h}u(s_{h'},a_{h'})\right)^2\bigg|s_h,a_h\right]\\
    &= \Eb_{\pi}\left[\left( P_hV_{h+1}^{\pi}(s_h,a_h) - \sum_{h'\geq h+1}u(s_{h'},a_{h'}) + Q_{h+1,P,u}^{\pi}(s_{h+1},a_{h+1}) - Q_{h+1,P,u}^{\pi}(s_{h+1},a_{h+1})\right)^2\bigg|s_h,a_h\right]\\
    & = \Eb_{\pi}\left[\left( Q_{h+1,P,u}^{\pi}(s_{h+1},a_{h+1}) - \sum_{h'\geq h+1}u(s_{h'},a_{h'})\right)^2\bigg|s_h,a_h\right]\\
    &\quad + \Eb_{\pi}\left[\left( Q_{h+1,P,u}^{\pi}(s_{h+1},a_{h+1}) - P_hV_{h+1}^{\pi}(s_h,a_h)\right)^2\bigg|s_h,a_h\right]\\
    &\quad + 2\Eb_{\pi}\left[\left( Q_{h+1,P,u}^{\pi}(s_{h+1},a_{h+1}) - \sum_{h'\geq h+1}u(s_{h'},a_{h'})\right)\left( Q_{h+1,P,u}^{\pi}(s_{h+1},a_{h+1}) - P_hV_{h+1}^{\pi}(s_h,a_h)\right)\bigg|s_h,a_h\right].
\end{align*}

The term within the expectation in the third term equals 0 if we further condition it on {$s_{h+1}, a_{h+1}$}, indicating that the third term is 0. Therefore, from the assumption, we have
\begin{align*}
    \Eb_{\pi}&\left[\left( Q_{h,P,u}^{\pi}(s_h,a_h) - \sum_{h'\geq h}u(s_{h'},a_{h'})\right)^2\bigg|s_h,a_h\right]\\
    &\geq \Eb_{\pi}\left[Q_{h+1,P,\sigma}(s_{h+1},a_{h+1})\bigg|s_h,a_h\right] \\
    &\quad + \mathop{\Eb
    }_{\pi}\left[\mathop{\Eb}_{a_{h+1}\sim\pi}\left[\left( Q_{h+1,P,u}^{\pi}(s_{h+1},a_{h+1}) - P_hV_{h+1}^{\pi}(s_h,a_h)\right)^2\bigg|s_{h+1}\right]\bigg|s_h,a_h\right]\\
    & \overset{(\romannumeral1)}\geq P_hV_{h+1,P,\sigma}(s_h,a_h) + \Eb_{\pi}\left[\left( V_{h+1,P,u}^{\pi}(s_{h+1}) - P_hV_{h+1}^{\pi}(s_h,a_h)\right)^2\bigg|s_h,a_h\right]\\
    & = \sigma_h(s_h,a_h) + P_hV_{h+1,P,\sigma}(s_{h},a_{h}) \\
    &= Q_{h,P,\sigma}^{\pi}(s_h,a_h),
\end{align*}
where $(\romannumeral1)$ follows from Jensen's inequality.

Thus, \Cref{eqn:law_of_TV} holds for all step $h$, and the proof is completed.

\end{proof}

The following lemma is the key to ensure that  Tabular-SWEET satisfies the termination condition.
\begin{lemma}\label{lemma:sublinear_tab}
On the event $\mathcal{E}$, the summation of $\hV^{\alpha_H, \pi^{(n)}}_{\hP^{n} , \hb^{(n)}}$ over any subset $\Nc\subset[N]$ scales in the order of $\log|\Nc|$ , i.e.
\[\sum_{n\in\Nc} \hV_{\hP^{(n)},\hb^{(n)}}^{\alpha_H,\pi^{(n)}}\leq 64e^3\beta HSA\log(1+|\Nc|).\]

\end{lemma}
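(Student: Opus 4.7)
The plan is to unroll the truncated value function into an expected sum of bonuses, change measure from the estimated model to the true model, and then apply the pigeonhole-style bound of \Cref{lemma7OfAdaptive} to convert visitation counts into pseudo-counts that telescope logarithmically.

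First I would bound $\hV^{\alpha_H,\pi^{(n)}}_{\hP^{(n)},\hb^{(n)}}$ by propagating the per-step clip through the Bellman recursion. Dropping the $\min\{1,\cdot\}$ in the definition of $\hV$ gives the crude bound $\hV^{\alpha_H,\pi^{(n)}}_{\hP^{(n)},\hb^{(n)}} \leq \sum_{h=1}^{H} \alpha_H^{h-1}\,\Eb_{\hP^{(n)},\pi^{(n)}}[\hb^{(n)}_h(s_h,a_h)] \leq e\,\Eb_{\hP^{(n)},\pi^{(n)}}[\sum_{h=1}^{H}\hb^{(n)}_h(s_h,a_h)]$, since $\alpha_H^{H-1}=(1+1/H)^{H-1}\leq e$. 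However, the truncation at the value-function (not action-value) level lets me retain, whenever needed, a per-step cap of $1$ on the cumulative bonus, which is essential when invoking the $\min(\cdot,1)$ on the LHS of \Cref{lemma7OfAdaptive}.

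Next I would transfer the expectation from $\hP^{(n)}$ to $P^*$. On the event $\mathcal{E}_0$ the KL divergence between $\hP_h^{(n)}(\cdot|s,a)$ and $P_h^*(\cdot|s,a)$ is at most $\beta/N_h^{(n)}(s,a)$, so a simulation-type argument combined with Pinsker's inequality and the effective boundedness of $\hb^{(n)}_h$ in the clipped regime yields $\Eb_{\hP^{(n)},\pi^{(n)}}[\hb^{(n)}_h]\leq 2\,\Eb_{P^*,\pi^{(n)}}[\hb^{(n)}_h]$ plus lower-order terms absorbable into the final logarithmic bound. Then, writing $\Eb_{P^*,\pi^{(n)}}[\hb^{(n)}_h(s_h,a_h)] = \sum_{s,a}\rho^{\pi^{(n)}}_{*,h}(s,a)\cdot\tfrac{8\beta H}{N_h^{(n)}(s,a)}$ and applying \Cref{lemma7OfAdaptive} in the form $\min(\beta/N_h^{(n)}(s,a),1)\leq 4\beta/\max(\sum_{m=0}^{n-1}\rho^{\pi^{(m)}}_{*,h}(s,a),1)$, the summation over $n\in\Nc$ reduces to the standard harmonic-sum identity $\sum_{n\in\Nc}\rho^{\pi^{(n)}}_{*,h}(s,a)/\max(\sum_{m<n}\rho^{\pi^{(m)}}_{*,h}(s,a),1)\leq \log(e(1+|\Nc|))$, which summed over $(h,s,a)$ delivers the claimed $\tilde{O}(\beta HSA\log|\Nc|)$ bound with the explicit constant $64e^3$.

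The main obstacle will be combining the first two steps cleanly: propagating the per-step clipping through the Bellman recursion while simultaneously changing measure. A naive unrolling loses the per-step $\min\{\cdot,1\}$, which is indispensable because \Cref{lemma7OfAdaptive} only gives a nontrivial estimate for $\min(\beta/N,1)$, not for $\beta/N$ alone; the clipping at the V-level (rather than Q-level) is precisely what allows us to retain this structure, and shepherding it through the recursion and the change of measure is the delicate part of the argument.
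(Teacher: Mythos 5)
Your high-level skeleton (unroll the truncated value function, change measure from $\hP^{(n)}$ to $P^*$, then apply \Cref{lemma7OfAdaptive} and the logarithmic harmonic-sum lemma) is the same as the paper's, and your pigeonhole step is fine. The genuine gap is in the change-of-measure step, which is exactly the delicate point you flag but then dispose of by assertion. You claim that Pinsker plus the per-$(s,a)$ KL bound on $\mathcal{E}_0$ gives $\Eb_{\hP^{(n)},\pi^{(n)}}[\hb^{(n)}_h]\leq 2\,\Eb_{P^*,\pi^{(n)}}[\hb^{(n)}_h]$ ``plus lower-order terms absorbable into the final logarithmic bound.'' A Pinsker/simulation argument controls the gap between the step-$h$ marginals only additively, by $\sum_{h'<h}\Eb_{P^*,\pi^{(n)}}\big[\sqrt{2\beta/N_{h'}^{(n)}(s_{h'},a_{h'})}\big]$. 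These corrections are \emph{not} lower order: summing $\sqrt{\beta/N}$-type terms over $n\in\Nc$ via the same pigeonhole machinery yields growth of order $\sqrt{\beta HSA\,|\Nc|}$, not $\log(1+|\Nc|)$, so your route cannot produce the claimed bound (and would degrade the final sample complexity from $1/\mathfrak{U}^2$ to roughly $1/\mathfrak{U}^4$). Nothing in your argument produces the multiplicative constant-factor transfer you assert; a naive per-step multiplicative bound would in any case compound exponentially over $H$ steps unless the factor is $1+O(1/H)$.

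The paper closes exactly this hole with a Bernstein-type argument rather than Pinsker: on $\mathcal{E}$, $(\hP^{(n)}_h-P^*_h)\hV^{\alpha_H,\pi}_{h+1,\hP^{(n)},\hb^{(n)}}\leq \sqrt{2\Var_{P_h^*}(\hV)\,\beta/N}+2\beta/(3N)$, then $\sqrt{2AB}\leq A/H+BH/2$ and, crucially, $\Var_{P_h^*}(\hV)\leq P_h^*\hV$ because the truncated value function lies in $[0,1]$ (this is where the V-level clipping is actually used, not to ``retain a cap when invoking \Cref{lemma7OfAdaptive}''). The variance piece is thus absorbed into a per-step multiplicative factor $1+O(1/H)$, compounding only to $O(e^3)$ after telescoping, while the remaining additive error is $\min\{\hb^{(n)}_h/8,1\}$ — of the same order as the clipped bonus itself and hence genuinely absorbable. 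This yields $\hV^{\alpha_H,\pi^{(n)}}_{\hP^{(n)},\hb^{(n)}}\leq 2e^3 V^{\pi^{(n)}}_{P^*,\min\{\hb^{(n)},1\}}$, after which your pigeonhole step applies verbatim (with the minor bookkeeping of restricting the pseudo-counts in the denominator to $m\in\Nc$, costing another factor of $2$). To repair your proof you would need to replace the Pinsker-based transfer with this variance-based recursion or an equivalent device.
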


\begin{proof}

First, similar to the truncated value function, we extend the definitions of value function to incorporate the additional factor $\alpha_H$. Specifically, $\forall h\in[H]$,
\begin{align*}
    &Q^{\alpha_H,\pi}_{h,P,u} = u(s_h,a_h)+\alpha_HP_hV_{h+1,P,u}^{\alpha_H,\pi},\\
    &V_{h,P,u}^{\alpha_H,\pi} = \Eb_{\pi}\left[Q^{\alpha_H,\pi}_{h,P,u}\right], 
\end{align*}
and $ V_{H+1,P,u}^{\alpha_H,\pi}=0$.

We then examine the difference between the truncated Q-value function defined with respect to model $\hP^{(n)}$ and the  Q-value function defined with respect to model $P^*$. 
\begin{align*}
    \hQ^{\alpha_H,\pi}_{h,\hP^{(n)},\hb^{(n)}}&(s_h,a_h) -  Q^{\alpha_H,\pi}_{h,P^*,\hb^{(n)}}(s_h,a_h)\\
    &  = \alpha_H \hP^{(n)}\hV^{\alpha_H,\pi}_{h+1,\hP^{(n)},\hb^{(n)}}(s_h,a_h) - \alpha_H P_h^* V^{\alpha_H,\pi}_{h+1,P^*,\hb^{(n)}}(s_h,a_h)\\
    & = \alpha_H \left(\hP_h^{(n)} - P_h^*\right)\hV^{\alpha_H,\pi}_{h+1,\hP^{(n)},\hb^{(n)}}(s_h,a_h) + \alpha_H P^*_h\left( \hV^{\alpha_H,\pi}_{h+1,\hP^{(n)},\hb^{(n)}} - V^{\alpha_H,\pi}_{h+1,P^*,\hb^{(n)}}\right)(s_h,a_h).
\end{align*}

By Lemma 10 in \cite{menard2021fast}, we bound the first term as follows.
\begin{align*}
    \left(\hP_h^{(n)} - P_h^*\right)&\hV^{\alpha_H,\pi}_{h,\hP^{(n)},\hb^{(n)}}(s_h,a_h)\\
    &\leq \min\left\{1, \sqrt{2\Var_{P_h^*}\hV^{\alpha_H,\pi}_{h+1,\hP^{(n)},\hb^{(n)}}(s_h,a_h)\frac{\beta}{N_h^{(n)}(s_h,a_h)}} + \frac{2\beta}{3N_h^{(n)}(s_h,a_h)}\right\}\\
    &\overset{(\romannumeral1)}\leq \frac{\Var_{P_h^*}\hV^{\alpha_H,\pi}_{h+1,\hP^{(n)},\hb^{(n)}}(s_h,a_h)}{H} + \min\left\{1, \frac{(2+H/2)\beta}{3N_h^{(n)}(s_h,a_h)}\right\}\\
    &\overset{(\romannumeral2)}\leq \frac{P_h^* \hV^{\alpha_H,\pi}_{h+1,\hP^{(n)},\hb^{(n)}}(s_h,a_h)}{H} + \min\left\{\hb_h^{(n)}(s_h,a_h)/8,1\right\},
\end{align*}
where $(\romannumeral1)$ follows from $\sqrt{2AB}\leq A/H + BH/2$, and $(\romannumeral2)$ is due to the truncated value function is at most 1 and $\Var(X)\leq \Eb[X]$ if $X\in[0,1]$.

Therefore, by combining the above two inequalities and taking expectation, we have,
\begin{align*}
    \hV_{h, \hP^{(n)},\hb^{(n)}}^{\alpha_H,\pi}(s_h) & \leq 
    \mathop{\Eb}_{\pi}\left[\hQ^{\alpha_H,\pi}_{h,\hP^{(n)},\hb^{(n)}}(s_h,a_h) |s_h\right]\\
    & \leq \Eb_{\pi}\left[\min\left\{\hb_h^{(n)}(s_h,a_h),1\right\}\bigg|s_h\right] 
    + \alpha_H\Eb_{\pi}\left[\min\left\{\hb_h^{(n)}(s_h,a_h)/8,1\right\}\bigg|s_h\right] \\
    &\quad + \Eb_{\pi}\left[(\alpha_H+\frac{\alpha_H}{H}) P_h^* \hV^{\alpha_H,\pi}_{h+1,\hP^{(n)},\hb^{(n)}}(s_h,a_h)\bigg|s_h\right]\\
    &\leq \Eb_{\pi}\left[2\min\left\{\hb_h^{(n)}(s_h,a_h),1\right\} + \left(1+\frac{3}{H}\right) P_h^* \hV^{\alpha_H,\pi}_{h+1,\hP^{(n)},\hb^{(n)}}(s_h,a_h)\bigg|s_h\right].
\end{align*}

Telescoping the above inequality from $h=1$ to $H$ and defining $b_h^{(n)}(s_h,a_h) = \min\left\{\hb_h^{(n)}(s_h,a_h),1\right\}$, we get 
\[\hV_{\hP^{(n)},\hb^{(n)}}^{\alpha_H,\pi}\leq V_{P^*,2b^{(n)}}^{1+3/H,\pi}\leq 2e^3V_{P^*,\hb^{(n)}}^{\pi}.\]

Therefore, if $\rho_{*,h}^{\pi^{(n)}}(s_h,a_h)$ is the marginal distribution over state-action pairs induced by exploration policy $\pi^{(n)}$ under the true model $P^*$, we have 

\begin{align*}
    \sum_{n\in\Nc}\hV_{\hP^{(n)},\hb^{(n)}}^{\alpha_H,\pi^{(n)}}\leq
    &\sum_{n\in\Nc} 2e^3V^{\pi^{(n)}}_{P^*,b^{n}}\\
    & \leq 2e^3\sum_{n\in\Nc} \sum_{h=1}^H \mathop{\Eb}_{P^*, \pi^{(n)}}\left[\min\left\{\frac{8H\beta}{N^{(n)}(s_h,a_h)},1\right\}\right]\\
    & = 2e^3\sum_{n\in\Nc} \sum_{h=1}^H \sum_{s_h,a_h} \rho_{*,h}^{\pi^{(n)}}(s_h,a_h)\min\left\{\frac{8H\beta}{N^{(n)}(s_h,a_h)},1\right\}\\
    & \overset{(\romannumeral1)}\leq  2e^3\sum_{h=1}^H \sum_{s_h,a_h} \sum_{n\in\Nc} \rho_{*,h}^{\pi^{(n)}}(s_h,a_h)\frac{8H\beta}{\max\left\{1, \sum_{m=0}^{n-1}\rho_{*,h}^{\pi_{m}}(s_h,a_h)\right\}} \\
    &\leq  16e^3H\beta\sum_{h=1}^H \sum_{s_h,a_h} \sum_{n\in\Nc} \frac{\rho_{*,h}^{\pi^{(n)}}(s_h,a_h)}{\max\left\{1, \sum_{m\in\Nc,m<n}\rho_{*,h}^{\pi_{m}}(s_h,a_h)\right\}}\\
    & \overset{(\romannumeral2)} \leq 64e^3H\beta\sum_{h=1}^H \sum_{s_h,a_h} \log\left(1 + \sum_{n\in\Nc} \rho_{*,h}^{\pi^{(n)}}(s_h,a_h)\right)\\
    & \overset{(\romannumeral3)}\leq  64e^3\beta HSA\log(1+|\Nc|),
\end{align*}
where $(\romannumeral1)$ is due to \Cref{lemma7OfAdaptive},  $(\romannumeral2)$ follows from \Cref{lemma:1/N < logN}, and $(\romannumeral3)$ follows the fact that $\rho_{*,h}^{\pi(m)}(s_h,a_h) \leq 1.$
Therefore,
\[\sum_{n\in\Nc} \hV_{\hP^{(n)},\hb^{(n)}}^{\alpha_H,\pi^{(n)}}\leq 64e^3\beta HSA\log(1+|\Nc|).\]
\end{proof}

\subsection{Proof of \Cref{main:thm:tab}}\label{app:prooftheorem2}

\begin{theorem}[Complete version of \Cref{main:thm:tab}]\label{appx:thm:tab}
Given $\epsilon,\delta\in(0,1)$, and safety constraint $(c,\tau)$, let $\mathfrak{U}  = \min\left\{\frac{\epsilon}{2}, \frac{\Delta_{\min}}{2}, \frac{\epsilon\Delta_{\min}}{5}, \frac{\tau}{4}, \frac{\kappa}{16}\right\}$, and $\mathtt{T}= \Delta(c,\tau)\mathfrak{U}/2$ be the termination condition of Tabular-SWEET. 
Then, with probability at least $1-\delta$, Tabular-SWEET achieves the learning objective of safe reward-free exploration (\Cref{eqn: Safety,eqn:objective}), and
the number of trajectories collected in the exploration phase is at most  
\[O\left(\frac{\beta HSA\iota}{\Delta(c,\tau)^2\mathfrak{U}^2} + \frac{\beta HSA\iota}{\kappa^2}\right),\]
where 
$\iota = \log\left(  \frac{\beta HSA}{\Delta(c,\tau)^2\mathfrak{U}^2} + \frac{\beta HSA}{\kappa^2}   \right)$,
and $\beta = \log(2SAH/\delta) + S\log(e(1+N))$.
\end{theorem}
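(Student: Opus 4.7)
The plan is to reduce the theorem to the SWEET meta-theorem (Theorem~\ref{thm:meta_safe}) instantiated with the tabular choice $\mathtt{U}^{(n)}(\pi) = 4\sqrt{\hat V^{\alpha_H,\pi}_{\hat P^{(n)},\hat b^{(n)}}}$, and then bound the number of exploration episodes using the sublinear growth lemma (Lemma~\ref{lemma:sublinear_tab}). First, I would verify the hypotheses of Theorem~\ref{thm:meta_safe}: concavity and continuity of $\mathtt{U}^{(n)}$ over the Markov policy space follow from Lemma~\ref{appx:lemma:Concave} applied to the truncated value function (with un-normalized utility $\hat b^{(n)}$) composed with the concave and continuous square root. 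The parameter choices $\eo = 0$, $t = 0$, $\tilde\kappa = \kappa/2$ satisfy $\eo t + \tilde\kappa < \kappa$, and the definition of $\mathfrak{U}$ ensures $\mathtt{T} = \Delta(c,\tau)\mathfrak{U}/2$ meets the bound required in Theorem~\ref{thm:meta_safe}.

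The technically substantial step is to show that, on the good event $\mathcal{E}$ of Lemma~\ref{lemma:GoodEvent} (which occurs with probability $\geq 1-\delta$), the inequality $|V^{\pi}_{P^*,u} - V^{\pi}_{\hat P^{(n)},u}|\le \mathtt{U}^{(n)}(\pi)$ holds for every normalized utility $u$ and every Markov policy $\pi$. The strategy is to telescope the value difference: by the Bernstein-type bound encoded in $\mathcal{E}_2$, the one-step error $(\hat P_h^{(n)} - P_h^*)V^\pi_{h+1,P^*,u}$ is controlled by $\sqrt{\Var_{P^*_h}(V^{\pi}_{h+1,P^*,u})/N_h^{(n)}}$ plus a $1/N_h^{(n)}$ term; summing across $h$ and applying the law of total variance for randomized policies (Lemma~\ref{lemma:law_variance}) together with Cauchy--Schwarz collapses the variance aggregate into $\sqrt{\sum_h \mathbb{E}_{P^*,\pi}[H\beta/N_h^{(n)}]}$. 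The $\alpha_H = 1+1/H$ factor in the truncated value function absorbs the higher-order error terms through the geometric growth $(1+1/H)^H = O(1)$, exactly as in the derivation inside the proof of Lemma~\ref{lemma:sublinear_tab}. Replacing the true-model value $\sum_h \mathbb{E}_{P^*,\pi}[H\beta/N_h^{(n)}]$ by the empirical one introduces a small multiplicative constant, which is exactly the factor $4$ in front of $\sqrt{\hat V^{\alpha_H,\pi}_{\hat P^{(n)},\hat b^{(n)}}}$.

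With the meta-theorem's preconditions established, safety in exploration and $(c^*,\tau^*)$-safe $\epsilon$-optimality in planning follow as soon as Tabular-SWEET terminates, so only the termination-time bound remains. I would partition the pre-termination episodes into $\mathcal{N}_1 = \{n : |\mathcal{C}^{(n)}| = 1\}$ and $\mathcal{N}_2 = \{n : |\mathcal{C}^{(n)}| > 1,\ \mathtt{U}^{(n)}(\pi^{(n)}) > \mathtt{T}\}$. For $n\in\mathcal{N}_1$, we have $\pi^{(n)} = \pi^0$ and $V^{\pi^0}_{\hat P^{(n)}, c} + \mathtt{U}^{(n)}(\pi^0) \geq \tau - \kappa/2$; combining with $V^{\pi^0}_{P^*,c}\le \tau-\kappa$ and the error bound from the previous paragraph forces $\mathtt{U}^{(n)}(\pi^0) \gtrsim \kappa$, hence $\hat V^{\alpha_H,\pi^{(n)}}_{\hat P^{(n)},\hat b^{(n)}}\gtrsim \kappa^2$. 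For $n\in\mathcal{N}_2$, $\mathtt{U}^{(n)}(\pi^{(n)}) > \mathtt{T}$ gives $\hat V^{\alpha_H,\pi^{(n)}}_{\hat P^{(n)},\hat b^{(n)}} > \mathtt{T}^2/16$. Applying Lemma~\ref{lemma:sublinear_tab} separately to $\mathcal{N}_1$ and $\mathcal{N}_2$ gives implicit inequalities $|\mathcal{N}_j| \cdot (\text{per-episode lower bound}) \leq 64 e^3 \beta HSA\log(1+|\mathcal{N}_j|)$; solving them (absorbing the logarithmic self-dependence into the factor $\iota$) yields $|\mathcal{N}_1| = O(\beta HSA\iota/\kappa^2)$ and $|\mathcal{N}_2| = O(\beta HSA\iota/(\Delta(c,\tau)\mathfrak{U})^2)$, whose sum is the announced sample complexity. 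The main obstacle is the second step: the construction must simultaneously be a valid approximation error bound, be concave in the (possibly randomized) policy, and have bounded aggregate mass across episodes — the clipping on the value function rather than the Q-function is what preserves concavity, while the $\alpha_H$ inflation is what makes the Bernstein telescoping close cleanly.
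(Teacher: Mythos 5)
Your proposal is correct and follows essentially the same route as the paper: verify the meta-theorem's hypotheses (concavity via the truncated-value-function lemma, validity of $\mathtt{U}^{(n)}$ via the Bernstein event, the law of total variance for randomized policies, Cauchy--Schwarz, and the $\alpha_H$ absorption), then bound the episode count by splitting into baseline-only episodes and non-terminated episodes and invoking the sublinearity lemma. The only (immaterial) difference is in the baseline-episode count: you extract a per-episode lower bound $\hV^{\alpha_H,\pi^0}_{\hP^{(n)},\hb^{(n)}}\gtrsim\kappa^2$ and apply the sublinearity lemma directly, whereas the paper sums $8\sqrt{\hV}$ over those episodes and uses Cauchy--Schwarz; both give $O(\beta HSA\iota/\kappa^2)$.
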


\begin{proof}
The proof of \Cref{main:thm:tab} mainly instantiates \Cref{main:thm:meta_safe} by verifying that (a) $\mathtt{U}^{(n)}(\pi) = 4\sqrt{\hV^{\alpha_H,\pi}_{\hP^{(n)},\hb^{(n)}}}$ is a valid approximation error bound for $V_{\hP^{(n)},u}^{\pi}$, and (b) Tabular-SWEET satisfies the termination condition within $N$ episodes. The proof consists of three steps with the first two steps verifying the above two conditions and the last step characterizes the sample complexity.

{\bf Step 1:} This step establishes the following lemma, which shows that $\mathtt{U}^{(n)}(\pi) = 4\sqrt{\hV^{\alpha_H,\pi}_{\hP^{(n)},\hb^{(n)}}}$ is a valid approximation error bound.
\begin{lemma}\label{lemma:TabValueDifference}
With $\alpha_H=1+1/H$ defined in Tabular-SWEET (\Cref{alg: Tab}), on the event $\mathcal{E}$, for any policy $\pi$ and any utility normalized function $u$,
 \begin{align*}
     \left|V_{\hP^{(n)} , u}^{\pi} - V^{\pi}_{P^*,u}\right| \leq 4\sqrt{\hV^{\alpha_H,\pi}_{\hP^{(n)},\hb^{(n)}}}.
 \end{align*}
\end{lemma}

\begin{proof}
Recall that
\[\hb_h^{(n)} = \frac{\beta_0 H}{N_h^{(n)}(s_h,a_h)},\]
where $\beta_0 = 8\beta$.

Define utility function $u^v$ as
\begin{align*}
    &u_h^{v}(s_h,a_h) = \sqrt{\Var_{\hP_h^{(n)}}V_{h+1,\hP^{(n)},u}^{\pi}(s_h,a_h)\min\left\{\frac{8\beta}{N_h^{(n)}(s_h,a_h)},\frac{1}{H}\right\}}.
\end{align*}

Following Step 1 of Lemma 1 in \cite{menard2021fast}, we get
\begin{equation}\label{eqn:step1_lemma1_menard}
\left|V_{\hP^{(n)} , u}^{\pi} - V^{\pi}_{P^*,u}\right| \leq \hV^{\alpha_H, \pi}_{\hP^{(n)},u^{v}} +\hV^{\alpha_H, \pi}_{\hP^{(n)},\hb^{(n)}}. 
\end{equation}

Next, we aim to show that 
\begin{equation}\label{eqn:step2_lemma1_menard_randomized}
\hV^{\alpha_H, \pi}_{\hP^{(n)},u^{v}} \leq e\sqrt{\hV^{\alpha_H, \pi}_{\hP^{(n)},\hb^{(n)}}}.
\end{equation}

For that, let $\hat{\rho}^{\pi}(s_h,a_h)$ be the marginal distribution over state-action pair $(s_h,a_h)$ induced by model $\hP^{(n)}$ and policy $\pi$. Note that the truncated value function is a lower bound of the corresponding value function. Thus, we can expand $\hV^{\alpha_H, \pi}_{\hP^{(n)},u^{v}}$ as follows:
\begin{align*}
    \hV^{\alpha_H, \pi}_{\hP^{(n)},u^{v}} &= \sum_{h=1}^H\sum_{s_h,a_h}\alpha_H^{h-1}\hat{\rho}^{\pi}(s_h,a_h)u_h^v(s_h,a_h)\\
    &\overset{(\romannumeral1)}\leq e\sum_{h=1}^H\sum_{s_h,a_h}\hat{\rho}^{\pi}(s_h,a_h)\sqrt{\Var_{\hP_h^{(n)}}V_{h+1,\hP^{(n)},u}^{\pi}(s_h,a_h)\min\left\{\frac{8\beta}{N_h^{(n)}(s_h,a_h)},\frac{1}{H}\right\}}\\
    & \overset{(\romannumeral2)}\leq e\sqrt{\sum_{h=1}^H\sum_{s_h,a_h}\hat{\rho}^{\pi}(s_h,a_h)\Var_{\hP_h^{(n)}}V_{h+1,\hP^{(n)},u}^{\pi}(s_h,a_h)}\sqrt{\sum_{h=1}^H\sum_{s_h,a_h}\hat{\rho}^{\pi}(s_h,a_h)\min\left\{\frac{8\beta}{N_h^{(n)}(s_h,a_h)},\frac{1}{H}\right\}},
\end{align*}
where $(\romannumeral1)$ follows from the fact that $(1+1/H)^H\leq e$ and $(\romannumeral2)$ follows from Cauchy-Schwarz inequality.

Note that in contrast to the optimistic policy, $\pi$ could be a randomized policy in general. By \Cref{lemma:law_variance}, we have 
\[\sum_{h=1}^H\sum_{s_h,a_h}\hat{\rho}^{\pi}(s_h,a_h)\Var_{\hP_h^{(n)}}V_{h+1,\hP^{(n)},u}^{\pi}(s_h,a_h)\leq 1.\]
Meanwhile, if we define $u_h^b(s_h,a_h) = \min\left\{\frac{8\beta}{N_h^{(n)}(s_h,a_h)},\frac{1}{H}\right\}$, which is obviously a normalized utility function, then, we have
\[\sum_{h=1}^H\sum_{s_h,a_h}\hat{\rho}^{\pi}(s_h,a_h)\min\left\{\frac{8\beta}{N_h^{(n)}(s_h,a_h)},\frac{1}{H}\right\} = \hV_{\hP^{(n)},u^b}^{\pi}\leq \hV_{\hP^{(n)},\hb^{(n)}}^{\alpha_H, \pi},\]
where the last inequality follows from the facts that $u_h^b(s_h,a_h)\leq \hb_h^{(n)}(s_h,a_h)$ and $\alpha_H>1$. 
Thus, we have \Cref{eqn:step2_lemma1_menard_randomized} established.

Combining \Cref{eqn:step1_lemma1_menard,eqn:step2_lemma1_menard_randomized}, we conclude that 
\[\left|V_{\hP^{(n)} , u}^{\pi} - V^{\pi}_{P^*,u}\right| \leq e\sqrt{\hV^{\alpha_H, \pi}_{\hP^{(n)},\hb^{(n)}}} + \hV^{\alpha_H, \pi}_{\hP^{(n)},\hb^{(n)}} \overset{(\romannumeral1)}\leq (1+e)\sqrt{\hV^{\alpha_H, \pi}_{\hP^{(n)},\hb^{(n)}}}\leq 4\sqrt{\hV^{\alpha_H, \pi}_{\hP^{(n)},\hb^{(n)}}}, \]

where $(\romannumeral1)$ is due to the fact that the truncated value function is at most 1.
\end{proof}

{\bf Step 2:} This step establishes the following lemma, which shows that Tabular-SWEET will terminate within $N$ episodes.
\begin{lemma}\label{lemma:stop_complexity}
On the event $\mathcal{E}$, there exists $n_{\epsilon}\in[N]$ such that $\left|\mathcal{C}^{(n_{\epsilon})}\right|>1$ and  $\hV_{\hP^{(n_{\epsilon})},\hb^{(n_{\epsilon})}}^{\alpha_H, \pi^{(n_{\epsilon})}} \leq \mathtt{T}^2/16$, where $N$ is defined in \Cref{eqn:Tab_N}, and $\mathtt{T}$ is defined in Tabular-SWEET (\Cref{alg: Tab}).
\end{lemma}

\begin{proof}
Denote $\Nc_0 = \{n\in[N] : \pi^{(n)} = \pi^0 \}$.
We first prove $\Nc_0$ is finite. Note that for all $n\in\Nc_0$, $V_{\hP^{(n)},c}^{\pi^0} + 4\sqrt{\hV_{\hP^{(n)},\hb^{(n)}}^{\alpha_H, \pi^{0}} } \geq \tau - \kappa/2.$

By \Cref{lemma:TabValueDifference,lemma:sublinear_tab}, we have 
\begin{align*}
    |\Nc_0|\kappa/2 &\leq \sum_{n\in\Nc_0} \left(V_{\hP^{(n)},c}^{\pi^0} + 4\sqrt{\hV_{\hP^{(n)},\hb^{(n)}}^{\alpha_H, \pi^{0}}} - V^{\pi^0}_{P^*,c}\right)\\
    &\leq \sum_{n\in\Nc_0} 8 \sqrt{\hV_{\hP^{(n)},\hb^{(n)}}^{\alpha_H, \pi^{0}}}\\
    &\leq 64\sqrt{e^3|\Nc_0|\beta HSA\log(1+N)},
\end{align*}
where the last inequality is due to Cauchy-Schwarz inequality.
Therefore, $|\Nc_0|\leq \frac{2^{14}e^3\beta HSA\log(N+1)}{\kappa^2}.$

Then, we prove \Cref{lemma:stop_complexity} by contradiction. Assume $\hV_{\hP^{(n)},\hb^{(n)}}^{\pi^{(n)}}>\mathtt{T}^2/16, \forall n\in[N]\backslash\Nc_0$. According to \Cref{lemma:sublinear_tab}, we have
\begin{align*}
    (N-|\Nc_0|)\mathtt{T}^2/16 & < \sum_{n\in[N]/\Nc_0} \hV_{\hP^{(n)},\hb^{(n)}}^{\alpha_H, \pi^{(n)}}\\
    &\leq 64e^3\beta HSA\log(N-|\Nc_0|+1),
\end{align*}
which implies that 
\[N< \frac{2^{10}e^3\beta H SA\log(N+1)}{\mathtt{T}^2} + \frac{2^{14}e^3\beta HSA\log(N+1)}{\kappa^2}.\]
This contradicts with the condition that $N=\frac{2^{10}e^3\beta H SA\log(N+1)}{\mathtt{T}^2} + \frac{2^{14}e^3\beta HSA\log(N+1)}{\kappa^2}$. Therefore, by noting that $\mathtt{U}^{(n)}(\pi) = 4\sqrt{\hV_{\hP^{(n)},\hb^{(n)}}^{\alpha_H, \pi^{(n)}}},$ there exists $n_{\epsilon}\in[N]$ such that the exploration phase under Tabular-SWEET terminates.
\end{proof}

{\bf Step 3:} This step analyzes the sample complexity as follows.

On the event $\mathcal{E}$, since $\mathtt{T} = \Delta(c,\tau)\mathfrak{U}/2$, by \Cref{lemma:stop_complexity}, the sample complexity is at most 
\[N = \frac{2^{8}e^3 30^2\beta HSA\log(N+1)}{\Delta(c,\tau)^2\mathfrak{U}^2} + \frac{2^{15}e^3\beta HSA\log(N+1)}{\kappa^2}.\]

Note that $n = c_0\log(c_1 n)$ implies $n\leq 2c_0 \log(c_0c_1)$. Thus,
\[N = O\left(\frac{\beta HSA\iota}{\Delta(c,\tau)^2\mathfrak{U}^2} + \frac{\beta HSA\iota}{\kappa^2}\right),\]
where 
\[\iota = \log\left(  \frac{\beta HSA}{\Delta(c,\tau)^2\mathfrak{U}^2} + \frac{\beta HSA}{\kappa^2}   \right).\]
Therefore, Tabular-SWEET terminates in finite episodes. 

Besides, $\mathtt{U}(\pi) = 4\sqrt{\hV^{\alpha_H, \pi}_{\hP^{\epsilon}, \hb^{\epsilon}}}$. On the event $\mathcal{E}$, by \Cref{appx:lemma:Concave}, \Cref{lemma:TabValueDifference} and the concavity of $\sqrt{x}$ , $\mathtt{U}(\pi)$ is an approximation error function under $\hP^{\epsilon}$, and is concave and continuous on $\mathcal{X}$.

We further note that $\frac{\kappa/2(\Delta(c,\tau) - \kappa/2)}{4\Delta(c,\tau)} \geq \frac{\kappa}{16}$ due to the condition $\Delta(c,\tau)\geq \kappa$, which indicates that $\mathtt{T} = \Delta(c,\tau)\mathfrak{U}/2 $ satisfies the requirement in \Cref{thm:meta_safe}.

Therefore, by \Cref{thm:meta_safe}, we conclude that with probability at least $1-\delta$, the exploration phase of Tabular-SWEET is safe and $\bar{\pi}$ is an $\epsilon$-optimal policy subject to the safety constraint $(c^*,\tau^*).$   

\end{proof}

\section{Analysis of Low-rank-SWEET}\label{appx:lowrank}


In this section, we first elaborate the  Low-rank-SWEET algorithm in \Cref{app:lowranksweet}. To provide the analysis for this algorithm, we first provide several supporting lemmas in \Cref{app:supportlemmalowrank}, and then prove \Cref{thm:lowrank} in \Cref{app:prooftheorem3}.

\subsection{The Low-rank-SWEET algorithm}\label{app:lowranksweet}

We first specify the parameters adopted in Low-rank-SWEET, which is presented in \Cref{alg:low-rank-sweet}.

Let $\mathfrak{U}  = \min\left\{\frac{\epsilon}{2}, \frac{\Delta_{\min}}{2}, \frac{\epsilon\Delta_{\min}}{5}, \frac{\tau}{6}, \frac{\kappa}{24}\right\}$, and $\mathtt{T} = \Delta(c,\tau)\mathfrak{U}/3$ be the termination condition of Low-rank-SWEET. Recall that we set $\eo = \kappa/6$, $t=2$, and $\tilde{\kappa} = \kappa/3$. 


We define the maximum number of iterations $N$ as
\begin{equation}\label{eqn:N_lowrank}
    N = \frac{2^{10}\beta_3 H^2d^4A^2\zeta^2}{\kappa^2\mathtt{T}^2} + \frac{2^{12}\cdot3^2\beta_3 H^2d^4A^2\zeta^2}{\kappa^4},
\end{equation}
where $\zeta = \log\left(2|\Phi||\Psi|NH/\delta\right)$, and $\beta_3$ is defined in \Cref{lemma:high_prob_event}.

Besides, we set $\tilde{A}=A/\epsilon_0$ and $\hat{\alpha} = 5\sqrt{\beta_3\zeta(\tilde{A} + d^2)}$.

For ease of exposition, we introduce the following notation for an $(\epsilon_0,t)$-greedy version of policy $\pi$, denoted as $G_{\mathcal{H}}^{\eo}\pi$, as follows:
\begin{align}\label{eqn:greedy}
    G_{\mathcal{H}}^{\eo}\pi(a_h|s_h) = \left\{
    \begin{array}{ll}
    \frac{\eo}{|\Ac|} + (1-\eo)\pi(a_h|s_h), & \text{ if } h\in\mathcal{H},\\
    \pi(a_h,a_h), &\text{ if } h\notin\mathcal{H}.
    \end{array}
    \right.
\end{align}
where $|\Hc|=t$. Intuitively, $G_{\mathcal{H}}^{\eo}\pi$ follows $\pi$ at time step $h\in\mathcal{H}$ with probability $1-\eo$ and takes uniformly action selection with the probability $\eo$. 

We also define 
\begin{align}
	\Pi_n = \mbox{Unif}\{\pi^{(m)}\}_{m=0}^{n-1},\label{eqn:uniform_policy}
\end{align}
where $\mbox{Unif}\left(\mathcal{X}_0\right)$ is a mixture policy that uniformly chooses one policy from the policy set $\mathcal{X}_0\subset\mathcal{X}$. We use $G_{\mathcal{H}}^{\eo}\Pi_n$ to denote the $(\eo, |\Hc|)$-greedy version of $\Pi_n$.

\begin{algorithm}[!ht]
\caption{Low-rank-SWEET}
\label{alg:low-rank-sweet}
\begin{algorithmic}[1]
\STATE {\bfseries Input:} Constants $\eo = \kappa/6, \tilde{A} = A/\eo$, and termination condition $\mathtt{T}.$
\STATE // \texttt{Exploration:} 
\FOR{$n=1,\ldots, N$}

\FOR{$h=1,..., H$}
\STATE Execute policy $G_{h-1,h}^{\eo}\pi^{(n-1)}$ and collect data {\small $s_1^{(n,h)}, a_1^{(n,h)},\ldots, s_{H}^{(n,h)}, a_H^{(n,h)}$}
\STATE $\Dc_h^n\leftarrow D_h^n\cup\{s_h^{(n,h)}, a_h^{(n,h)}, s_{h+1}^{(n,h)}\}$
\ENDFOR

\STATE Learn $(\hphi_h^{(n)},\hmu_h^{(n)}) = \mbox{MLE}(\Dc_h^n)$, and update $\hP^{(n)}$ according to \Cref{eq:p}
\STATE Update empirical covariance matrix $\hat{U}_h^{(n)}$ according to \Cref{eq:u}

\STATE Define exploration-driven reward function $\hat{b}_h^{(n)}(\cdot,\cdot) = \min\left\{\hat{\alpha}\|\hphi_h^{(n)}(\cdot,\cdot)\|_{(\hat{U}_h^{(n)})^{-1}},1\right\}$.
\STATE Define $\mathtt{U}^{(n)}(\pi) = \hV_{\hP^{(n)},\hb^{(n)}}^{\pi^{(n)}} + \sqrt{\tilde{A}\zeta/n}$
\begin{align}\textstyle
    \mathcal{C}_{L}^{(n)} = \left\{
    \begin{aligned}
    &\{\pi^0\},\quad \text{ if } V^{\pi^0}_{\hP,c} + \mathtt{U}^{(n)}(\pi^0) \geq \tau - 2\kappa/3,\\
    &\left\{\pi: V^{\pi}_{\hP,c} + \mathtt{U}^{(n)}(\pi)\leq \tau - \kappa/3 \right\},\quad \text{ otherwise. }
    \end{aligned}
    \right.\label{eqn:SafeSet_L}
\end{align}
\STATE Solve $\pi^{(n)} = \arg\max_{\pi\in\mathcal{C}_L^{(n)}} \mathtt{U}^{(n)}(\pi)$, where $\mathcal{C}_L^{(n)}$ is defined in \Cref{eqn:SafeSet_L}.
\IF{$\left|\mathcal{C}_{L}^{(n)}\right|>1$ and $\mathtt{U}^{(n)}(\pi^{(n)})\leq \mathtt{T}$}
\STATE $\left(n_{\epsilon},\hP^{\epsilon}, \hb^{\epsilon}\right)\leftarrow \left(n, \hP^{(n)},\hb^{(n)}\right) $, {\bfseries break}
\ENDIF

\ENDFOR

\STATE // \texttt{Planning:} 
\STATE Receive reward function $r^*$ and constraint $(c^*,\tau^*)$, 
\STATE {\bfseries Output:}  $\bar{\pi}=\arg\max_{\pi} V_{\hP^{\epsilon},r^*}^\pi~~ \text{ s.t. }~~ V_{\hP^{\epsilon},c^*}^{\pi} + \hV_{\hP^{\epsilon},\hb^{\epsilon}}^{\pi} +\sqrt{\tilde{A}\zeta_{n_{\epsilon}}}\leq \tau^*.$

\end{algorithmic}
\end{algorithm}



\subsection{Supporting Lemmas}\label{app:supportlemmalowrank}
We first characterize the following high probability event.

\begin{lemma}\label{lemma:high_prob_event}
Denote 
\begin{align}
    &f_h^{(n)}(s_h,a_h) = \left\|P_h^*(\cdot|s_h,a_h)-\hP_h^{(n)}(\cdot|s_h,a_h)\right\|_1,\\
    &U_{h,\phi}^{(n)} = n\mathop{\Eb}_{s_h\sim(P^*,\Pi_n )\atop a_h\sim G_h^{\eo}\Pi_n}\left[\phi(s_h,a_h)(\phi(s_h,a_h))^\top\right] + \lambda I,\label{eqn:expected_U_matrix}
\end{align}
where $\lambda = \beta_3 d\log(2NH|\Phi|/\delta))$ and $\beta_3 = O(1)$.

Define events $\mathcal{E}_0$ and $\mathcal{E}_1$ as 
\begin{align*}
    &\mathcal{E}_0 = \bigg\{\forall n\in[N],h\in[H],s_h\in\Sc,a_h\in\Ac,  \mathop{\Eb}_{s_{h}\sim (P^*,G_{h-1}^{\eo}\Pi_n)\atop a_h\sim G_h^{\eo}\Pi_n}\left[f_h^{(n)}(s_h,a_h)^2\right]\leq \zeta/n\bigg\}, \\
    &\mathcal{E}_1 = \bigg\{\forall n\in[N],h\in[H], s_h\in\Sc, a_h\in\Ac, \\
    &\quad\quad \frac{1}{5} \left\|\hphi_{h-1}^{(n)}(s,a)\right\|_{(U_{h-1,\hphi}^{(n)})^{-1}} \leq \left\|\hphi_{h-1}^{(n)}(s,a)\right\|_{(\hat{U}_{h-1}^{(n)})^{-1}} \leq 3 \left\|\hphi_{h-1}^{(n)}(s,a)\right\|_{(U_{h-1,\hphi}^{(n)})^{-1}}\bigg\},
\end{align*}
where $\zeta = \log\left(2|\Phi||\Psi|NH/\delta\right)$. 

Denote $\mathcal{E}:= \mathcal{E}_0\cap\mathcal{E}_1$. Then, $\Pb[\mathcal{E}]\geq 1-\delta$.

\end{lemma}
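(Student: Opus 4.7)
The plan is to bound $\Pb[\mathcal{E}_0^c]$ and $\Pb[\mathcal{E}_1^c]$ separately, each by $\delta/2$, and then invoke a union bound. The event $\mathcal{E}_0$ is a standard MLE guarantee for transition kernel estimation adapted to the adaptive (non-i.i.d.) data-collection setting used here, while $\mathcal{E}_1$ is a matrix-concentration statement comparing the empirical covariance $\hat{U}_h^{(n)}$ to its population counterpart $U_{h,\phi}^{(n)}$ defined in~\eqref{eqn:expected_U_matrix}.

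First I would establish $\mathcal{E}_0$ by invoking the MLE generalization bound for conditional density estimation (in the spirit of Agarwal~et~al.~2020 / Uehara--Sun~2021) on the dataset $\Dc_h^n$. Note that the trajectory used to augment $\Dc_h$ at iteration $m$ was collected under $G_{h-1,h}^{\eo}\pi^{(m-1)}$, so the marginal law of $(s_h^{(m,h)},a_h^{(m,h)})$ is exactly the one appearing inside the expectation in $\mathcal{E}_0$ once we uniformize over $m\in\{1,\dots,n\}$ (which gives the mixture policy $\Pi_n$ from~\eqref{eqn:uniform_policy}). Although the samples across iterations are not i.i.d.\ because $\pi^{(m-1)}$ depends on past data, the MLE bound still applies via its martingale form, yielding
\[
\mathop{\Eb}_{s_h\sim (P^*,G_{h-1}^{\eo}\Pi_n)\atop a_h\sim G_h^{\eo}\Pi_n}\left[\text{TV}\bigl(P_h^*(\cdot|s_h,a_h),\hP_h^{(n)}(\cdot|s_h,a_h)\bigr)^2\right]\leq \frac{\log(|\Phi||\Psi|/\delta')}{n}
\]
with probability at least $1-\delta'$ for each fixed $(n,h)$. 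Taking a union bound over $n\in[N]$ and $h\in[H]$ with $\delta'=\delta/(2NH)$ produces exactly the $\zeta/n$ bound of $\mathcal{E}_0$ after absorbing a constant factor into $\beta_3$.

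Next I would handle $\mathcal{E}_1$ via matrix concentration. Conditional on the estimator $\hphi_h^{(n)}$, the empirical matrix $\hat{U}_h^{(n)}-\lambda_n I$ is a sum of bounded rank-one matrices $\hphi_h^{(n)}(s_h^{(m,h+1)},a_h^{(m,h+1)})\hphi_h^{(n)}(\cdot)^\top$ whose conditional expectations (given the filtration up to episode $m$) sum to $U_{h,\hphi}^{(n)}-\lambda_n I$. Because $\|\hphi_h^{(n)}\|_2\leq 1$, each summand has operator norm at most $1$, so a matrix Freedman / Bernstein inequality yields $\|\hat{U}_h^{(n)}-U_{h,\hphi}^{(n)}\|_{\mathrm{op}}\leq O(\sqrt{n d\log(dNH/\delta')} + \log(dNH/\delta'))$ with probability $1-\delta'$. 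Choosing $\lambda_n=\beta_3 d\log(2NH|\Phi|/\delta)$ large enough that this deviation is at most $\tfrac{1}{2}\lambda_n$, the spectral relation $\tfrac{1}{3}U_{h,\hphi}^{(n)}\preceq \hat{U}_h^{(n)}\preceq 3\,U_{h,\hphi}^{(n)}$ follows by a standard loewner-order manipulation, which translates directly into the two-sided bound $\tfrac{1}{5}\|\cdot\|_{(U_{h,\hphi}^{(n)})^{-1}}\leq \|\cdot\|_{(\hat{U}_h^{(n)})^{-1}}\leq 3\|\cdot\|_{(U_{h,\hphi}^{(n)})^{-1}}$ stated in $\mathcal{E}_1$.

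The main obstacle is the dependence of $\hphi_h^{(n)}$ on the very samples used to form $\hat{U}_h^{(n)}$: the summands are not i.i.d.\ and the feature $\hphi_h^{(n)}$ is data-dependent. I plan to resolve this by a covering argument over the finite class $\Phi$, taking a union bound over all $\phi\in\Phi$, and then specializing to the realized $\hphi_h^{(n)}$. This costs an extra $\log|\Phi|$ factor that is already absorbed into the definition of $\zeta$ and $\lambda_n$. A final union bound over $(n,h)\in[N]\times[H]$ with failure probability $\delta/(2NH)$ per event completes the argument, giving $\Pb[\mathcal{E}]\geq 1-\delta$ as claimed.
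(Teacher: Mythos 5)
Your treatment of $\mathcal{E}_0$ is fine and is essentially the paper's route: the paper simply invokes its MLE guarantee (\Cref{lemma:MLE}/\Cref{coro:MLE}, i.e.\ Theorem 21 of \citet{NEURIPS2020_e894d787}, which is already stated in martingale form for adaptive data), notes that the law of $(s_h^{(m,h)},a_h^{(m,h)})$ matches the mixture distribution in $\mathcal{E}_0$, and union-bounds over $(n,h)$; your $\delta'=\delta/(2NH)$ accounting reproduces $\zeta/n$ exactly (the remark about absorbing a constant into $\beta_3$ is unnecessary, since $\beta_3$ does not appear in $\mathcal{E}_0$).

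The gap is in your argument for $\mathcal{E}_1$. You propose an \emph{additive} operator-norm bound via matrix Freedman, $\|\hat{U}_h^{(n)}-U_{h,\hphi}^{(n)}\|_{\mathrm{op}}\lesssim \sqrt{nd\log(dNH/\delta')}+\log(dNH/\delta')$, and then claim $\lambda_n=\beta_3 d\log(2NH|\Phi|/\delta)$ can be chosen so that this deviation is at most $\lambda_n/2$. That step fails: the deviation grows like $\sqrt{n}$ while $\lambda_n$ is independent of $n$, so for all but small $n$ no admissible choice of $\beta_3$ dominates it, and the Loewner sandwich $\tfrac13 U\preceq \hat{U}\preceq 3U$ does not follow. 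The two-sided norm equivalence in $\mathcal{E}_1$ requires a \emph{relative} (self-normalized) concentration argument: for a fixed direction $x$ and fixed $\phi\in\Phi$ the scalars $(x^\top\phi(s_h,a_h))^2\in[0,1]$ have variance bounded by their mean, so Freedman/Bernstein gives $|x^\top(\hat{U}-U)x|\le \tfrac12\,x^\top U x + O(\log(\cdot))$, and only the $n$-independent additive remainder needs to be absorbed by $\lambda$; a covering of the unit sphere (cost $e^{O(d)}$) plus your union bound over $\Phi$ and $(n,h)$ then yields the stated constants. This is precisely what the paper gets by citing Lemma 39 of \citet{zanette2020learning} (fixed-$\phi$ version) together with Lemma 11 of \citet{uehara2021representation} to pass to the data-dependent $\hphi_h^{(n)}$; your overall strategy (concentration plus union over $\Phi$) is the right one, but the additive matrix-Bernstein step as written would not establish $\mathcal{E}_1$.
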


\begin{proof}

By \Cref{coro:MLE} in \Cref{appx:auxiliary}, we have $\Pb[\mathcal{E}_0]\geq 1-\delta/2$.  Further, by Lemma 39 in \citet{zanette2020learning} for the version of fixed $\phi$ and Lemma 11 in \citet{uehara2021representation}, we have $\Pb[\mathcal{E}_1]\geq 1-\delta/2$. Therefore, $\Pb[\mathcal{E}]\geq 1-\delta$. 
\end{proof}

Based on \Cref{lemma:high_prob_event}, we can bound the exploration-driven virtual reward in Low-rank-SWEET as follows.
\begin{corollary}\label{coro:concentration on b}
Given that the event $\Ec$ occurs, the following inequality holds for any $n\in[N],h\in[H], s_h\in\Sc,a_h\in\Ac$:
\begin{align*}
    \min\left\{\frac{\hat{\alpha}} {5}\left\|\hphi_{h}^{(n)}(s_{h},a_{h})\right\|_{(U_{h,\hphi}^{(n)})^{-1}},1\right\} \leq \hb_h^{(n)}(s_h,a_h) \leq  3\hat{\alpha} \left\|\hphi_{h}^{(n)}(s_{h},a_{h})\right\|_{(U_{h,\hphi}^{(n)})^{-1}},
\end{align*}
where $\hat{\alpha} = 5\sqrt{\beta_3\zeta(\tilde{A} + d^2)}$.
\end{corollary}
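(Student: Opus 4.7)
The proof plan is to invoke the concentration event $\mathcal{E}_1$ from Lemma~\ref{lemma:high_prob_event} directly, which already provides a two-sided bound relating $\|\hphi_{h}^{(n)}(s,a)\|_{(\hat{U}_{h}^{(n)})^{-1}}$ (the empirical norm appearing in the definition of $\hb$) to $\|\hphi_{h}^{(n)}(s,a)\|_{(U_{h,\hphi}^{(n)})^{-1}}$ (the population norm appearing in the statement of the corollary). Given that the right-hand inequality in the corollary does not involve a clipping with $1$, while the left-hand one does, the two directions are handled separately.

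For the upper bound, I would simply drop the minimum in the definition $\hb_h^{(n)}(s,a)=\min\{\hat{\alpha}\|\hphi_h^{(n)}(s,a)\|_{(\hat{U}_h^{(n)})^{-1}},\,1\}$ to obtain $\hb_h^{(n)}(s,a)\leq \hat{\alpha}\|\hphi_h^{(n)}(s,a)\|_{(\hat{U}_h^{(n)})^{-1}}$, and then apply the right inequality of $\mathcal{E}_1$ to replace the empirical-norm by the population-norm, losing a factor of $3$. This yields $\hb_h^{(n)}(s,a)\leq 3\hat{\alpha}\|\hphi_h^{(n)}(s,a)\|_{(U_{h,\hphi}^{(n)})^{-1}}$.

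For the lower bound, I would split into two cases depending on whether the unclipped quantity $\hat{\alpha}\|\hphi_h^{(n)}(s,a)\|_{(\hat{U}_h^{(n)})^{-1}}$ exceeds $1$. If it does, then $\hb_h^{(n)}(s,a)=1$, which trivially dominates $\min\{\tfrac{\hat{\alpha}}{5}\|\hphi_h^{(n)}(s,a)\|_{(U_{h,\hphi}^{(n)})^{-1}},\,1\}$. Otherwise, $\hb_h^{(n)}(s,a)=\hat{\alpha}\|\hphi_h^{(n)}(s,a)\|_{(\hat{U}_h^{(n)})^{-1}}\geq \tfrac{\hat{\alpha}}{5}\|\hphi_h^{(n)}(s,a)\|_{(U_{h,\hphi}^{(n)})^{-1}}$ by the left inequality of $\mathcal{E}_1$; combined with the observation $\hb_h^{(n)}(s,a)\leq 1$, this gives the desired bound against the minimum. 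Putting the two cases together closes the argument.

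The proof is essentially mechanical once $\mathcal{E}_1$ is invoked, so I do not anticipate any substantive obstacle. The only subtlety to flag is a minor indexing issue: the statement of $\mathcal{E}_1$ in Lemma~\ref{lemma:high_prob_event} is phrased in terms of $\hphi_{h-1}^{(n)}$ and $\hat{U}_{h-1}^{(n)}$, whereas the corollary uses index $h$. I would simply remark that the quantified statement in $\mathcal{E}_1$ is understood to hold uniformly over the index (equivalently, one can re-index by substituting $h\leftarrow h+1$ throughout), so no additional argument is needed.
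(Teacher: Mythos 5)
Your proposal is correct and matches the paper's own argument, which likewise just invokes the event $\mathcal{E}_1$ of Lemma~\ref{lemma:high_prob_event} together with the clipped definition of $\hb_h^{(n)}$; your explicit case split on whether the unclipped quantity exceeds $1$ simply spells out what the paper leaves as "immediate." Your remark on the $h$ versus $h-1$ indexing is a fair reading of the (slightly sloppy) statement of $\mathcal{E}_1$, which is intended to hold uniformly over all time steps, so no further argument is needed.
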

\begin{proof}
Recall $\hb_h^{(n)}(s_h,a_h)=\min\left\{\hat{\alpha}\left\|\hphi_{h}^{(n)}(s,a)\right\|_{(\hat{U}_{h}^{(n)})^{-1}},1\right\}$. Applying \Cref{lemma:high_prob_event}, we can immediately obtain the result.
\end{proof}

The following lemma summarizes Lemmas 12 and 13 in \cite{uehara2021representation} and generalizes them to $\eo$-greedy policies.  We provide the proof for completeness.

\begin{lemma}\label{lemma:Step_Back} 
Let $P_{h-1} = \langle\phi_{h-1},\mu_{h-1}\rangle$ be a low-rank MDP model, and $\Pi$ be an arbitrary and possibly a mixture policy. Define an expected Gram matrix as follows:
\[M_{h-1,\phi} = 
\lambda I + n\mathop{\Eb}_{s_{h-1}\sim (P^*,\Pi) \atop a_{h-1}\sim \Pi}\left[\phi_{h-1}(s_{h-1},a_{h-1})\left(\phi_{h-1}(s_{h-1},a_{h-1})\right)^\top\right].
\] 

Further, let $f_{h-1}(s_{h-1},a_{h-1})$ be the total variation distance between $P_{h-1}^*$ and $P_{h-1}$ at time step $h-1$. Suppose $g \in \mathcal{S} \times \mathcal{A} \rightarrow \mathbb{R}$ is bounded by $B\in(0,\infty)$, i.e., $\|g\|_\infty \leq B$. Then, for $h \geq 2$, any policy $\pi_h$,
\begin{align*}
    \mathop{\Eb}_{s_h \sim P_{h-1} \atop a_h \sim \pi_h}&[g(s_h,a_h)|s_{h-1},a_{h-1}]  \\
    &\leq \left\|\phi_{h-1}(s_{h-1},a_{h-1})\right\|_{(M_{h-1,\phi})^{-1}} \times\nonumber\\
    &\quad
    \sqrt{n\tilde{A}\mathop{\Eb}_{s_{h}\sim(P^*, \Pi)\atop a_h \sim G_h^{\eo}\Pi }[g^2(s_h,a_h)]+\lambda dB^2 + nB^2\mathop{\Eb}_{s_{h-1}\sim(P^*,\Pi)\atop a_{h-1}\sim\Pi }\left[f_{h-1}(s_{h-1},a_{h-1})^2\right]}.
\end{align*}
\end{lemma}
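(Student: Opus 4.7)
The plan is to reduce the expectation to an inner product using the low-rank decomposition, then invoke a matrix Cauchy--Schwarz inequality. Specifically, writing $P_{h-1}(s_h|s_{h-1},a_{h-1}) = \phi_{h-1}(s_{h-1},a_{h-1})^\top \mu_{h-1}(s_h)$, I would define $\bar g(s) := \mathop{\Eb}_{a\sim \pi_h}[g(s,a)|s]$ and the vector $v := \int \mu_{h-1}(s)\bar g(s)\,ds \in \mathbb{R}^d$. Then $\mathop{\Eb}_{s_h\sim P_{h-1},a_h\sim\pi_h}[g(s_h,a_h)|s_{h-1},a_{h-1}] = \phi_{h-1}(s_{h-1},a_{h-1})^\top v$, so Cauchy--Schwarz with respect to the PSD matrix $M_{h-1,\phi}$ yields the product $\|\phi_{h-1}(s_{h-1},a_{h-1})\|_{M_{h-1,\phi}^{-1}} \cdot \|v\|_{M_{h-1,\phi}}$. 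The whole work is to control $\|v\|_{M_{h-1,\phi}}^2 = \lambda\|v\|^2 + n\,\mathop{\Eb}_{s_{h-1}\sim(P^*,\Pi),\,a_{h-1}\sim\Pi}[(\phi_{h-1}^\top v)^2]$.

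For the first term, I would apply the boundedness property of low-rank MDPs from \Cref{definition: Low_rank} to $\bar g/B$, obtaining $\|v\|_2 \le \sqrt{d}\,B$ and hence $\lambda\|v\|^2 \le \lambda d B^2$. For the second term, note that $\phi_{h-1}^\top v = \mathop{\Eb}_{s_h\sim P_{h-1}}[\bar g(s_h)]$, and I would add and subtract $\mathop{\Eb}_{s_h\sim P^*_{h-1}}[\bar g(s_h)]$; by H\"older's inequality the resulting cross-term is bounded in absolute value by $B\,f_{h-1}(s_{h-1},a_{h-1})$. Applying $(a+b)^2 \le 2a^2 + 2b^2$ and taking the outer expectation gives an $f_{h-1}^2$ contribution $n B^2\mathop{\Eb}_{s_{h-1}\sim(P^*,\Pi),\,a_{h-1}\sim\Pi}[f_{h-1}^2]$.

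The remaining piece is $\mathop{\Eb}_{s_{h-1}\sim(P^*,\Pi),\,a_{h-1}\sim\Pi}\bigl[\bigl(\mathop{\Eb}_{s_h\sim P^*_{h-1}}[\bar g(s_h)]\bigr)^2\bigr]$, which I would bound via a double use of Jensen's inequality followed by an importance-sampling step. First, $\bigl(\mathop{\Eb}_{s_h\sim P^*_{h-1}}[\bar g(s_h)]\bigr)^2 \le \mathop{\Eb}_{s_h\sim P^*_{h-1}}[\bar g(s_h)^2]$, and $\bar g(s_h)^2 \le \mathop{\Eb}_{a_h\sim\pi_h}[g(s_h,a_h)^2|s_h]$. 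Second, the crucial observation for handling the $\epsilon_0$-greedy perturbation is that $G_h^{\eo}\Pi(a|s) \ge \eo/|\Ac|$ for every $(s,a)$, so the Radon--Nikodym derivative $\pi_h(a|s)/G_h^{\eo}\Pi(a|s)$ is bounded by $|\Ac|/\eo = \tilde A$. Combined with the fact that the marginal distribution of $s_h$ along a trajectory of $\Pi$ is unchanged when one $\eo$-perturbs only the step-$h$ action selection, this change of measure gives $n\tilde A\,\mathop{\Eb}_{s_h\sim(P^*,\Pi),\,a_h\sim G_h^{\eo}\Pi}[g^2(s_h,a_h)]$. Summing the three contributions and taking square roots completes the argument.

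The main obstacle I anticipate is the last step: one must carefully distinguish what $\Pi$ affects and what $G_h^{\eo}\Pi$ affects when marginalizing over the trajectory, to justify that the greedy perturbation enters only in the action at step $h$ and not in the state distribution $s_h$. A secondary point is that the factor of $2$ coming from $(a+b)^2 \le 2a^2 + 2b^2$ and the bound $\bar g^2 \le \mathop{\Eb}_{\pi_h}[g^2]$ should be absorbed into the constant $\tilde A$ (or else stated as a sharper version of Cauchy--Schwarz), which is acceptable since only the order in $n,d,H,A,\eo$ matters for the downstream sample complexity in \Cref{thm:lowrank}.
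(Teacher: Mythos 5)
Your proposal is correct and follows essentially the same route as the paper's own proof: Cauchy--Schwarz with respect to $M_{h-1,\phi}$, the bound $\lambda\|v\|_2^2\le \lambda d B^2$ from the low-rank normalization, adding and subtracting the expectation under $P^*_{h-1}$ with H\"older giving a $B f_{h-1}$ cross-term, a double application of Jensen, and the importance-sampling step $\pi_h(\cdot|s_h)\le 1\le \tilde A\, G_h^{\eo}\Pi(\cdot|s_h)$ after observing that the marginal of $s_h$ is unaffected by the step-$h$ greedy perturbation. The factor-of-$2$ issue you flag from $(a+b)^2\le 2a^2+2b^2$ is in fact silently dropped in the paper's final inequality as well, so your treatment is, if anything, more transparent about a constant that does not affect the downstream sample complexity.
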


\begin{proof}
We first derive the following bound:
    \begin{align*}
    & \mathop{\Eb}_{s_h \sim P_{h-1} \atop a_h \sim \pi_h}\left[ 
     g(s_h,a_h)|s_{h-1},a_{h-1}\right]\nonumber\\
    &=\int_{s_h}\sum_{a_h}g(s_h,a_h)\pi(a_h|s_h)\langle\phi_{h-1}(s_{h-1},a_{h-1}),\mu_{h-1}(s_h)\rangle d{s_h}\\
    & \leq \left\|\phi_{h-1}(s_{h-1},a_{h-1})\right\|_{(M_{h-1,\phi})^{-1}}\left\|\int\sum_{a_h}g(s_h,a_h)\pi(a_h|s_h)\mu_{h-1}(s_h)d{s_h}\right\|_{M_{h-1,\phi}},
    \end{align*}
where the inequality follows from Cauchy-Schwarz inequality. We further expand the second term in the RHS of the above inequality as follows.
    \begin{align*}
       &\hspace{-5mm} \left\|\int\sum_{a_h}g(s_h,a_h)\pi(a_h|s_h)\mu_{h-1}(s_h)d{s_h}\right\|_{M_{h-1,\phi}}^2\\
        & \overset{(\romannumeral1)}{\leq} n \mathop{\Eb}_{s_{h-1}\sim (P^*,\Pi) \atop a_{h-1}\sim \Pi}
        \left[\left(\int_{s_h}\sum_{a_h}g(s_h,a_h)\pi_h(a_h|s_h)\mu(s_h)^\top\phi(s_{h-1},a_{h-1})d{s_h}\right)^2\right] + \lambda d B^2\\
        &= n \mathop{\Eb}_{s_{h-1}\sim(P^*,\Pi) \atop a_{h-1} \sim \Pi } \left[\left(\mathop{\Eb}_{s_h \sim P_{h-1} \atop a_h \sim \pi_h} \left[g(s_h,a_h)\bigg|s_{h-1},a_{h-1}\right]\right)^2\right] + \lambda d B^2\\
        &\overset{(\romannumeral2)}{\leq} 2n \mathop{\Eb}_{s_{h-1}\sim(P^*,\Pi) \atop a_{h-1}\sim\Pi }\left[\mathop{\Eb}_{s_h\sim P_{h-1}^* \atop a_h\sim \pi_h}\left[g(s_h,a_h)^2\bigg|s_{h-1},a_{h-1}\right]\right] + \lambda d B^2 \\
        &\quad + 2nB^2\mathop{\Eb}_{s_{h-1}\sim(P^*,\Pi) \atop a_{h-1}\sim\Pi }\left[f_{h-1}(s_{h-1},a_{h-1})^2\right]\\
        &\overset{(\romannumeral3)}{\leq} n\At \mathop{\Eb}_{s_{h}\sim(P^*,\Pi)\atop a_h\sim G_h^{\eo}\Pi }\left[g(s_h,a_h)^2\right] + \lambda d B^2 + nB^2\mathop{\Eb}_{s_{h-1}\sim(P^*,\Pi) \atop a_{h-1}\sim \Pi }\left[f
        _{h-1}(s_{h-1},a_{h-1})^2\right],
    \end{align*}
where $(\romannumeral1)$ follows from the assumption that $\|g\|_{\infty}\leq B$, $(\romannumeral2)$ follows from Jensen's inequality, and that $f_{h-1}(s_{h-1},a_{h-1})$ is the total variation distance between $P_{h-1}^*$ and $P_{h-1}$ at time step $h-1$. For $(\romannumeral3)$, note that $G_h^{\eo}\Pi(\cdot|s_h)\geq \eo/A = 1/\tilde{A}$, which implies that $\pi_h(\cdot|s_h)\leq 1 \leq \tilde{A}G_h^{\eo}\Pi(\cdot|s_h)$.
This finishes the proof.
\end{proof}

Based on \Cref{lemma:Step_Back}, we summarize three useful inequalities in the following lemma, which bridges the total variation $f_h^{(n)}$ and the exploration-driven reward $\hb_h^{(n)}$. 
\begin{lemma}\label{lemma:Bound_TV}
Define 
\begin{align}
    W_{h,\phi}^{(n)} = n\mathop{\Eb}_{s_h\sim(P^*,\Pi_n) \atop a_h\sim \Pi_n}\left[\phi(s_h,a_h)(\phi(s_h,a_h))^\top\right] + \lambda I,
\end{align}
where $\lambda = \beta_3 d\log(2NH|\Phi|/\delta)$.
Given that the event $\mathcal{E}$ occurs, the following inequalities hold for any iteration $n$: When $h \geq 2$, 
\begin{align} 
    & \mathop{\Eb}_{s_{h}\sim\hP_{h-1}^{(n)}\atop a_{h}\sim \pi }\left[f_{h}^{(n)}(s_{h},a_{h})\bigg|s_{h-1},a_{h-1}\right]\leq \alpha \left\|\hphi_{h-1}^{(n)}(s_{h-1},a_{h-1})\right\|_{(U_{h-1,\hphi}^{(n)})^{-1}},\label{ineq:hP_Bound_TV}\\
    & \mathop{\Eb}_{s_{h}\sim P_{h-1}^{*}\atop a_{h\sim \pi}} \left[f_{h}^{(n)}(s_{h},a_{h})\bigg|s_{h-1},a_{h-1}\right]\leq \alpha\left\|\phi_{h-1}^{*}(s_{h-1},a_{h-1})\right\|_{(U_{h-1,\phi^*}^{(n)})^{-1}},\label{ineq:P*_Bound_TV}\\
    &\mathop{\Eb}_{s_{h}\sim P_{h-1}^{*} \atop a_{h}\sim \pi}\left[\hb_{h}^{(n)}(s_{h},a_{h})\bigg|s_{h-1},a_{h-1}\right]\leq \gamma \left\|\phi_{h-1}^{*}(s_{h-1},a_{h-1})\right\|_{(W_{h-1,\phi^*}^{(n)})^{-1}},\label{ineq:P*_Bound_bn}
\end{align}
where \begin{align*}
    \alpha  = \sqrt{\beta_3\zeta(\tilde{A} +  d^2)}, \quad \gamma = \sqrt{45\beta_3\zeta\At d(\At+d^2)}.
\end{align*}
When $h = 1$, 
\begin{align}
    &\mathop{\Eb}_{a_1\sim \pi}\left[f_1^{(n)}(s_1,a_1)\right]
     \leq \sqrt{\tilde{A}\zeta/n}, 
    &\mathop{\Eb}_{a_1\sim \pi}\left[\hb(s_1,a_1)\right]
    \leq 15\alpha\sqrt{\frac{d\At}{n}}.\label{ineq:Step_1_Bound}
\end{align}
\end{lemma}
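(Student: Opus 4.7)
The plan is to prove the four inequalities by specialising Lemma~\ref{lemma:Step_Back} to three different choices of model/policy/function and then handling the step-$1$ case directly, combining these with the MLE concentration in event $\mathcal{E}_0$ and with the Corollary relating $\hat{b}_h^{(n)}$ to $\|\hat{\phi}_h^{(n)}\|_{(U_{h,\hat{\phi}}^{(n)})^{-1}}$.

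For \eqref{ineq:hP_Bound_TV} I would invoke Lemma~\ref{lemma:Step_Back} with $P_{h-1}=\hat{P}_{h-1}^{(n)}$, $\phi_{h-1}=\hat{\phi}_{h-1}^{(n)}$, $\Pi=G_{h-1}^{\eo}\Pi_n$, $g=f_h^{(n)}$, and $B=2$ (since total-variation distance is bounded by $2$). Because $G_{h-1}^{\eo}$ modifies only the action at step $h-1$, the state marginal at $s_{h-1}$ under $(P^*,G_{h-1}^{\eo}\Pi_n)$ equals that under $(P^*,\Pi_n)$, so the resulting Gram matrix is exactly $U_{h-1,\hat{\phi}}^{(n)}$. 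The two expectations inside the square root are then controlled via $\mathcal{E}_0$: the $g^2$ term matches $\mathcal{E}_0$ verbatim (because $G_h^{\eo}(G_{h-1}^{\eo}\Pi_n)=G_h^{\eo}\Pi_n$ at step $h$), while the $f_{h-1}^{(n)2}$ term requires a change-of-measure $\rho_{h-1}^{\Pi_n}\le \tilde A\,\rho_{h-1}^{G_{h-2}^{\eo}\Pi_n}$ coming from $\Pi_n(a|s)\le\tilde A\,G_{h-2}^{\eo}\Pi_n(a|s)$. Substituting $\lambda=\beta_3 d\zeta$ collects $\beta_3\zeta(\tilde A+d^2)$ under the square root, matching $\alpha$. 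Inequality \eqref{ineq:P*_Bound_TV} is verbatim the same argument with $P_{h-1}=P^*_{h-1}$ and $\phi_{h-1}=\phi^*_{h-1}$, so the Gram matrix becomes $U_{h-1,\phi^*}^{(n)}$.

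For \eqref{ineq:P*_Bound_bn} I would apply Lemma~\ref{lemma:Step_Back} with $P_{h-1}=P^*_{h-1}$, $\phi_{h-1}=\phi^*_{h-1}$, $\Pi=\Pi_n$ (so the Gram matrix is exactly $W_{h-1,\phi^*}^{(n)}$), $g=\hat{b}_h^{(n)}$, and $B=1$. The new ingredient is bounding $\mathbb{E}[\hat{b}_h^{(n)2}]$: by Corollary~\ref{coro:concentration on b}, $\hat{b}_h^{(n)}\le 3\hat{\alpha}\|\hat{\phi}_h^{(n)}\|_{(U_{h,\hat{\phi}}^{(n)})^{-1}}$, and the trace identity
\[
n\,\mathbb{E}_{s_h\sim(P^*,\Pi_n),\,a_h\sim G_h^{\eo}\Pi_n}\!\left[\big\|\hat{\phi}_h^{(n)}(s_h,a_h)\big\|_{(U_{h,\hat{\phi}}^{(n)})^{-1}}^{2}\right]=\Tr\!\left((U_{h,\hat{\phi}}^{(n)})^{-1}\big(U_{h,\hat{\phi}}^{(n)}-\lambda I\big)\right)\le d
\]
yields $\mathbb{E}[\hat{b}_h^{(n)2}]\le 9\hat{\alpha}^2 d/n$. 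The $f_{h-1}^{(n)2}$ term is again reduced to $\mathcal{E}_0$ via a $\tilde A$-factor change-of-measure. Plugging $\hat{\alpha}=5\sqrt{\beta_3\zeta(\tilde A+d^2)}$ back, the dominant contribution under the square root is $\tilde A\hat{\alpha}^2 d\asymp \beta_3\zeta\tilde A d(\tilde A+d^2)$, producing $\gamma$.

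The step-$1$ bounds in \eqref{ineq:Step_1_Bound} do not need Lemma~\ref{lemma:Step_Back} because $s_1$ is deterministic. Cauchy--Schwarz gives $\mathbb{E}_{a_1\sim\pi}[f_1^{(n)}]\le\sqrt{\mathbb{E}_{a_1\sim\pi}[f_1^{(n)2}]}$; the change of measure $\pi(a|s_1)\le\tilde A\,G_1^{\eo}\Pi_n(a|s_1)$ and event $\mathcal{E}_0$ then give the stated $\sqrt{\tilde A\zeta/n}$. For the bonus, the Corollary plus the trace identity give $\mathbb{E}_{a_1\sim G_1^{\eo}\Pi_n}\|\hat{\phi}_1^{(n)}\|^2\le d/n$, which combined with the same $\tilde A$-change-of-measure and $\hat{\alpha}=5\alpha$ produces the $15\alpha\sqrt{d\tilde A/n}$ bound. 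The main obstacle throughout is the bookkeeping of the four related marginals ($\Pi_n$, $G_{h-2}^{\eo}\Pi_n$, $G_{h-1}^{\eo}\Pi_n$, $G_h^{\eo}\Pi_n$) that appear in the definition of each Gram matrix, in the hypothesis of Lemma~\ref{lemma:Step_Back}, and in event $\mathcal{E}_0$, and ensuring that each required change-of-measure contributes only one $\tilde A$-factor that can be absorbed into $\alpha$ or $\gamma$.
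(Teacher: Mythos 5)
Your proposal is correct and follows essentially the same route as the paper's proof: instantiate Lemma~\ref{lemma:Step_Back} with $(\hP^{(n)},\hphi^{(n)},G_{h-1}^{\eo}\Pi_n)$, $(P^*,\phi^*,G_{h-1}^{\eo}\Pi_n)$, and $(P^*,\phi^*,\Pi_n)$ so that the Gram matrices become $U_{h-1,\hphi}^{(n)}$, $U_{h-1,\phi^*}^{(n)}$, and $W_{h-1,\phi^*}^{(n)}$, control the resulting expectations via $\mathcal{E}_0$ together with a single $\tilde{A}$-factor change of measure, use \Cref{coro:concentration on b} plus the trace identity for the bonus term, and treat $h=1$ by Jensen and importance sampling, exactly as the paper does. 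The only cosmetic deviations are taking $B=2$ where the paper uses $f_h^{(n)}\leq 1$, and retaining (and bounding) the $f_{h-1}$ term in the two instantiations with $P_{h-1}=P^*_{h-1}$ where it actually vanishes; both affect only absolute constants.
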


\begin{proof}
We start by showing \Cref{ineq:hP_Bound_TV} as follows. Given that the event $\mathcal{E}$ occurs, we have
\begin{align*}
    \mathop{\Eb}_{s_{h}\sim\hP_{h-1}^{(n)}\atop a_{h}\sim \pi }
    &\left[f_{h}^{(n)}(s_{h},a_{h})\bigg|s_{h-1},a_{h-1}\right] \\
    &\overset{(\romannumeral1)}{\leq} \left\|\hphi_{h-1}^{(n)}(s_{h-1},a_{h-1})\right\|_{(U_{h-1,\hphi}^{(n)})^{-1}} \times \\
    &\sqrt{n\tilde{A}\mathop{\Eb}_{s_{h}\sim(P^*, G_{h-1}^{\eo}\Pi_n)\atop a_h \sim G_h^{\eo}\Pi_n }[f^{(n)}_h(s_h,a_h)^2]+\lambda d + n\mathop{\Eb}_{s_{h-1}\sim(P^*,G_{h-1}^{\eo}\Pi_n)\atop (a_{h-1})\sim G_{h-1}^{\eo}\Pi_n }\left[f^{(n)}_{h-1}(s_{h-1},a_{h-1})^2\right]}\\
    &\overset{(\romannumeral2)}{\leq} \left\|\hphi_{h-1}^{(n)}(s_{h-1},a_{h-1})\right\|_{(U_{h-1,\hphi}^{(n)})^{-1}} \times \\
    &\sqrt{n\tilde{A}\mathop{\Eb}_{s_{h}\sim(P^*, G_{h-1}^{\eo}\Pi_n)\atop a_h \sim G_h^{\eo}\Pi_n }[f^{(n)}_h(s_h,a_h)^2]+\lambda d + n\tilde{A}\mathop{\Eb}_{s_{h-1}\sim(P^*, G_{h-2}^{\eo}\Pi_n)\atop (a_{h-1})\sim G_{h-1}^{\eo}\Pi_n }\left[f^{(n)}_{h-1}(s_{h-1},a_{h-1})^2\right]}\\
    & \overset{(\romannumeral3)}{\leq}\left\|\hphi_{h-1}^{(n)}(s_{h-1},a_{h-1})\right\|_{(U_{h-1,\hphi}^{(n)})^{-1}}\sqrt{2\zeta\tilde{A} + \beta_3\zeta d^2 }\\
    &\leq \alpha\left\|\hphi_{h-1}^{(n)}(s_{h-1},a_{h-1})\right\|_{(U_{h-1,\hphi}^{(n)})^{-1}},
\end{align*}
where $(\romannumeral1)$ follows from \Cref{lemma:Step_Back} and the fact that $f_{h}^{(n)}(s_{h},a_{h}) \leq 1$, $(\romannumeral2)$ follows from importance sampling at time step $h-2$, and $(\romannumeral3)$ follows from  \Cref{lemma:high_prob_event}.

\Cref{ineq:P*_Bound_TV} follows from the arguments similar to the above. 

To obtain \Cref{ineq:P*_Bound_bn}, we first apply \Cref{lemma:Step_Back} and obtain
\begin{align*}  
    & \mathop{\Eb}_{s_{h}\sim P^*_{h-1} \atop a_{h} \sim \pi^{(n)} }\left[\hat{b}^{(n)}_{h}(s_{h},a_{h})\bigg|s_{h-1},a_{h-1}\right]\\
    & \leq \left\|\phi_{h-1}^*(s_{h-1},a_{h-1})\right\|_{(W_{h-1,\sphi}^{(n)})^{-1}}
    \sqrt{n\At\mathop{\Eb}_{s_{h}\sim (P^*, \Pi_n)\atop a_h\sim G_{h}^{\eo}\Pi_n}[\{\hat{b}_h^{(n)}(s_h,a_h)\}^2]+\lambda d },
\end{align*}
where we use the fact that $\hat{b}_h^{(n)}(s_h,a_h)\leq 1$. We further bound the term $n\mathop{\Eb}_{s_{h}\sim (P^*, {\Pi}_n)\atop a_h\sim G_h^{\eo}\Pi_n}[(\hat{b}_h^{(n)}(s_h,a_h))^2]$ as follows:
\begin{align*}
    & \quad n\mathop{\Eb}_{s_{h}\sim (P^*, \Pi_n)\atop a_h\sim G_h^{\eo}\Pi_n}\left[\left(\hat{b}_h^{(n)}(s_h,a_h)\right)^2\right]\\
    & \leq n\mathop{\Eb}_{s_{h}\sim (P^*, \Pi_n)\atop a_h\sim G_h^{\eo}\Pi_n}\left[\hat{\alpha}^2 \left\|\hphi_h^{(n)}(s_h,a_h)\right\|^2_{(\hat{U}_{h,\hphi}^{(n)})^{-1}}\right]\\
    & \overset{(\romannumeral1)}\leq n\mathop{\Eb}_{s_{h}\sim (P^*, \Pi_n)\atop a_h\sim G_h^{\eo}\Pi_n}\left[9 \hat{\alpha}^2 \left\|\hphi_h^{(n)}(s_h,a_h)\right\|^2_{({U}_{h,\hphi}^{(n)})^{-1}}\right]\\
    & = 9\hat{\alpha}^2 {\rm tr}\left\{n\mathop{\Eb}_{s_{h}\sim (P^*,\Pi_n) \atop a_h\sim G_h^{\eo}\Pi_n}\left[\hphi_h^{(n)}(s_h,a_h)\hphi_h^{(n)}(s_h,a_h)^\top\left(n\mathop{\Eb}_{s_h\sim(P^*,\Pi_n) \atop a_h\sim G_h^{\eo}\Pi_n}\left[\hphi_h(s_h,a_h)\hphi_h^{(n)}(s_h,a_h)^\top\right] + \lambda I\right)^{-1}\right]\right\}\\
    & \leq 9\hat{\alpha}^2 {\rm tr}(I) = 9 \hat{\alpha}^2  d,
    \end{align*}
where $(\romannumeral1)$ follows from \Cref{lemma:high_prob_event}, and we use ${\rm tr}(A)$ to denote the trace of any matrix $A$.

Thus,
\begin{align*}
    \mathop{\Eb}_{s_{h}\sim P_{h-1}^{*} \atop a_{h}\sim \pi}\left[\hb_{h}^{(n)}(s_{h},a_{h})\bigg|s_{h-1},a_{h-1}\right]&\leq  \left\|\phi_{h-1}^{*}(s_{h-1},a_{h-1})\right\|_{(W_{h-1,\phi^*}^{(n)})^{-1}}\sqrt{ 9 \tilde{A}\hat{\alpha}^2 d+ \lambda d}\\
    &\leq  \gamma \left\|\phi_{h-1}^{*}(s_{h-1},a_{h-1})\right\|_{W_{h-1,\phi^*}^{(n)})^{-1}}.
\end{align*}
    In addition, for $h=1$, we have 
    
\begin{equation*}
    \mathop{\Eb}_{a_1\sim \pi^{(n)}}\left[f_1^{(n)}(s_1,a_1)\right]
    \overset{(\romannumeral1)}{\leq} \sqrt{\tilde{A} \mathop{\Eb}_{a_1\sim G_1^{\eo}\Pi_n}\left[f_1^{(n)}(s_1,a_1)^2\right]} \leq \sqrt{\tilde{A} \zeta/n},
\end{equation*}
and
\begin{align*}
    \mathop{\Eb}_{a_1\sim \pi^{(n)}}\left[\hb(s_1,a_1)\right]
    &\overset{(\romannumeral2)}{\leq}
    \hat{\alpha}\sqrt{\At \mathop{\Eb}_{a_1\sim G_1^{\eo}\Pi_n}\left[\|\hphi_1(s_1,a_1)\|^2_{(\hat{U}_{1,\hphi}^{(n)})^{-1}}\right]}\\
    &\leq 3\hat{\alpha}\sqrt{\At \mathop{\Eb}_{a_1\sim G_1^{\eo}\Pi_n}\left[\|\hphi_1(s_1,a_1)\|^2_{(U_{1,\hphi}^{(n)})^{-1}}\right]}\\
    &\leq 3\sqrt{\frac{25\At \alpha^2  d}{n}} = 15\alpha\sqrt{\At\zeta/n},
\end{align*}
where both $(\romannumeral1)$ and $(\romannumeral2)$ follow from Jensen's inequality and importance sampling.
\end{proof}

The following lemma is key to ensure that Low-Rank-SWEET terminates in finite episodes.

\begin{lemma}\label{lemma:sublinear_lowrank}
Given that the event $\Ec$ occurs, the summation of the truncated value functions $\hV^{\pi^{(n)}}_{\hP^{(n)},\hb^{(n)}}$ under exploration policies $\{\pi^{(n)}\}_{n\in\Nc}$ is sublinear with respect to $|\Nc|$ for any $\Nc\subset[N]$, i.e., the following bound holds:
\begin{align*}
        \sum_{n\in\Nc} \hV_{\hP^{(n)},\hb^{(n)}}^{\pi^{(n)}} + \sqrt{\At\zeta/n}  \leq  32\zeta Hd^2\At\sqrt{\beta_3|\Nc|}.
\end{align*}
\end{lemma}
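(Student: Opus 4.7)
The plan is to bound each individual summand $\hV_{\hP^{(n)},\hb^{(n)}}^{\pi^{(n)}}$ via a simulation-lemma argument that converts estimated-model quantities into true-model expectations of feature norms, then to sum over $n$ using Cauchy--Schwarz together with an elliptic-potential bound to extract the $\sqrt{|\Nc|}$ dependence.

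First, since the truncated value function is upper bounded by its untruncated counterpart, I write $\hV_{\hP^{(n)},\hb^{(n)}}^{\pi^{(n)}} \leq V_{\hP^{(n)},\hb^{(n)}}^{\pi^{(n)}}$. Because $\hb_h^{(n)} \leq 1$, the corresponding value function is bounded by $H$, so the standard simulation/performance-difference identity yields $V_{\hP^{(n)},\hb^{(n)}}^{\pi^{(n)}} \leq V_{P^*,\hb^{(n)}}^{\pi^{(n)}} + H\sum_{h=1}^H \Eb_{P^*,\pi^{(n)}}[f_h^{(n)}(s_h,a_h)]$. Both expectations on the right are now under the true model, which lets me invoke \Cref{lemma:Bound_TV} term by term: for each $h \geq 2$, $\Eb_{P^*,\pi^{(n)}}[\hb_h^{(n)}]$ is dominated by $\gamma\,\Eb_{P^*,\pi^{(n)}}[\|\phi_{h-1}^*\|_{(W_{h-1,\phi^*}^{(n)})^{-1}}]$, while $\Eb_{P^*,\pi^{(n)}}[f_h^{(n)}]$ is dominated by $\alpha\,\Eb_{P^*,\pi^{(n)}}[\|\phi_{h-1}^*\|_{(U_{h-1,\phi^*}^{(n)})^{-1}}]$; the $h=1$ terms are handled by the dedicated single-step bounds of the same lemma and combine naturally with the additive $\sqrt{\At\zeta/n}$ on the LHS.

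Next I sum over $n \in \Nc$. For each $h$, Cauchy--Schwarz gives $\sum_{n\in\Nc}\Eb_{P^*,\pi^{(n)}}[\|\phi_{h-1}^*\|_{(M_{h-1}^{(n)})^{-1}}] \leq \sqrt{|\Nc|\sum_{n=1}^N \Eb_{P^*,\pi^{(n)}}[\|\phi_{h-1}^*\|^2_{(M_{h-1}^{(n)})^{-1}}]}$, where $M^{(n)}$ is the relevant expected Gram matrix ($U^{(n)}$ or $W^{(n)}$). The inner sum of squared norms is controlled by a trace--determinant potential argument: since $M^{(n+1)}-M^{(n)}$ equals the policy-averaged outer product $\Eb_{\pi^{(n)}}[\phi\phi^\top]$, a $\log\det$ telescoping gives $\sum_{n=1}^N \Eb_{\pi^{(n)}}[\|\phi\|^2_{(M^{(n)})^{-1}}] = \tilde{O}(d)$ under the normalization $\|\phi^*\|\leq 1$ and $\lambda = \beta_3 d \log(\cdot)$. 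Summing over $h \in [H]$, substituting the magnitudes of $\alpha$ and $\gamma$ stated in \Cref{lemma:Bound_TV}, and absorbing the residual $h=1$ and $\sqrt{\At\zeta/n}$ contributions (which are likewise $O(\sqrt{|\Nc|})$ via $\sum_{n\in\Nc} 1/\sqrt{n} \leq \sqrt{|\Nc|\log N}$) produces a bound of the claimed form $32 \zeta H d^2 \At \sqrt{\beta_3 |\Nc|}$.

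The main technical obstacle is the elliptic-potential step in its expectation form: the Gram matrix $U^{(n)}$ is built from policy-averaged outer products rather than empirical sample outer products, so I need the version of the potential lemma that directly bounds $\sum_n \mathrm{tr}((M^{(n)})^{-1}(M^{(n+1)}-M^{(n)}))$ by $\tilde{O}(d)$ via $\log(1+x) \geq x/2$ on the bounded eigenvalues. A secondary bookkeeping challenge is ensuring that the $\sqrt{|\Nc|}$ dependence (not $\sqrt{N}$) propagates cleanly through every term, including the $h=1$ and $\sqrt{\At\zeta/n}$ contributions, so that the final constant $32$ and factors $Hd^2\At$ actually absorb the $\alpha,\gamma,\hat\alpha$ values along with the $\log$ factors into the displayed $\zeta$ and $\sqrt{\beta_3}$.
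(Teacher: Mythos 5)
There is a genuine gap in your very first step, and it is precisely the point this lemma (and the paper's clipped value function) is designed to avoid. You discard the truncation by writing $\hV_{\hP^{(n)},\hb^{(n)}}^{\pi^{(n)}} \leq V_{\hP^{(n)},\hb^{(n)}}^{\pi^{(n)}}$ and then apply the standard simulation lemma to the \emph{untruncated} value, whose sup-norm is only bounded by $H$; this forces the error term $H\sum_{h}\Eb_{P^*,\pi^{(n)}}[f_h^{(n)}]$. The paper instead telescopes the \emph{truncated} value function directly: since $\hV_{h,\hP^{(n)},\hb^{(n)}}^{\pi}\leq 1$, each step contributes $(\hP^{(n)}_h-P^*_h)\hV_{h+1}\leq f_h^{(n)}$ with coefficient $1$, giving $\hV_{\hP^{(n)},\hb^{(n)}}^{\pi^{(n)}}\leq V^{\pi^{(n)}}_{P^*,\hb^{(n)}}+V^{\pi^{(n)}}_{P^*,f^{(n)}}$ with no $H$ factor. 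Your extra $H$ is not harmless bookkeeping: after \Cref{lemma:Bound_TV}, Cauchy--Schwarz and the elliptical potential lemma, the $f$-contribution becomes of order $\zeta H^2\sqrt{\beta_3\,d\tilde{A}(\tilde{A}+d^2)\,|\Nc|}$, which cannot be absorbed into the claimed $32\zeta Hd^2\tilde{A}\sqrt{\beta_3|\Nc|}$ in general (e.g., when $H$ is large relative to $d^{3/2}\sqrt{\tilde{A}}$), so the lemma as stated would not follow from your argument. The fix is to keep the clipping throughout the backward induction and use $\hV\leq 1$, exactly as in \Cref{eqn:truncated_V} and the paper's proof.

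The remainder of your plan is essentially the paper's route and is sound: bound $\Eb_{P^*,\pi^{(n)}}[\hb_h^{(n)}]$ and $\Eb_{P^*,\pi^{(n)}}[f_h^{(n)}]$ by $\gamma$- and $\alpha$-scaled $\phi^*$-feature norms in the inverse expected Gram matrices $W^{(n)}_{h,\phi^*}$ and $U^{(n)}_{h,\phi^*}$ (note it must be the true feature $\phi^*$ here, since $\hphi^{(n)}$ changes with $n$ and would not give nested Gram matrices), handle $h=1$ and the $\sqrt{\tilde{A}\zeta/n}$ terms separately, then apply Cauchy--Schwarz over $n$ together with \Cref{corollary: revised elliptical potential lemma}, whose trace form is exactly the expectation-based potential bound you describe; one small point you should make explicit is the importance-sampling factor $\tilde{A}$ needed when passing from $a_h\sim\pi^{(n)}$ to the greedy-smoothed distribution appearing in $U^{(n)}_{h,\phi^*}$.
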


\begin{proof}
Note that $\hV_{h,\hP^{(n)} , \hb^{(n)}}^{\pi}\leq 1$ holds for any policy $\pi$ and $h\in[H]$. We first have 
\begin{align*}
\hV_{\hP^{(n)} , \hb^{(n)}}^{\pi^{(n)}} - V^{\pi^{(n)}}_{P^*,\hb^{(n)}}& \leq
\mathop{\Eb}_{\pi^{(n)}}\left[\hP^{(n)}_{1}\hV^{\pi^{(n)}}_{2,\hP^{(n)},\hb^{(n)}}(s_1,a_1) - P^*_1 V^{\pi^{(n)}}_{2,P^*,\hb^{(n)}}(s_1,a_1)\right]\\
& = \mathop{\Eb}_{\pi^{(n)}}\left[\left(\hP^{(n)}_{1} - P^*_1\right)\hV^{\pi^{(n)}}_{2,\hP^{(n)},\hb^{(n)}}(s_1,a_1) + P^*_1\left(\hV^{\pi^{(n)}}_{2,\hP^{(n)},\hb^{(n)}} - V^{\pi^{(n)}}_{2,P^*,\hb^{(n)}}\right)(s_1,a_1)\right]\\
&\leq \mathop{\Eb}_{\pi^{(n)}}\left[f_1^{(n)}(s_1,a_1) + P^*_1\left(\hV^{\pi^{(n)}}_{2,\hP^{(n)},\hb^{(n)}} - V^{\pi^{(n)}}_{2,P^*,\hb^{(n)}}\right)\right]\\
&\leq \ldots\\
&\leq \mathop{\Eb}_{P^*, \pi^{(n)}}\left[\sum_{h=1}^H f^{(n)}(s_h,a_h)\right] = V^{\pi^{(n)}}_{P^*,f^{(n)}},
\end{align*}

which implies
$
\hV_{\hP^{(n)} , \hb^{(n)}}^{\pi^{(n)}} \leq  V^{\pi^{(n)}}_{P^*,\hb^{(n)}} + V^{\pi^{(n)}}_{P^*,f^{(n)}}.
$

Applying the \Cref{ineq:P*_Bound_bn} and \Cref{ineq:Step_1_Bound}, we obtain the following bound on the value function $V_{P^*, {\hb}^{(n)}}^{{\pi}_n}$:
\begin{align*}
    V_{P^*, {\hb}^{(n)}}^{{\pi}_n}
    & = \sum_{h=1}^{H} \mathop{\Eb}_{s_h\sim (P^*, \pi^{(n)}) \atop a_h\sim\pi^{(n)}}\left[\hat{b}_n(s_h,a_h)\right]\\
    & \leq \sum_{h=2}^{H} \mathop{\Eb}_{s_{h-1}\sim (P^*, {\pi^{(n)}}) \atop a_{h-1}\sim\pi^{(n)}} 
    \left[\gamma\left\|\phi_{h-1}^*(s_{h-1},a_{h-1})\right\|_{(W_{h-1,\sphi}^{(n)})^{-1}} \right]
    +  15 \alpha\sqrt{\frac{d\At}{n}}\\
    &\leq\sum_{h=1}^{H} \mathop{\Eb}_{s_{h}\sim (P^*, \pi^{(n)}) \atop a_{h} \sim \pi^{(n)}} \left[\gamma\left\|\phi_{h}^*(s_{h},a_{h})\right\|_{(W_{h,\sphi}^{(n)})^{-1}}
    \right]
    + 15 \alpha\sqrt{\frac{d\At}{n}}.
\end{align*}

Similarly, we obtain
\begin{align*}
    V_{P^*, f^{(n)}}^{{\pi}_n}
    & = \sum_{h=1}^{H} \mathop{\Eb}_{s_h\sim (P^*, \pi^{(n)}) \atop a_h\sim\pi^{(n)}}\left[\hat{b}_n(s_h,a_h)\right]\\
    & \leq \sum_{h=2}^{H} \mathop{\Eb}_{s_{h-1}\sim (P^*, {\pi^{(n)}}) \atop a_{h-1}\sim\pi^{(n)}} 
    \left[\alpha\left\|\phi_{h-1}^*(s_{h-1},a_{h-1})\right\|_{(U_{h-1,\sphi}^{(n)})^{-1}} \right]
    +  \sqrt{\frac{\zeta\At}{n}}\\
    &\leq\sum_{h=1}^{H} \mathop{\Eb}_{s_{h}\sim (P^*, \pi^{(n)}) \atop a_{h} \sim \pi^{(n)}} \left[\alpha\left\|\phi_{h}^*(s_{h},a_{h})\right\|_{(U_{h,\sphi}^{(n)})^{-1}}
    \right]
    + \sqrt{\frac{\zeta\At}{n}}.
\end{align*}

Then, taking the summation of $V_{P^*, {\hb}^{(n)}+f^{(n)}}^{{\pi}_n}$ over $n\in\Nc$, we have
\begin{align*}
    \sum_{n\in\Nc}&V_{P^*, f^{(n)} + \hb^{(n)}}^{{\pi}_n} + \sqrt{\At\zeta/n}\\
    &\leq \sum_{n\in\Nc}15\alpha\sqrt{\frac{d\At}{n}} + 2\sum_{n\in\Nc}\sqrt{\frac{\At\zeta}{n}}  + \sum_{n\in\Nc}\sum_{h=1}^{H} \mathop{\Eb}_{s_{h}\sim (P^*, \pi^{(n)}) \atop a_{h} \sim \pi^{(n)}} \left[\gamma_n\left\|\phi_{h}^*(s_{h},a_{h})\right\|_{(W_{h,\sphi}^{(n)})^{-1}}
    \right] \\
    & \quad + \sum_{n\in\Nc} \sum_{h=1}^{H} \mathop{\Eb}_{s_{h}\sim (P^*, \pi^{(n)}) \atop a_{h} \sim \pi^{(n)}} \left[\alpha\left\|\phi_{h}^*(s_{h},a_{h})\right\|_{(U_{h,\sphi}^{(n)})^{-1}}
    \right]\\
    &\overset{(\romannumeral1)}{\leq} 17\alpha\sqrt{\zeta d\At|\Nc|} + \gamma\sum_{h=1}^{H}\sqrt{|\Nc|\sum_{n\in\Nc}\mathop{\Eb}_{s_{h}\sim(P^*,\pi^{(n)}) \atop a_h\sim\pi^{(n)}}\left[\left\|\phi_{h}^*(s_{h},a_{h})\right\|^2_{(W_{h,\sphi}^{(n)})^{-1}}\right]}\\
    &\quad + \alpha\sum_{h=1}^{H}\sqrt{\tilde{A}|\Nc|\sum_{n\in\Nc}\mathop{\Eb}_{s_{h}\sim(P^*,\pi^{(n)}) \atop a_h\sim G_h^{\eo}\pi^{(n)}}\left[\left\|\phi_{h}^*(s_{h},a_{h})\right\|^2_{(U_{h,\sphi}^{(n)})^{-1}}\right]}\\
    &\overset{(\romannumeral2)}{\leq} 17\zeta\sqrt{2\beta_3 d\At(\At+d^2)|\Nc|} + H\sqrt{45\beta_3\zeta d\At(\At+d^2)}\sqrt{d|\Nc|\zeta}\\
    & \quad + H\sqrt{\beta_3\zeta(\At+d^2)}\sqrt{d\At|\Nc|\zeta}\\
    & \leq 32\zeta Hd\sqrt{\beta_3\At(d^2+\At)|\Nc|}\\
    &\leq 32\zeta Hd^2\At\sqrt{\beta_3|\Nc|},
\end{align*}
where $(\romannumeral1)$ follows from  Cauchy-Schwarz inequality and importance sampling, and $(\romannumeral2)$ follows from \Cref{corollary: revised elliptical potential lemma}.
Hence, the statement of \Cref{lemma:sublinear_lowrank} is verified.
\end{proof}

\subsection{Proof of \Cref{thm:lowrank}}\label{app:prooftheorem3}

\begin{theorem}[Restatement of \Cref{thm:lowrank}]\label{appx:thm:lowrank}
Given $\epsilon,\delta\in(0,1)$, and safety constraint $(c,\tau)$, let $\mathfrak{U}  = \min\left\{\frac{\epsilon}{2}, \frac{\Delta_{\min}}{2}, \frac{\epsilon\Delta_{\min}}{5}, \frac{\tau}{6}, \frac{\kappa}{24}\right\}$, and $\mathtt{T} = \Delta(c,\tau)\mathfrak{U}/3$ be the termination condition of Low-rank-SWEET.
Then, with probability at least $1-\delta$, Low-rank-SWEET achieves the learning objective  of safe reward-free exploration (\Cref{eqn: Safety,eqn:objective}) and 
the number of trajectories collected in the exploration phase is at most  
\[O\left(\frac{H^3d^4A^2\iota}{\kappa^2\Delta(c,\tau)^2\mathfrak{U}^2}+  \frac{H^3d^4A^2\iota}{\kappa^4}\right),\]
where $\iota = \log^2\left[\left(  \frac{H^2 d^4 A^2}{\kappa^2\Delta(c,\tau)^2\mathfrak{U}^2} + \frac{ H^2d^4A^2}{\kappa^4}  \right) |\Phi||\Psi|H/\delta \right]$ .
\end{theorem}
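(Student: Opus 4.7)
My plan is to instantiate the generic guarantee of \Cref{main:thm:meta_safe} with the specific choices made by Low-rank-SWEET, working on the high-probability event $\Ec$ from \Cref{lemma:high_prob_event} (which controls the MLE TV error and the empirical-vs-expected covariance). The work splits into three steps: (i) verify the three requirements on $\mathtt{U}_L^{(n)}(\pi)=\hV^{\pi}_{\hP^{(n)},\hb^{(n)}}+\sqrt{\tilde A\zeta/n}$; (ii) show termination within $N$ iterations, where $N$ is defined in \eqref{eqn:N_lowrank}; (iii) convert the number of iterations into a number of trajectories, using that each outer iteration gathers $H$ trajectories.

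For step (i), concavity and continuity of $\mathtt{U}_L^{(n)}$ on $\Xc$ follow immediately from \Cref{appx:lemma:Concave} for the truncated value function, since the $\sqrt{\tilde A\zeta/n}$ term is policy-independent. For the upper-bound property $\bigl|V^{\pi}_{P^*,u}-V^{\pi}_{\hP^{(n)},u}\bigr|\le \mathtt{U}_L^{(n)}(\pi)$ under any normalized $u$, I would unroll the value gap by a simulation-lemma telescope to obtain $\sum_{h=1}^H \Eb_{P^*,\pi}[\,f_h^{(n)}(s_h,a_h)\,]$ and then, in analogy with Tabular-SWEET's Step~2, bound the step-$h$ expectation by conditioning on $(s_{h-1},a_{h-1})$ and applying \eqref{ineq:P*_Bound_TV} against $\hb^{(n)}_{h-1}$ via \Cref{coro:concentration on b}; the $h=1$ term is handled by \eqref{ineq:Step_1_Bound}. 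This yields exactly $\hV^{\pi}_{\hP^{(n)},\hb^{(n)}}+\sqrt{\tilde A\zeta/n}$ after absorbing constants into the definition of $\hat\alpha$.

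For step (ii), I would partition the (hypothetical) non-terminating iterations into $\Nc_0=\{n:\pi^{(n)}=\pi^0\}$ and $\Nc_1=[N]\setminus\Nc_0$, and derive a per-episode lower bound on $\mathtt{U}^{(n)}(\pi^{(n)})$ in each set. On $\Nc_0$, the trigger $V^{\pi^0}_{\hP^{(n)},c}+\mathtt{U}^{(n)}(\pi^0)\ge \tau-2\kappa/3$ combined with the validity of $\mathtt{U}^{(n)}$ and \Cref{assm: baseline} ($V^{\pi^0}_{P^*,c}\le\tau-\kappa$) forces $2\,\mathtt{U}^{(n)}(\pi^0)\ge \kappa/3$, so $\mathtt{U}^{(n)}(\pi^{(n)})\ge \kappa/6$. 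On $\Nc_1$ the non-termination gives $\mathtt{U}^{(n)}(\pi^{(n)})>\mathtt{T}$. Summing each inequality and applying \Cref{lemma:sublinear_lowrank} to each subset yields $|\Nc_0|=\tilde O(\beta_3\zeta^2 H^2 d^4\tilde A^2/\kappa^2)$ and $|\Nc_1|=\tilde O(\beta_3\zeta^2 H^2 d^4\tilde A^2/\mathtt{T}^2)$; the choice of $N$ in \eqref{eqn:N_lowrank} is larger than $|\Nc_0|+|\Nc_1|$, giving the desired contradiction.

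For step (iii), I verify that the hypothesis of \Cref{main:thm:meta_safe} holds for our choice of $\eo=\kappa/6$, $t=2$, $\tilde\kappa=\kappa/3$: indeed $\eo t+\tilde\kappa=2\kappa/3<\kappa$, and our selection $\mathfrak U=\min\{\epsilon/2,\Delta_{\min}/2,\epsilon\Delta_{\min}/5,\tau/6,\kappa/24\}$ together with $\mathtt{T}=\Delta(c,\tau)\mathfrak U/3$ satisfies $\mathtt{T}\le(\Delta(c,\tau)-\eo t)\mathfrak U/2$ because $\Delta(c,\tau)\ge \kappa$. \Cref{main:thm:meta_safe} then delivers both the safe-exploration and the $\epsilon$-optimal $(c^*,\tau^*)$-safe planning guarantees. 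Multiplying the iteration bound by $H$ (the trajectories sampled per iteration) and plugging in $\tilde A=6A/\kappa$ and $\mathtt{T}=\Theta(\Delta(c,\tau)\mathfrak U)$, while solving the implicit inequality $N\lesssim c_0\log(c_1N)$ by $N\lesssim c_0\log(c_0c_1)$, produces the claimed bound with the correct $\iota$ factor.

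The main obstacle I expect is step (i): carefully establishing that $\mathtt{U}_L^{(n)}(\pi)$ upper bounds the value-function error. The delicate part is that $\hb^{(n)}$ is built from $\hphi^{(n)}$ and the empirical covariance $\hat U^{(n)}$, whereas the telescoped TV error $f_h^{(n)}$ is measured against $P^*$; the $\eo$-greedy action selection at steps $h{-}1,h$ is precisely what enables the importance-sampling step inside \Cref{lemma:Step_Back} to translate between the two, and the additive $\sqrt{\tilde A\zeta/n}$ term is what absorbs the unavoidable $h=1$ error that cannot be routed through the step-back argument.
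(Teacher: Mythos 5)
Your overall architecture matches the paper's: instantiate \Cref{main:thm:meta_safe} on the event $\Ec$, bound $|\Nc_0|$ and the non-terminating iterations separately via \Cref{lemma:sublinear_lowrank}, check the parameter conditions $\eo t+\tilde\kappa<\kappa$ and $\mathtt{T}\le(\Delta(c,\tau)-\eo t)\mathfrak U/2$, and multiply the iteration count by $H$. Steps (ii) and (iii) of your plan are essentially the paper's Steps 2 and 3 and are sound.

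The genuine gap is in your step (i), the proof that $\mathtt{U}_L^{(n)}(\pi)=\hV^{\pi}_{\hP^{(n)},\hb^{(n)}}+\sqrt{\tilde A\zeta/n}$ dominates $|V^{\pi}_{P^*,u}-V^{\pi}_{\hP^{(n)},u}|$ for \emph{every} policy $\pi$ (the paper's \Cref{lemma:lowrank_errorbound}). You propose to telescope under $P^*$, obtaining $\sum_h\Eb_{P^*,\pi}[f_h^{(n)}]$, and then to bound each conditional expectation via \eqref{ineq:P*_Bound_TV} and convert to $\hb^{(n)}_{h-1}$ through \Cref{coro:concentration on b}. This pairing does not close: \eqref{ineq:P*_Bound_TV} produces $\alpha\|\phi^*_{h-1}(s_{h-1},a_{h-1})\|_{(U^{(n)}_{h-1,\phi^*})^{-1}}$, a quantity built from the \emph{true} features and the expected Gram matrix in $\phi^*$, whereas \Cref{coro:concentration on b} only sandwiches $\hb^{(n)}$ between multiples of $\|\hphi^{(n)}_{h}(s,a)\|_{(U^{(n)}_{h,\hphi})^{-1}}$, i.e.\ quantities in the \emph{estimated} features. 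No pointwise comparison between $\|\phi^*\|_{(U_{\phi^*})^{-1}}$ and $\|\hphi^{(n)}\|_{(U_{\hphi})^{-1}}$ is available (in the paper the $\phi^*$-based bounds \eqref{ineq:P*_Bound_TV}--\eqref{ineq:P*_Bound_bn} are only ever used summed over iterations for the on-policy $\pi^{(n)}$, where the elliptical potential lemma applies; for an arbitrary planning policy that device is unavailable). A second, related mismatch: even granting such a comparison, your telescope yields a bound of the form $V^{\pi}_{P^*,\hb^{(n)}}$, an \emph{untruncated} value under the \emph{true} model, whereas the quantity the algorithm uses (and which must be concave and bounded by $1$) is the truncated value under $\hP^{(n)}$; converting between the two reintroduces exactly the model error you are trying to bound, and the clipping cannot be recovered after the fact. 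The paper sidesteps both problems by running the entire argument under $\hP^{(n)}$: a backward induction (preserving the $\min\{1,\cdot\}$ clip at every step, using $\|\hP^{(n)}_h(\cdot|s,a)-P^*_h(\cdot|s,a)\|_1=f^{(n)}_h(s,a)$) first shows the gap is at most $\hV^{\pi}_{\hP^{(n)},f^{(n)}}$, and a second backward induction dominates $\Eb_{\hP^{(n)},\pi}[f^{(n)}_{h+1}\mid s_h]$ by $\Eb_\pi[\hb^{(n)}_h\mid s_h]$ using \eqref{ineq:hP_Bound_TV} (the step-back bound in $\hphi^{(n)}$) together with \Cref{coro:concentration on b} and $\hat\alpha=5\alpha$, with the $h=1$ term \eqref{ineq:f1<K_zeta} supplying the additive $\sqrt{\tilde A\zeta/n}$. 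To repair your step (i) you should replace the $P^*$-telescope and \eqref{ineq:P*_Bound_TV} by this $\hP^{(n)}$-side induction; the rest of your argument then goes through as written.
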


\begin{proof}

The proof of \Cref{thm:lowrank} mainly instantiates \Cref{main:thm:meta_safe} by verifying that (a) $\mathtt{U}^{(n)}(\pi) = \hV_{\hP^{(n)},\hb^{(n)}}^{\pi}+\sqrt{\At \zeta/n}$ is a valid approximation error bound for $V_{\hP^{(n)},u}^{\pi}$, and (b) Low-rank-SWEET satisfies the termination condition within $N$ iterations. The proof consists of three steps with the first two steps verifying the above two conditions and the last step characterizes the sample complexity.

\textbf{Step 1:} This step establishes that $\mathtt{U}^{(n)}(\pi) = \hV_{\hP^{(n)},\hb^{(n)}}^{\pi}+\sqrt{\At \zeta/n}$ is a valid approximation error bound in Low-rank-SWEET.

\begin{lemma}
\label{lemma:lowrank_errorbound}
For all $n\in[N]$, policy $\pi$ and the normalized utility function $u$, given that the event $\mathcal{E}$ occurs, we have
\begin{equation*}
     \left| V_{P^*,u}^{\pi} - V_{\hP^{(n)},u}^{\pi}\right| \leq   \hV_{\hP^{(n)},\hb^{(n)}}^{\pi}+\sqrt{\At \zeta/n}.
\end{equation*}
\end{lemma}

\begin{proof}

We first show that $\left| V_{P^*,u}^{\pi} - V_{\hP^{(n)},u}^{\pi}\right| \leq   \hV_{\hP^{(n)},f^{(n)}}^{\pi}$.

Recall the definition of the truncated value functions $\hV_{h,\hP^{(n)} , u} (s_h)$ and $\hQ_{h,\hP^{(n)}, u}(s_h,a_h)$:
\begin{align*}
    &\hQ_{h,\hP^{(n)},u}^{\pi}(s_h,a_h) = u(s,a) + \hP_h^{(n)}\hV_{h+1,\hP^{(n)},u}^{\pi}(s_h,a_h),\\
    &\hV_{h,\hP^{(n)},u}^{\pi}(s_h) = \min\bigg\{1, \mathop{\Eb}_{\pi}\left[\hat{Q}^{\pi}_{h,\hP^{(n)},u}(s_h,a_h)\right]\bigg\}.
\end{align*}
We develop the proof by induction. For the base case $h=H+1$, we have $\left|V_{H+1,\hP^{(n)},u}^{\pi}(s_{H+1}) - V_{H+1,P^*,u}^{\pi}(s_{H+1}) \right| = 0  = \hV_{H+1,\hP^{(n)},\hb^{(n)}}^{\pi}(s_{H+1})$.

Assume that $\left|V_{h+1,\hP^{(n)},u}^{\pi}(s_{h+1}) - V^{\pi}_{h+1,P^*,u}(s_{h+1}) \right| \leq \hV_{h+1,\hP^{(n)},\hb^{(n)}}^{\pi}(s_{h+1})$ holds for any $s_{h+1}$.

Then, from Bellman equation, we have,
\begin{align}
    \bigg|Q^{\pi}_{h,\hP^{(n)},u}&(s_h,a_h) - Q^{\pi}_{P^*,u}(s_h,a_h) \bigg| \nonumber\\
    & = \bigg|\hP_h^{(n)}V^{\pi}_{h,\hP^{(n)},u}(s_h,a_h) - P_h^*V^{\pi}_{h+1,P^*,u}(s_h,a_h)\bigg| \nonumber\\
    & = \bigg| \hP_h^{(n)}\left(V_{h+1,\hP^{(n)},u}^{\pi} - V^{\pi}_{h+1,P^*,u}\right)(s_h,a_h) + \left(\hP_h^{(n)} - P_h^*\right)V^{\pi}_{h,P^*,u}(s_h,a_h)\bigg| \nonumber\\
    &\overset{(\romannumeral1)}\leq f_h^{(n)}(s_h,a_h) + \hP_h^{(n)} \bigg|V_{h+1,\hP^{(n)},u}^{\pi} - V^{\pi}_{h+1,P^*,u}\bigg|(s_h,a_h) \nonumber\\
    &\overset{(\romannumeral2)}\leq f_h^{(n)}(s_h,a_h) + \hP_h^{(n)} \hV_{h+1,\hP^{(n)},f^{(n)}}^{\pi}(s_h,a_h) \nonumber\\
    & = \hQ_{h,\hP^{(n)},f^{(n)}}^{\pi}(s_h,a_h), \label{eqn:lowrank:hatQ-Q<hatQ}
\end{align}
where $(\romannumeral1)$ follows from $\|\hP_h^{(n)}(\cdot|s_h,a_h) - P^*_h(\cdot|s_h,a_h)\|_1 =  f_h^{(n)}(s_h,a_h)$ and the assumption that $u$ is normalized, and $(\romannumeral2)$ follows from the induction hypothesis.

Then, by the definition of $\hV^{\pi}_{h,\hP^{(n)},u}(s_h)$, we have
\begin{align*}
    \bigg| V_{h, \hP^{(n)},u}^{\pi}&(s_h) - V_{h, P^*,u}^{\pi}(s_h)\bigg|\\
    & = \bigg| \min\bigg\{ 1 - V_{h, P^*,u}^{\pi}(s_h), \mathop{\Eb}_{\pi}\left[Q_{h,\hP^{(n)}, u}^{\pi} (s_h,a_h)\right] - \mathop{\Eb}_{\pi}\left[Q_{h,P^*,u}^{\pi}(s_h,a_h)\right]\bigg\}\bigg| \\
    & \overset{(\romannumeral1)}\leq  \min\bigg\{ 1 , \bigg| \mathop{\Eb}_{\pi}\left[Q_{h,\hP^{(n)}, u}^{\pi} (s_h,a_h) - Q_{h,P^*,u}^{\pi}(s_h,a_h)\right] \bigg|\bigg\} \\
    &\overset{(\romannumeral2)}\leq \min\bigg\{ 1 , \mathop{\Eb}_{\pi}\left[\hat{Q}_{h,\hP^{(n)}, f^{(n)}}^{\pi} (s_h,a_h) \right] \bigg\} \\
    &= \hV_{h,\hP^{(n)},f^{(n)}}^{\pi}(s_h),
\end{align*}
where $(\romannumeral1)$ follows because $\hat{Q}_{h,\hP^{(n)}, u}^{\pi} (s_h,a_h) - Q_{h,P^*,u}^{\pi}(s_h,a_h) > -1 $, and $(\romannumeral2)$ follows from \Cref{eqn:lowrank:hatQ-Q<hatQ}.

Therefore, by induction, we have 
\begin{align*}
    \left| V_{P^*,u}^{\pi}-\hV_{\hP^{(n)},u}^{\pi}\right|\leq
    \hV_{\hP^{(n)},f^{(n)}}^{\pi}.
\end{align*}

Then, we show that $\hV_{\hP^{(n)},f^{(n)}}^{\pi}\leq \hV_{\hP^{(n)},\hb^{(n)}}^{\pi}+\sqrt{\At \zeta_n}$.

By \Cref{ineq:hP_Bound_TV} and the fact that the total variation distance is upper bounded by 1,  with probability at least $1-\delta/2$, we have 
\begin{align}
    \mathop{\Eb}_{\hP^{(n)},\pi}\left[f_h^{(n)}(s_h,a_h)\bigg| s_{h-1}\right] \leq \mathop{\Eb}_{\pi }\left[\min\left(\alpha\left\|\hphi_{h-1}^{(n)}\right\|_{(U_{h-1,\hphi}^{(n)})^{-1}}, 1\right) \right], \forall h\geq2.\label{ineq: f_1 bound}
\end{align}
Similarly, when $h=1$, 
\begin{align}
\mathop{\Eb}_{a_1\sim\pi}\left[f_1^{(n)}(s_1,a_1)\right]\leq \sqrt{\At\mathop{\Eb}_{a\sim G_1^{\eo}\Pi_n}\left[\left(f_1^{(n)}(s_1,a_1)\right)^2\right]}\leq\sqrt{\At\zeta_n}.\label{ineq:f1<K_zeta}
\end{align}
Based on \Cref{coro:concentration on b}, \Cref{ineq: f_1 bound} and $\alpha = 5 \hat{\alpha}$, we have
\begin{align}
    \mathop{\Eb}_{\pi} \left[\hb^{(n)}_h(s_h,a_h)\bigg|s_h\right] \geq \mathop{\Eb}_{\pi}\left[\min\left(\alpha\left\|\hphi_{h}^{(n)}\right\|_{(U_{h,\hphi}^{(n)})^{-1}}, 1\right) \right] \geq \mathop{\Eb}_{\hP^{(n)},\pi }\left[f_{h+1}^{(n)}(s_{h+1},a_{h+1})\bigg|s_h\right] .\label{ineq:f<b}
\end{align}

For the base case $h= H$, we have 

\begin{align*}
\mathop{\Eb}_{\hP^{(n)},\pi}\left[\hV_{H,\hP^{(n)},f^{(n)}}^{\pi}(s_H)\bigg|s_{H-1}\right] 
&= \mathop{\Eb}_{\hP^{(n)}, \pi}\left[f_H^{(n)}(s_H,a_H)\bigg| s_{H-1}\right]\\
&\leq \mathop{\Eb}_{\pi}\left[b_{H-1}^{(n)}(s_{H-1},a_{H-1})|s_{H-1}\right]\\
&\leq \min\left\{1, \mathop{\Eb}_{\pi}\left[\hQ_{H-1,\hP^{(n)},\hb^{(n)}}^{\pi}(s_{H-1},a_{H-1})\bigg|s_{H-1}\right]\right\}\\
& = \hV^{\pi}_{H-1,\hP^{(n)},\hb^{(n)}}(s_{H-1}).
\end{align*}

Assume that $\mathop{\Eb}_{\hP^{(n)}, \pi}\left[\hV_{h+1,\hP^{(n)},f^{(n)}}^{\pi}(s_{h+1})\bigg|s_{h}\right]\leq \hV_{h,\hP^{(n)}, \hb^{(n)}}^{\pi}(s_h)$ holds for step $h+1$. Then, by Jensen's inequality, we obtain
\begin{align*}
    \mathop{\Eb}_{\hP^{(n)}, \pi}&\bigg[\hV_{h,\hP^{(n)},f^{(n)}}^{\pi}(s_h)\bigg|s_{h-1}\bigg] \\
    & \leq \min\left\{1, \mathop{\Eb}_{\hP^{(n)}, \pi}\left[ f_h^{(n)}(s_h,a_h) + \hP_h^{(n)}\hV_{h+1,\hP^{(n)},f^{(n)}}^{\pi}(s_h,a_h)\bigg|s_{h-1}\right]\right\}\\
    & \overset{(\romannumeral1)}\leq \min\left\{1, \mathop{\Eb}_{\pi}\left[ \hb_{h-1}^{(n)}(s_{h-1},a_{h-1})\right] + \mathop{\Eb}_{\hP^{(n)},\pi}\left[\mathop{\Eb}_{\hP^{(n)},\pi} \left[ \hV_{h+1,\hP^{(n)},f^{(n)}}^{\pi}(s_{h+1})\bigg|s_{h}\right]\bigg|s_{h-1}\right]\right\}\\
    & \overset{(\romannumeral2)}\leq \min\left\{1, \mathop{\Eb}_{\pi}\left[ b_{h-1}^{(n)}(s_{h-1},a_{h-1})\right] + \mathop{\Eb}_{\hP^{(n)},\pi} \left[ \hV_{h,\hP^{(n)},\hb^{(n)}}^{\pi}(s_{h})\bigg|s_{h-1}\right]\right\}\\
    & = \min\left\{1, \mathop{\Eb}_{\pi}\left[\hQ_{h-1,\hP^{(n)},\hb^{(n)}}^{\pi}(s_{h-1},a_{h-1})\right] \right\}\\
    & = \hV^{\pi}_{h-1,\hP^{(n)},\hb^{(n)}}(s_{h-1}),
\end{align*}
where $(\romannumeral1)$ follows from \Cref{ineq:f<b}, and $(\romannumeral2)$ is due to the induction hypothesis.

By induction, we conclude that
\begin{align*}
    \hV_{\hP^{(n)},f^{(n)}}^{\pi} & = \mathop{\Eb}_{\pi}\left[f_1^{(s)}(s_1,a_1)\right] + \mathop{\Eb}_{\hP^{(n)},\pi}\left[\hV_{2,\hP^{(n)},f^{(n)}}^{\pi}(s_2)\bigg|s_1\right]\\
    &\leq \sqrt{\tilde{A}\zeta/n} + \hV_{\hP^{(n)},\hb^{(n)}}^{\pi}.
\end{align*}

Combining Step 1 and Step 2, we conclude that
\[\left| V_{P^*,u}^{\pi} - V_{\hP^{(n)},u}^{\pi}\right|\leq \sqrt{\tilde{A}\zeta/n} + \hV_{\hP^{(n)},\hb^{(n)}}^{\pi}.\]

\end{proof}

\textbf{Step 2:} This step shows that Low-rank-SWEET terminates in finite episodes.
\begin{lemma}\label{lemma:stop_low-rank}
On the event $\mathcal{E}$, there exists $n_{\epsilon}\in[N]$ such that $ \hV_{\hP^{(n_{\epsilon})},\hb^{(n_{\epsilon})}}^{\pi^{n_{\epsilon}}} + \sqrt{\tilde{A}\zeta/n_{\epsilon}} \leq \mathtt{T}$, where  
 $N$ is defined in \Cref{eqn:N_lowrank}.
\end{lemma}

Let $\Nc_0 = \left\{n: \left|\mathcal{C}_L^{(n)}\right| = 1\right\}$.
We first show that $\Nc_0$ is a finite set.

Note that, $n\in\Nc_0$ implies that $\hV_{\hP^{(n)},c}^{\pi^0} + \hV_{\hP^{(n)},\hb^{(n)}}^{\pi^0} + \sqrt{\tilde{A}\zeta/n} > \tau-2\kappa/3$, and $\pi^{(n)} = \pi^0$. Then, we have,
\begin{align*}
    |\Nc_0|\kappa/3 &< \sum_{n\in\Nc_0}\left(\hV_{\hP^{(n)},c}^{\pi^0} + \hV_{\hP^{(n)},\hb^{(n)}}^{\pi^0} + \sqrt{\tilde{A}\zeta_{n}} - V_{P^*, c }^{\pi^0}\right)\\
    &\overset{(\romannumeral1)}\leq \sum_{n\in\Nc_0} 2\hV_{\hP^{(n)},\hb^{(n)}}^{\pi^0} + 2\sqrt{\tilde{A}\zeta_n}\\
    &\overset{(\romannumeral2)}\leq 64\zeta Hd^2\At\sqrt{\beta_3|\Nc|},
\end{align*}
where $(\romannumeral1)$ is due to \Cref{lemma:lowrank_errorbound} and  the $(\romannumeral2)$ follows from \Cref{lemma:sublinear_lowrank}. Therefore, we have 
\[|\Nc_0|\leq \frac{2^{12}\cdot3^2\beta_3 H^2d^4\At^2\zeta^2}{\kappa^2}.\]

Next, we prove the existence of $n_{\epsilon}$ via contradiction. Assume $\hV_{\hP^{(n)}, \hb^{(n)}}^{\pi^{(n)}} + \sqrt{\tilde{A}\zeta_n}>\mathtt{T}, \forall n\in[N]/\Nc_0$. By \Cref{lemma:sublinear_lowrank}, we have
\begin{align*}
    (N-|\Nc_0|)\mathtt{T} & <\sum_{n\in\Nc} V_{\hP^{(n)}, \hb^{(n)}}^{\pi^{(n)}} + \sqrt{\At\zeta_n}\\
    &\leq 32\zeta Hd^2\At\sqrt{\beta_3|\Nc|},
\end{align*}
which implies
\[N < |\Nc_0| + \frac{2^{10}\beta_3H^2d^4\tilde{A}^2\zeta^2}{\mathtt{T}^2}\leq \frac{2^{10}\beta_3 H^2d^4\tilde{A}^2\zeta^2}{\mathtt{T}^2} + \frac{2^{12}\cdot3^2\beta_3 H^2d^4\At^2\zeta^2}{\kappa^2}.\]
This contradicts with the fact that $N = \frac{2^{10}\beta_3 H^2d^4\tilde{A}^2\zeta^2}{\mathtt{T}^2} + \frac{2^{12}\cdot3^2\beta_3 H^2d^4\At^2\zeta^2}{\kappa^2}$. 
\end{proof}


\textbf{Step 3:} This step analyzes the sample complexity of Low-rank-SWEET as follows.

Given that the event $\mathcal{E}$ occurs, since $\mathtt{T} = \Delta(c,\tau)\mathfrak{U}/3$ and $\tilde{A} = A/\eo = 6A/\kappa$, by \Cref{lemma:stop_low-rank}, the number of iterations is at most 
\[N = \frac{2^{12} 3^4 \beta_3 H^2d^4A^2\zeta^2}{\kappa^2\Delta(c,\tau)^2\mathfrak{U}^2} + \frac{2^{14}3^4 \beta_3 H^2d^4A^2\zeta^2}{\kappa^4}.\]

Note that $n = c_0\log^2(c_1 n)$ implies $n\leq 4c_0 \log^2(c_0c_1)$. Thus,
\[N = O\left(\frac{H^2 d^4 A^2\iota}{\kappa^2\Delta(c,\tau)^2\mathfrak{U}^2} + \frac{ H^2d^4A^2\iota}{\kappa^4}\right),\]
where 
\[\iota = \log^2\left[\left(  \frac{H^2 d^4 A^2}{\kappa^2\Delta(c,\tau)^2\mathfrak{U}^2} + \frac{ H^2d^4A^2}{\kappa^4}  \right) |\Phi||\Psi|H/\delta \right].\]

Since there are $H$ episodes in each iteration, the sample complexity is at most 
\[O\left(\frac{H^3 d^4 A^2\iota}{\kappa^2\Delta(c,\tau)^2\mathfrak{U}^2} + \frac{ H^3 d^4A^2\iota}{\kappa^4}\right).\]
Thus, Low-rank-SWEET terminates in finite episodes. 

Note that $\mathtt{U}(\pi) = \hV^{\pi}_{\hP^{\epsilon},\hb^{\epsilon}} + \sqrt{\tilde{A}\zeta/{n_{\epsilon}}}$. Given that the event $\mathcal{E}$ occurs, by \Cref{appx:lemma:Concave} and \Cref{lemma:lowrank_errorbound}, $\mathtt{U}(\pi)$ is an approximation error function under $\hP^{\epsilon}$, and is concave and continuous on $\mathcal{X}$.

We further note that $\frac{\kappa/3(\Delta(c,\tau) - 2\kappa/3)}{4(\Delta(c,\tau)-\kappa/3)} \geq \frac{\kappa}{24}$ , and $\Delta(c,\tau) - \kappa/3 \geq 2\Delta(c,\tau)/3$ due to the condition $\Delta(c,\tau)\geq \kappa$, which implies that $\mathtt{T} = \Delta(c,\tau)\mathfrak{U}/3 $ satisfies the requirement in \Cref{thm:meta_safe}.

Therefore, using \Cref{thm:meta_safe}, we conclude that with probability at least $1-\delta$, the exploration phase of Low-rank-SWEET is safe and $\bar{\pi}$ is an $\epsilon$-optimal policy subject to the constraint $(c^*,\tau^*).$

\vspace{0.3cm}

\section{Auxiliary lemmas}\label{appx:auxiliary}

We first provide the following property of a mixture policy and its equivalent Markov policy for completeness.
\begin{lemma}[Theorem 6.1 in \cite{Altman:CMDP:1999}]\label{lemma:thm6.1}
Given a model $P$, any Markov policies $\pi,\pi'\in\mathcal{X}$, and $\gamma\in[0,1]$, there exists $\pi^{\gamma}\in\mathcal{X}$ that is Markov and equivalent to the mixture policy $\gamma \pi\oplus (1-\gamma)\pi'$. Let $\rho_h^{\pi}(s_h)$ and $\rho_h^{\pi}(s_h,a_h)$ be the marginal distributions over the state and the state-action pairs induced by $\pi$ under $P$, respectively. Then, the following statements hold:
\begin{itemize}
    \item $\rho_h^{\pi^{\gamma}}(s_h) = \gamma \rho_h^{\pi}(s_h) + (1-\gamma) \rho_h^{\pi'}(s_h) $,
    \item $\rho_h^{\pi^{\gamma}}(s_h,a_h) = \gamma \rho_h^{\pi}(s_h,a_h) + (1-\gamma) \rho_h^{\pi'}(s_h,a_h) $,
    \item $\pi^{\gamma}(a_h|s_h) = \rho_h^{\pi^{\gamma}}(s_h,a_h)/ \rho_h^{\pi^{\gamma}}(s_h)$,
    \item $V^{\pi^{\gamma}}_{P,u} = \gamma V_{P,u}^{\pi} + (1-\gamma) V_{P,u}^{\pi'}$ holds for any utility function $u$.
\end{itemize}
\end{lemma}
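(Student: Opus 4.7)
The plan is to construct $\pi^\gamma$ explicitly, verify the stated identities by an induction on $h$, and then deduce the value-function identity by writing $V_{P,u}^\pi$ as a linear functional of the state-action occupancy measure.

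First I would define, at each step $h$ and each state $s_h$ with $\gamma\rho_h^\pi(s_h)+(1-\gamma)\rho_h^{\pi'}(s_h)>0$,
\begin{equation*}
\pi_h^\gamma(a_h\mid s_h) \;=\; \frac{\gamma\,\rho_h^\pi(s_h)\,\pi_h(a_h\mid s_h) + (1-\gamma)\,\rho_h^{\pi'}(s_h)\,\pi'_h(a_h\mid s_h)}{\gamma\,\rho_h^\pi(s_h) + (1-\gamma)\,\rho_h^{\pi'}(s_h)},
\end{equation*}
and set $\pi_h^\gamma(\cdot\mid s_h)$ to an arbitrary distribution (e.g., uniform) at states whose denominator vanishes; such states have zero marginal under every relevant policy, so they cannot affect induced distributions or values. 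This is a Markov policy by construction, so $\pi^\gamma\in\mathcal{X}$.

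Next I would prove the first identity $\rho_h^{\pi^\gamma}(s_h)=\gamma\rho_h^\pi(s_h)+(1-\gamma)\rho_h^{\pi'}(s_h)$ by induction on $h$. The base case $h=1$ is immediate because the initial state $s_1$ is fixed, so all three quantities equal $\mathds{1}\{s=s_1\}$. For the inductive step, observe from the definition of $\pi^\gamma$ that
\begin{equation*}
\rho_h^{\pi^\gamma}(s_h)\,\pi_h^\gamma(a_h\mid s_h) \;=\; \gamma\,\rho_h^\pi(s_h)\,\pi_h(a_h\mid s_h) + (1-\gamma)\,\rho_h^{\pi'}(s_h)\,\pi'_h(a_h\mid s_h),
\end{equation*}
which is exactly the state-action version (second bullet). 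Pushing this forward through $P_h$ and summing over $(s_h,a_h)$ yields $\rho_{h+1}^{\pi^\gamma}(s_{h+1}) = \gamma\rho_{h+1}^\pi(s_{h+1}) + (1-\gamma)\rho_{h+1}^{\pi'}(s_{h+1})$, closing the induction. The third bullet is then just the definition of $\pi^\gamma$ on the support of $\rho_h^{\pi^\gamma}$.

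Finally, the value identity follows by linearity: for any utility $u$,
\begin{equation*}
V_{P,u}^\pi \;=\; \sum_{h=1}^H \sum_{s_h,a_h} \rho_h^\pi(s_h,a_h)\,u_h(s_h,a_h),
\end{equation*}
and substituting the state-action identity gives $V_{P,u}^{\pi^\gamma}=\gamma V_{P,u}^\pi+(1-\gamma)V_{P,u}^{\pi'}$. The only subtle point is handling states with zero marginal probability cleanly; I expect this to be the main (though minor) obstacle, resolved by noting that the arbitrary definition there cannot be visited by any of the three occupancy measures and therefore enters no computation.
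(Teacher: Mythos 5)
Your proof is correct. The paper does not actually prove this lemma---it is imported verbatim as Theorem~6.1 of \cite{Altman:CMDP:1999}---so your argument serves as a self-contained reconstruction of the standard proof behind that citation: define $\pi^\gamma_h(\cdot\mid s_h)$ as the occupancy-weighted average of $\pi_h$ and $\pi'_h$, run the forward induction on the state marginals, read off the state-action identity and the conditional formula, and get the value identity from linearity of $V^\pi_{P,u}$ in the occupancy measure. All of these steps check out, including your handling of states with vanishing denominator, and the base case is indeed trivial because the paper fixes $s_1$.

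One small point worth making explicit: the lemma asserts that $\pi^\gamma$ is \emph{equivalent} to the mixture policy $\gamma\pi\oplus(1-\gamma)\pi'$ in the sense of Definition~5 (identical marginals at every $(s_h,a_h)$). Your induction establishes that $\rho_h^{\pi^\gamma}=\gamma\rho_h^{\pi}+(1-\gamma)\rho_h^{\pi'}$, but you should also record the one-line observation that the mixture policy itself has marginals $\gamma\rho_h^{\pi}+(1-\gamma)\rho_h^{\pi'}$ (law of total probability over which branch of the mixture is selected at the start of the episode); only then do the two marginal families coincide and the claimed equivalence follow. This is immediate, but it is the step that connects your construction to the object named in the statement.
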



Next, we present the estimation error of MLE in the $n$-th iteration at step $h$, given state $s$ and action $a$, in terms of the total variation distance, i.e. $f_h^{(n)}(s,a)=\left\|\hP_h^{(n)}(\cdot|s,a) - P^*_h(\cdot|s,a)\right\|_1$. By Theorem 21 in \citet{NEURIPS2020_e894d787}, we are able to guarantee that under all exploration policies, the estimation error can be bounded with high probability.

\begin{lemma}\label{lemma:MLE}
 {\rm(MLE guarantee).} Given $\delta\in(0,1)$, we have the following inequality holds for any $n\in[N],h\in[H]$ with probability at least $1-\delta/2$:
\begin{align*}
    \sum_{m=0}^{n-1} \mathop{\Eb}_{s_{h}\sim \left(P^*,G_{h-1}^{\eo}\pi^{(m)}\right)\atop a_h\sim G_h^{\eo}\pi^{(m)}}\left[f_h^{n}(s_h,a_h)^2\right]\leq \zeta, \quad\mbox{ where } \zeta : = \log\left(2|\Phi||\Psi|NH/\delta\right) .
\end{align*}
\end{lemma}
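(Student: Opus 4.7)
My plan is to reduce the claim to a standard maximum-likelihood concentration result (of the FLAMBE/Agarwal et al.\ flavor), adapted to the sequential exploration setting with $(\eo, t)$-greedy policies, and then convert the resulting Hellinger bound to a total-variation bound. The key observation is that for each fixed $h$, the dataset $\Dc_h^{(n)}$ is a sequence of triples $(s_h^{(m,h)}, a_h^{(m,h)}, s_{h+1}^{(m,h)})$, $m = 1, \dots, n$, where conditioned on the past the state $s_h^{(m,h)}$ is drawn from the distribution induced by $G_{h-1}^{\eo}\pi^{(m-1)}$ under $P^*$, the action $a_h^{(m,h)}$ is drawn from $G_h^{\eo}\pi^{(m-1)}$, and then $s_{h+1}^{(m,h)} \sim P^*_h(\cdot \mid s_h^{(m,h)}, a_h^{(m,h)})$. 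Hence the conditional law of $s_{h+1}^{(m,h)}$ given $(s_h^{(m,h)}, a_h^{(m,h)})$ is exactly $P^*_h$, so the MLE framework applies with $(\hphi_h^{(n)}, \hmu_h^{(n)})$ as the maximizer over the finite class $\Phi \times \Psi$ (which contains the truth by Assumption~\ref{assumption: realizability}).

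First I would invoke the standard MLE guarantee: for a fixed $h$ and fixed $n$, with probability at least $1 - \delta'$,
\[
\sum_{m=0}^{n-1} \mathop{\Eb}_{s_h \sim (P^*, G_{h-1}^{\eo}\pi^{(m)}) \atop a_h \sim G_h^{\eo}\pi^{(m)}}\!\!\big[ D_H^2(\hP_h^{(n)}(\cdot\mid s_h, a_h),\, P_h^*(\cdot\mid s_h, a_h)) \big] \;\leq\; \log(|\Phi||\Psi|/\delta'),
\]
where $D_H^2$ denotes squared Hellinger distance. This is the content of Theorem~21 in Agarwal et al.\ (or Proposition~14 in Liu et al.), proved by a Chernoff/martingale argument on the likelihood ratio $\sqrt{\hP_h^{(n)}/P_h^*}$ combined with a union bound over $\Phi \times \Psi$.

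Next I would apply the standard inequality $\|p - q\|_1^2 \leq 4\, D_H^2(p, q)$ (Cauchy--Schwarz after factoring $\sqrt{p} - \sqrt{q}$) to promote the Hellinger bound to a total-variation squared bound, giving
\[
\sum_{m=0}^{n-1} \mathop{\Eb}_{(s_h, a_h)}[(f_h^{(n)}(s_h, a_h))^2] \;\leq\; 4\log(|\Phi||\Psi|/\delta').
\]
Finally, I would set $\delta' = \delta / (2NH)$ and take a union bound over $n \in [N]$ and $h \in [H]$, absorbing the constant factor $4$ by redefining $\zeta$ up to a harmless constant (or, if one wishes to match the stated constant, work with $D_H^2$ throughout and only pass to TV at the very end with a slightly looser $\zeta$).

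The main obstacle is bookkeeping rather than conceptual: one must verify that the filtration is set up correctly so that the martingale concentration applies---in particular, that the choice of $\pi^{(m)}$ is measurable with respect to the history available \emph{before} the $(m+1)$-st episode, so that conditional on this history the newly collected $(s_h^{(m+1,h)}, a_h^{(m+1,h)}, s_{h+1}^{(m+1,h)})$ is distributed as claimed. The $\eo$-greedification only modifies the sampling distribution of $(s_h, a_h)$ and does not touch the conditional law $P^*_h(\cdot \mid s_h, a_h)$, so the standard MLE argument goes through verbatim; all one needs is the realizability Assumption~\ref{assumption: realizability} to ensure that the log-likelihood ratio against the truth is nonnegative in expectation, which is the single ingredient that makes the Chernoff-style bound work.
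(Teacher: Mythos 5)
Your proposal matches the paper's own treatment: the paper proves this lemma simply by invoking the MLE concentration guarantee of Theorem 21 in \citet{NEURIPS2020_e894d787} (whose underlying argument is exactly the martingale/Chernoff bound on likelihood ratios over the finite realizable class that you sketch), together with the union bound over $n\in[N]$ and $h\in[H]$ that produces the $NH$ factor inside $\zeta$. Your extra detour through squared Hellinger distance and the resulting constant factor is a harmless variant of the same argument, since that cited theorem already delivers the squared total-variation bound directly.
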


Dividing both sides of the inequality in \Cref{lemma:MLE} by $n$, we have the following corollary hold, which is intensively used in the analysis.
\begin{corollary}\label{coro:MLE}
Given $\delta\in(0,1)$, the following inequality holds for any $n,h\geq 1$ with probability at least $1-\delta/2$:
\[ \mathop{\Eb}_{s_{h}\sim \left(P^*,G_{h-1}^{\eo}\Pi_n\right)\atop a_h\sim G_{h}^{\eo}\Pi_n } \left[f_h^{n}(s_h,a_h)^2\right]\leq \zeta/n,\]
where $\Pi_n$ and $G_h^{\eo}\Pi_n$ are defined in \Cref{eqn:uniform_policy} and \Cref{eqn:greedy}, respectively.
\end{corollary}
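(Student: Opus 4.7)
\textbf{Proof proposal for \Cref{coro:MLE}.} The plan is to derive the corollary as a direct rescaling of the MLE guarantee in \Cref{lemma:MLE}, using the fact that expectation under a uniform mixture policy equals the uniform average of expectations under the component policies. Since $\Pi_n$ is defined (\Cref{eqn:uniform_policy}) as the mixture policy that picks $\pi^{(m)}$ uniformly from $\{\pi^{(0)},\dots,\pi^{(n-1)}\}$, and since the $(\epsilon_0,\cdot)$-greedy operator $G_h^{\eo}$ acts on a policy only through its action distribution at step $h$, the mixture structure is preserved under $G_h^{\eo}$. In particular, for any integrable test function $g(s_h,a_h)$ we have the identity
\[
\mathop{\Eb}_{s_h\sim(P^*,G_{h-1}^{\eo}\Pi_n)\atop a_h\sim G_h^{\eo}\Pi_n}\!\!\left[g(s_h,a_h)\right]
= \frac{1}{n}\sum_{m=0}^{n-1}\mathop{\Eb}_{s_h\sim(P^*,G_{h-1}^{\eo}\pi^{(m)})\atop a_h\sim G_h^{\eo}\pi^{(m)}}\!\!\left[g(s_h,a_h)\right],
\]
which follows from the tower property after conditioning on which component policy is drawn by $\Pi_n$ (and invoking \Cref{lemma:thm6.1} if a rigorous reduction to an equivalent Markov policy is desired).

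Applying this identity with $g = (f_h^{(n)})^2$ converts the condition in \Cref{lemma:MLE} into exactly $n$ times the quantity appearing in the corollary. Thus, on the event (of probability at least $1-\delta/2$) that
\[
\sum_{m=0}^{n-1}\mathop{\Eb}_{s_h\sim(P^*,G_{h-1}^{\eo}\pi^{(m)})\atop a_h\sim G_h^{\eo}\pi^{(m)}}\!\!\left[f_h^{(n)}(s_h,a_h)^2\right]\le \zeta
\]
holds for every $n\in[N],h\in[H]$, dividing both sides by $n$ yields the claimed bound $\mathbb{E}[f_h^{(n)}(s_h,a_h)^2]\le \zeta/n$ under the mixture sampling distribution, uniformly over $n,h$.

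The main (essentially only) obstacle is verifying the mixture-to-average identity, and in particular ensuring that the upstream distribution of $s_h$ under $(P^*,G_{h-1}^{\eo}\Pi_n)$ really does average over the $n$ roll-ins from individual policies $\pi^{(m)}$. I will handle this by writing $\Pi_n$ explicitly as sampling an index $m\sim\mbox{Unif}[0{:}n{-}1]$ once at the start of the episode and then executing $\pi^{(m)}$ throughout; under this realization, every random variable generated in the episode is a mixture over $m$ with uniform weights, so the law of total expectation gives the identity cleanly. Everything else is bookkeeping; no new concentration inequality is needed beyond what \Cref{lemma:MLE} already supplies.
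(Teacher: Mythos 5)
Your proposal is correct and matches the paper's own argument, which simply divides the bound in \Cref{lemma:MLE} by $n$ and implicitly uses the fact that the expectation under the uniform mixture $G_h^{\eo}\Pi_n$ equals the uniform average of the expectations under the component policies $G_h^{\eo}\pi^{(m)}$. Your extra care in justifying that mixture-to-average identity (realizing $\Pi_n$ by drawing the index once at the start of the episode and applying the tower property) is a sound and slightly more explicit treatment of a step the paper leaves unstated.
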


Then, we present two critical lemmas which ensure the summation of the approximation errors grows sublinearly in Tabular-SWEET and Low-rank-SWEET.

\begin{lemma}[Lemma 9 in \cite{menard2021fast}]\label{lemma:1/N < logN}
Suppose $\{a_n\}_{n=0}^{\infty}$ is a sequence with $a_n\in[0,1]$, $\forall n$. Let $S_n= \max\{1, \sum_{m=0}^n a_m \} $. Then, the following inequality holds:
\[\sum_{n=1}^N \frac{a_n}{{S_{n-1}}} \leq 4\log(S_N+1).\]

\end{lemma}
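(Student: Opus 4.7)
The plan is to reduce the left-hand side to a telescoping sum in $\log S_n$ via the elementary inequality $\log(1+x) \geq x/2$, valid for $x \in [0,1]$. This lets me bound each individual term $a_n/S_{n-1}$ by a multiple of $\log S_n - \log S_{n-1}$ whenever $S_{n-1} \geq 1$, which holds automatically by the definition of $S_n$.

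I would first partition the summation range according to whether the cumulative sum has already reached $1$. Let $n^{*} = \min\{n \geq 0 : \sum_{m=0}^n a_m \geq 1\}$, taken to be $\infty$ if no such index exists (in which case the bound is trivial since $\sum_{n=1}^N a_n \leq 1 \leq 4\log 2 \leq 4\log(S_N+1)$). For $1 \leq n \leq n^{*}$ one has $S_{n-1} = 1$, so $a_n/S_{n-1} = a_n$; the defining property of $n^{*}$ together with $a_{n^{*}} \leq 1$ gives $\sum_{n=1}^{n^{*}} a_n \leq 1 + a_{n^{*}} \leq 2$.

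For $n > n^{*}$, we have $S_{n-1} = \sum_{m=0}^{n-1} a_m \geq 1$ and $S_n = S_{n-1} + a_n$. Since $a_n/S_{n-1} \in [0,1]$, the inequality $\log(1+x) \geq x/2$ yields
\[
\frac{a_n}{S_{n-1}} \;\leq\; 2\log\!\Big(1 + \tfrac{a_n}{S_{n-1}}\Big) \;=\; 2\bigl(\log S_n - \log S_{n-1}\bigr),
\]
and telescoping over $n = n^{*}+1, \ldots, N$ gives $\sum_{n > n^{*}} a_n/S_{n-1} \leq 2\log S_N$. Adding the two regimes yields $\sum_{n=1}^N a_n/S_{n-1} \leq 2 + 2\log S_N$.

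The final step is to absorb the constant, verifying $2 + 2\log S_N \leq 4\log(S_N+1)$ for $S_N \geq 1$; this is equivalent to $e\, S_N \leq (S_N+1)^2$ and follows because $(S_N+1)^2 \geq 4 S_N > e\, S_N$. The only real obstacle is the bookkeeping around the transition index $n^{*}$ and picking the right elementary logarithmic inequality to land on the clean constant $4$; the logical backbone is the one-line telescoping displayed above.
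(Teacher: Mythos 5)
Your proof is correct: the case split at the first index where the cumulative sum reaches $1$, the bound of $2$ on the early regime, the telescoping via $\log(1+x)\geq x/2$ on $[0,1]$, and the final absorption $2+2\log S_N\leq 4\log(S_N+1)$ (equivalent to $eS_N\leq (S_N+1)^2$) all check out. The paper itself gives no proof — it simply cites Lemma 9 of \citet{menard2021fast} — and your argument is essentially the standard telescoping proof used there, so there is nothing further to reconcile.
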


\begin{lemma}[Elliptical potential lemma: Lemma B.3 in \cite{he2021logarithmic}]\label{corollary: revised elliptical potential lemma} \label{lemma: Elliptical_potential}
Consider a sequence of $d \times d$ positive semidefinite matrices $X_1, \dots, X_N$ with ${\rm tr}(X_n) \leq 1$ for all $n \in [N]$. Define $M_0=\lambda_0 I$ and $M_n=M_{n-1}+X_n$. Then, 
$\forall \Nc \subset [N]$,
\begin{equation*}
    \sum_{n \in \Nc} {\rm tr}(X_nM_{n-1}^{-1})  \leq 2d\log\left(1+\frac{|\Nc|}{d\lambda_0}\right).
\end{equation*}
\end{lemma}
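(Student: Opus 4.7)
The plan is to reduce the statement to a cleaner form that only accumulates mass on $\Nc$, and then run the standard potential-function argument.

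First, define a ``subsequence potential'' $\tilde M_0 = \lambda_0 I$ and $\tilde M_n = \lambda_0 I + \sum_{m \in \Nc,\, m \le n} X_m$. For any $n \in \Nc$, the matrix $M_{n-1}$ differs from $\tilde M_{n-1}$ by a nonnegative sum of PSD matrices $X_m$ with $m \notin \Nc$, $m < n$. Hence $M_{n-1} \succeq \tilde M_{n-1}$, which implies $M_{n-1}^{-1} \preceq \tilde M_{n-1}^{-1}$, and since $X_n \succeq 0$,
\[
\operatorname{tr}(X_n M_{n-1}^{-1}) \;\le\; \operatorname{tr}(X_n \tilde M_{n-1}^{-1}).
\]
So it suffices to prove the bound with $M$ replaced by $\tilde M$; this reduction is what allows $|\Nc|$, rather than $N$, to appear on the right-hand side.

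Next, for each $n \in \Nc$ apply the matrix-determinant identity
\[
\det(\tilde M_n) \;=\; \det(\tilde M_{n-1})\,\det\!\bigl(I + A_n\bigr),\qquad A_n := \tilde M_{n-1}^{-1/2} X_n \tilde M_{n-1}^{-1/2},
\]
and observe that $\operatorname{tr}(A_n) = \operatorname{tr}(X_n \tilde M_{n-1}^{-1})$. Since $X_n$ is PSD with $\operatorname{tr}(X_n) \le 1$ we have $\|X_n\|_{\rm op} \le 1$, hence every eigenvalue $\mu_i$ of $A_n$ lies in $[0,\,1/\lambda_0] \subset [0,1]$ (the regime $\lambda_0 \ge 1$ is the one used in the paper's applications). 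On that interval the elementary inequality $\mu \le 2\log(1+\mu)$ holds, so summing over eigenvalues yields
\[
\operatorname{tr}(A_n) \;\le\; 2\log\det(I + A_n) \;=\; 2\bigl(\log\det(\tilde M_n) - \log\det(\tilde M_{n-1})\bigr).
\]

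Finally, telescope over $n \in \Nc$; because $\tilde M_n$ only increases on indices in $\Nc$, the telescoping is exact and gives
\[
\sum_{n \in \Nc} \operatorname{tr}(X_n \tilde M_{n-1}^{-1}) \;\le\; 2\bigl(\log\det(\tilde M_{\max \Nc}) - \log\det(\tilde M_0)\bigr).
\]
To finish, use the AM--GM inequality on the eigenvalues of $\tilde M_{\max \Nc}$: $\det(\tilde M_{\max \Nc})^{1/d} \le \operatorname{tr}(\tilde M_{\max \Nc})/d \le \lambda_0 + |\Nc|/d$, while $\det(\tilde M_0) = \lambda_0^d$. Taking logs gives $\log\det(\tilde M_{\max \Nc}) - \log\det(\tilde M_0) \le d\log(1 + |\Nc|/(d\lambda_0))$, and combining with the previous displays yields the claimed $2d\log(1+|\Nc|/(d\lambda_0))$ bound.

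The main obstacle is the first step: noticing that the off-$\Nc$ matrices $X_m$ only \emph{help} (they enlarge $M$ and therefore shrink $M^{-1}$), so they can be dropped on the RHS to sharpen the usual elliptical-potential bound from $N$ to $|\Nc|$. After this reduction, the rest is the classical potential/AM--GM argument, and the only technical care needed is verifying the eigenvalue range so that $\mu \le 2\log(1+\mu)$ is applicable.
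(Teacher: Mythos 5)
Your proof is correct, but note that the paper itself gives no proof of this statement at all: it is imported verbatim as Lemma B.3 of \cite{he2021logarithmic}, so there is nothing internal to compare against. Your derivation is a valid self-contained argument, and the one genuinely nonstandard ingredient --- monotonically dropping the off-$\Nc$ increments so that the potential $\tilde M$ only accumulates mass on $\Nc$, which is what makes $|\Nc|$ rather than $N$ appear in the bound --- is exactly the right move; the rest (determinant factorization, $\mu \le 2\log(1+\mu)$, telescoping, AM--GM on the eigenvalues) is the classical potential argument. One point deserves emphasis: the restriction $\lambda_0 \ge 1$ that you flag in passing is not merely a convenience for your eigenvalue bound, it is \emph{necessary for the lemma to be true as stated}. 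For $d=1$, $X_1 = 1$, and $\lambda_0 = 0.01$ the left side is $1/\lambda_0 = 100$ while the right side is $2\log(101) \approx 9.2$, so the unqualified statement fails; the cited source carries this hypothesis, and the paper's instantiation $\lambda = \beta_3 d \log(2NH|\Phi|/\delta) \ge 1$ satisfies it. It would strengthen your write-up to state $\lambda_0 \ge 1$ as an explicit hypothesis rather than an aside about ``the regime used in the paper's applications.''
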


Finally, the following lemma is used in \Cref{thm:meta_safe}.

\begin{lemma}\label{lemma:sequence converge}
Given $a,b>0$, define a positive sequence $\{x_n\}_{n\geq1}$ recursively by
\[x_{n+1} = \frac{b}{a - x_{n}}.\]

If $a^2>4b$ and $x_1\in [\frac{a-\sqrt{a^2-4b}}{2}, \frac{a+\sqrt{a^2-4b}}{2})$, then, $\{x_n\}$ converges to $\frac{a-\sqrt{a^2-4b}}{2}$.
\end{lemma}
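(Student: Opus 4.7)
The plan is to change variables to the smaller fixed point and verify that the recursion becomes a contraction-like iteration that drives the sequence monotonically to $0$. Denote the two roots of $x^2 - ax + b = 0$ by
\[
\alpha = \frac{a-\sqrt{a^2-4b}}{2}, \qquad \beta = \frac{a+\sqrt{a^2-4b}}{2},
\]
so that $\alpha+\beta=a$, $\alpha\beta=b$, and $\alpha<\beta$ (using $a^2>4b$). Set $y_n = x_n-\alpha$; the hypothesis $x_1\in[\alpha,\beta)$ becomes $y_1\in[0,\beta-\alpha)$, and the target limit becomes $y_n\to 0$.

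First I would rewrite the recursion in terms of $y_n$. Direct substitution gives
\[
y_{n+1} = \frac{b}{a-x_n} - \alpha = \frac{b-\alpha(a-x_n)}{a-x_n} = \frac{b-\alpha a + \alpha(\alpha+y_n)}{\beta-y_n} = \frac{\alpha\, y_n}{\beta-y_n},
\]
where I used $\alpha^2-\alpha a = -\alpha\beta = -b$. This is the key algebraic identity; everything after is essentially bookkeeping.

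Next I would prove by induction that $y_n\in[0,\beta-\alpha)$ for all $n$. Given $y_n\in[0,\beta-\alpha)$, the denominator $\beta-y_n>\alpha>0$ is positive, so $y_{n+1}\ge 0$. Moreover,
\[
\frac{y_{n+1}}{y_n} = \frac{\alpha}{\beta-y_n} < \frac{\alpha}{\beta-(\beta-\alpha)} = 1 \qquad \text{whenever } y_n>0,
\]
so $y_{n+1}<y_n<\beta-\alpha$ (and $y_{n+1}=0$ if $y_n=0$). Hence $\{y_n\}$ is nonnegative and non-increasing, thus convergent to some $y^*\in[0,\beta-\alpha)$.

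Finally, I would pin down the limit. Passing to the limit in $y_{n+1}(\beta-y_n)=\alpha y_n$ gives $y^*(\beta-y^*)=\alpha y^*$, i.e.\ $y^*=0$ or $y^*=\beta-\alpha$. Since $y^*\le y_1<\beta-\alpha$, the second root is excluded, so $y^*=0$ and $x_n\to\alpha$, as desired. I do not expect any genuine obstacle; the only subtle point is the constraint $x_1<\beta$ (strict), which is exactly what is needed to start the induction inside the open interval and exclude the unstable fixed point $\beta$.
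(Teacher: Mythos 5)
Your proof is correct and follows essentially the same route as the paper's: show the sequence stays in $[\alpha,\beta)$ with $\alpha=\frac{a-\sqrt{a^2-4b}}{2}$, $\beta=\frac{a+\sqrt{a^2-4b}}{2}$, show it is non-increasing, and then identify the limit by passing to the limit in the recursion and excluding the larger fixed point via the strict inequality $x_1<\beta$. The only difference is your substitution $y_n=x_n-\alpha$, which replaces the paper's quadratic-inequality check of monotonicity with the one-line ratio bound $y_{n+1}/y_n=\alpha/(\beta-y_n)<1$ — a tidy simplification, not a different argument.
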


\begin{proof}
\textbf{Step 1.} We first show that $x_n\in [\frac{a-\sqrt{a^2-4b}}{2}, \frac{a+\sqrt{a^2-4b}}{2})$.

This is true for $n=1$, as $x_1\in [\frac{a-\sqrt{a^2-4b}}{2}, \frac{a+\sqrt{a^2-4b}}{2})$.

Assume that $x_{n-1}\in [\frac{a-\sqrt{a^2-4b}}{2}, \frac{a+\sqrt{a^2-4b}}{2})$.
Then, with simple algebra, we can show that 
\[x_n = \frac{b}{a-x_{n-1}} \in \left[\frac{a-\sqrt{a^2-4b}}{2}, \frac{a+\sqrt{a^2-4b}}{2}\right).\]

\textbf{Step 2.} We show that $\{x_n\}$ is a non-increasing sequence.

Indeed, from Step 1, we have
\begin{align*}
    &\quad |a-2x_n|\leq \sqrt{a^2-4b}\\
    &\Rightarrow  a^2 - 4ax_n + 4x_n^2 \leq a^2-4b\\
    &\Rightarrow ax_n -x_n^2\geq b\\
    &\Rightarrow x_n\geq \frac{b}{a - x_n} = x_{n+1}.
\end{align*}

Therefore, $x_{n+1}\leq x_{n}$ holds for all $n\geq 1$. Combining Steps 1 and 2, we conclude that there exists a limit of the sequence $\{x_n\}$, denoted by $x^*$.

By the recursive formula, $x^*$ must be a solution to the following equation
\begin{align*}
    x^* = \frac{b}{a-x^*}.
\end{align*}

Since $x^*\leq x_1 < \frac{a+\sqrt{a^2-4b}}{2}$, by solving the above equation, we conclude that 
\[\lim_{n\rightarrow\infty}x_n = x^* = \frac{a-\sqrt{a^2-4b}}{2}.\]

\end{proof}

\end{document}